\title[Towards Fundamental Limits for Active Multi-distribution Learning]{ Towards Fundamental Limits for Active Multi-distribution Learning}
\newif\iffinal
    \newcommand{\edit}[2]{}
    \newcommand{\authnote}[1]{}
    \newcommand{\chicheng}[1]{}
    \newcommand{\joey}[1]{}
\newcommand{\authnote}[2]{{#1: #2}}
\newcommand{\chicheng}[1]{{\color{red}\authnote{Chicheng}{#1}}}
\newcommand{\joey}[1]{{\color{red}\authnote{Joey}{#1}}}
\newcommand{\edit}[2]{{\color{blue}{\dashuline{#1}}}{\color{red}{#2}}}
\newcommand{\blue}[1]{{\color{blue}{#1}}}
\newcommand\inner[2]{\langle #1, #2 \rangle}
\newcommand\DIS{\mathrm{DIS}}
\let\Pr\relax
\DeclareMathOperator{\Pr}{Pr} 
\newcommand{\remove}[1]{}%
\newcommand{\Var}{\mathrm{Var}}
\newcommand{\brc}[1]{\left\{ {#1} \right\}}
\newcommand{\cardin}[1]{\left\lvert {#1} \right\rvert}%
\newcommand{\norm}[1]{\| {#1} \|}%
\newcommand{\Acal}{\mathcal{A}}
\newcommand{\Bcal}{\mathcal{B}}
\newcommand{\Ccal}{\mathcal{C}}
\newcommand{\Dcal}{\mathcal{D}}
\newcommand{\Fcal}{\mathcal{F}}
\newcommand{\Hcal}{\mathcal{H}}
\newcommand{\Ical}{\mathcal{I}}
\newcommand{\Ocal}{\mathcal{O}}
\newcommand{\Qcal}{\mathcal{Q}}
\newcommand{\Scal}{{\mathcal{S}}}
\newcommand{\Vcal}{\mathcal{V}}
\newcommand{\Wcal}{\mathcal{W}}
\newcommand{\Xcal}{\mathcal{X}}
\newcommand{\Ycal}{\mathcal{Y}}
\newcommand{\rbr}[1]{\left(#1\right)}
\newcommand{\sbr}[1]{\left[#1\right]}
\newcommand{\cbr}[1]{\left\{#1\right\}}
\newcommand{\bra}[1]{\left(#1\right)}
\newcommand{\abs}[1]{\left|#1\right|}
\newcommand{\E}{\mathbb{E}}
\newcommand{\de}{\mathrm{d}}
\newcommand{\PP}{\mathbb{P}}
\newcommand{\squr}[1]{\left[#1\right]}
\DeclareMathOperator{\polylog}{polylog}
\DeclareMathOperator{\final}{final}
\DeclareMathOperator{\dist}{dist}
\DeclareMathOperator{\KL}{KL}
\DeclareMathOperator{\Unif}{Unif}
\DeclareMathOperator{\AGE}{AGR}
\DeclareMathOperator{\AGR}{AGR}
\newcommand{\EX}{\textsc{EX}}
\newcommand{\PMDL}{\textsc{Passive-MDL}\xspace}
\newcommand{\PRMDL}{\textsc{Passive-RPU-MDL}\xspace}
\newcommand{\PRSDL}{\textsc{Robust-RPU-Learn}\xspace}
\newcommand{\eps}{\varepsilon}
\newcommand{\DKL}{D_{\KL}}
\DeclareMathOperator{\argmin}{arg\,min}
\DeclareMathOperator{\argmax}{arg\,max}
\DeclareMathOperator{\VC}{VC-dim}
\DeclareMathOperator{\sign}{sign}
\newcommand{\s}{\mathfrak{s}}
\newcommand{\N}{\mathbb{N}}
\newcommand{\B}{\mathbf{B}}
\renewcommand{\emph}[1]{\textit{#1}}
\newtheorem{assumption}{Assumption}
\newtheorem{claim}{Claim}
\begin{document}

\maketitle






\begin{abstract}%
Multi-distribution learning extends agnostic Probably Approximately Correct (PAC) learning to the setting in which a family of $k$ distributions, $\cbr{D_i}_{i\in[k]}$, is considered and a classifier's performance is measured by its error under the worst distribution. This problem has attracted a lot of recent interests due to its  applications in collaborative learning, fairness, and robustness. Despite a rather complete picture of sample complexity of passive multi-distribution learning, research on active multi-distribution learning remains scarce, 
with algorithms whose optimality remaining unknown.

In this paper, we develop new algorithms for active multi-distribution learning and establish improved label complexity upper and lower bounds, in distribution-dependent and distribution-free settings. Specifically, in the near-realizable setting we prove an upper bound of 
$\widetilde{O}\Bigl(\theta_{\max}(d+k)\ln\frac{1}{\varepsilon}\Bigr)$
and $\widetilde{O}\Bigl(\theta_{\max}(d+k)\Bigl(\ln\frac{1}{\varepsilon}+\frac{\nu^2}{\varepsilon^2}\Bigr)+\frac{k\nu}{\varepsilon^2}\Bigr)$ in the realizable and agnostic settings respectively,
where \(\theta_{\max}\) is the maximum disagreement coefficient among the \(k\) distributions, \(d\) is the VC dimension of the hypothesis class, \(\nu\) is the multi-distribution error of the best hypothesis, and \(\varepsilon\) is the target excess error. Moreover, we show that the bound in the realizable setting is information-theoretically optimal and that the \(k\nu/\varepsilon^2\) term in the agnostic setting is fundamental for proper learners. We also establish instance-dependent sample complexity bound for passive multidistribution learning that smoothly interpolates between realizable and agnostic regimes~\citep{blum2017collaborative,zhang2024optimal}, which may be of independent interest. 
\end{abstract}







\begin{keywords}%
  Active Learning, Multi-distribution Learning, Statistical Learning Theory, Sample Complexity%
\end{keywords}

\section{Introduction}
\label{sec:intro}

Multi-Distribution Learning (MDL)~\citep{blum2017collaborative,haghtalab2022demand} is an emerging machine learning paradigm that has gained popularity in recent years. It naturally generalizes the classical  PAC~\citep{valiant1984theory,kearns1992toward} learning framework. In traditional PAC learning, the objective is to approximately identify the optimal hypothesis $h^*$ from a hypothesis class $\Hcal$ within error tolerance $\varepsilon$, with probability at least $1-\delta$, under a single unknown distribution $D$. MDL extends this framework by considering $k$ unknown distributions $\{D_i\}_{i \in [k]}$ and evaluating the performance of a hypothesis $h$ based on its worst-case error across these $k$ distributions, $\max_{i \in [k]} L(h, D_i)$. 
The goal is to output a classifier $\hat{h}$, such that $\max_{i \in [k]} L(\hat{h}, D_i) \leq \nu + \varepsilon$, where $\nu = \min_{h \in \Hcal} \max_{i \in [k]} L(h, D_i)$ is the optimal worst-case error in class $\Hcal$.
This setting has found diverse applications, including collaborative and federated learning~\citep{blum2017collaborative,mohri2019agnostic}, fairness~\citep{rothblum2021multi,du2021fairness}, and robustness~\citep{wang2023distributionally,deng2020distributionally}, highlighting its significance in addressing complex learning scenarios.


In certain real-world applications, such as cancer detection~\citep{gal2017deep}, unlabeled data is significantly more abundant and less costly to obtain than labeled data. Consequently, a more practical objective in these scenarios is to minimize the number of labels required to achieve the PAC multi-distribution learning goal, a concept known as \emph{label complexity}. This learning paradigm, termed active learning, has been extensively studied over the past three decades~\citep{cohn1994improving,dasgupta2005coarse,hanneke2014theory}. An active learner can access an unlimited number of unlabeled data points and selectively query labels for certain instances, whereas a passive learner relies solely on randomly sampled feature-label pairs from the underlying distribution. \citet{cohn1994improving,freund1997selective,dasgupta2004analysis} demonstrated that, in the realizable setting, active learning algorithms can achieve exponentially lower label complexity than passive learners when learning geometric concepts such as 1-dimensional threshold functions and $d$-dimensional linear separators. 
Subsequently, many follow-up works \citep[e.g.][]{balcan2006agnostic,hanneke2007bound,dasgupta2007general} showed that such exponential improvements in label complexity are also attainable in the agnostic setting. Notably, recent work of \citet{rittler2024agnostic} studied active learning in the MDL setting and provided algorithms and upper bounds on label complexity, demonstrating that an improvement over its passive counterpart is possible (see below for more details).

\begin{table}[H]
\centering
\begin{tabular}{|c|c|c|}
\hline
& Passive & Active \\
\hline
PAC ($k=1$) & $\widetilde{O}\bra{\frac{d (\nu + \eps)}{\varepsilon^2}}$ & $\widetilde{O}\bra{d\theta\bra{\ln \frac{1}{\varepsilon}+\frac{\nu^2}{\varepsilon^2}}}$ \\
\hline
MDL & $\widetilde{O}\bra{\frac{k+d}{\varepsilon^2}}$ & $\widetilde{O}\bra{\frac{\nu^2}{\varepsilon^2}kd\theta_{\max}^2+\frac{k}{\varepsilon^2}}$\\
\hline
\end{tabular}
\caption{Label complexity upper bounds for different settings prior to our work}
\label{table:upperboundcomparison}
\end{table}


In the classical PAC ($k=1$) setting, it is well known that the sample complexity upper bound in the agnostic passive setting is $\widetilde{O}\bra{\frac{d (\nu + \varepsilon)}{\varepsilon^2}}$ (where $\widetilde{O}$ hides logarithmic dependencies on problem parameters)~\citep{vapnik1982estimation}, where $d$ is the Vapnik-Chervonenkis (VC) dimension of $\Hcal$. A straightforward approach for the MDL setting is to sample $S_i$, a set of  $\widetilde{O}\bra{\frac{d (\nu + \varepsilon)}{\varepsilon^2}}$ iid examples from each $D_i$ and find $\hat{h}$ that minimizes the worst-case empirical error $\hat{h} = \argmin_{h \in \Hcal} \max_{i} L(h, S_i)$. This approach yields a sample complexity upper bound of $\widetilde{O}\bra{\frac{kd (\nu + \varepsilon)}{\varepsilon^2}}$. Remarkably, a series of works have developed more sample-efficient algorithms and analyses, improving the sample requirement to $\widetilde{O}\bra{\frac{k+d}{\varepsilon}}$ or  $\widetilde{O}\bra{\frac{k+d}{\varepsilon^2}}$ in realizable and agnostic settings, respectively~\citep{blum2017collaborative,nguyen2018improved,chen2018tight,zhang2024optimal,peng2024sample}. This shift from a multiplicative to an additive dependence on $k$ and $d$ is a significant improvement, requiring substantial technical innovation. On the other hand, one of the state-of-the-art label complexity upper bounds for agnostic single-distribution active learning is $\widetilde{O}\bra{d\theta\bra{\ln\frac{1}{\varepsilon}+\frac{\nu^2}{\varepsilon^2}}}$~\citep{dasgupta2007general,hanneke2007bound,ailon2012active}, where $\theta$ is a distribution-dependent parameter known as the \textit{disagreement coefficient}~\citep{hanneke2007bound}. 
The disagreement coefficient can be shown to be small under many favorable assumptions of the data distributions, leading to improved label efficiency of  active learning than passive learning; see e.g.~\cite{hanneke2014theory,friedman2009active,wang2011smoothness} for example distributions for which the disagreement coefficients are small.


For active MDL problems, \citet{rittler2024agnostic} established label complexity upper bounds of
$\widetilde{O}\bra{ kd\theta_{\max} \ln\frac1\varepsilon}$
and 
$\widetilde{O}\bra{\frac{\nu^2}{\varepsilon^2}kd\theta_{\max}^2+\frac{k}{\varepsilon^2}}$ in near-realizable and nonrealizable settings respectively,
where $\theta_{\max}$ is the maximum disagreement coefficient among all $k$ distributions. While this result provides a novel upper bound for active MDL, the dependence on $kd$ and the quadratic factor $\theta_{\max}^2$ may be undesirable. It is worth noticing that in some cases where the disagreement coefficient is large, the label complexity of the active MDL algorithm could be worse than its passive counterpart. In addition, distinct from single-distribution active learning label complexity results, an extra  
$O\rbr{ \frac{k}{\varepsilon^2} }$ additive factor appears in the agnostic label complexity bound, and it is unclear if is information-theoretically necessary. A comparison of the best known label complexity upper bounds across different settings is presented in Table~\ref{table:upperboundcomparison}. Given these, we ask the following questions:
\begin{center}
    \textit{In active MDL, is it possible to design algorithms with label complexity bounds that are never worse than passive learners? Is it possible to improve the multiplicative $kd$ factor in the label complexity bounds to an additive $k+d$?
    }
\end{center}

\subsection{Our Contributions}
In this paper, we establish novel and tighter label complexity bounds for active MDL, providing an affirmative answer to those open questions. Our main contributions are as follows:
\begin{enumerate}
\item For the large $\varepsilon$ regime ($\varepsilon\ge100\nu$), we develop an algorithm with a distribution-dependent sample complexity bound of $\widetilde{O}\bra{\theta_{\max}\bra{d+k}\ln\frac{1}{\varepsilon}}$, matching the bound shown in Table~\ref{table:upperboundcomparison} for the PAC setting (Section~\ref{sec:large-eps}). We prove that this bound is information-theoretically optimal by establishing a matching lower bound (Section~\ref{sec:lower-bound}).
\item For the small $\varepsilon$ regime ($\varepsilon<100\nu$), we propose a two-stage algorithm that achieves a distribution-dependent label complexity of $\widetilde{O}\bra{\theta_{\max}(d+k) \bra{\ln \frac{1}{\varepsilon} + \frac{\nu^2}{\varepsilon^2}} + \frac{k \nu}{\varepsilon^2}}$ (Section~\ref{sec:small-eps}), accompanied by a lower bound of $\Omega(k \frac{\nu}{\eps^2})$ for proper learners (Section~\ref{sec:lower-bound}).
This lower bound reveals an interesting ``phase transition'' behavior in label complexity unique to active multi-distribution learning, since it does not appear in the large $\eps$ regime.
As part of our analysis, we strengthen the existing passive MDL sample complexity bound to $\widetilde{O}\bra{\frac{(k+d) (\nu + \eps)}{\varepsilon^2}}$, which 
smoothly interpolates between realizable and agnostic regimes~\citep{blum2017collaborative,zhang2024optimal}
—a result that may be of independent interest beyond active learning.


\item In the large $\varepsilon$ regime ($\varepsilon\ge 100 \nu$), we establish a distribution-free upper bound of $\widetilde{O}\bra{\mathfrak{s}(d+k) \ln \frac1\varepsilon}$, where $\mathfrak{s}$ denotes the star number of the hypothesis class $\Hcal$~\citep{hanneke2015minimax}. We further show that when $\varepsilon\ge 100 \bra{d+k}\nu$, this bound tightens to $\widetilde{O}\bra{\mathfrak{s}\ln \frac{1}{\varepsilon}}$—a novel result even in the context of single-distribution PAC active learning (Section~\ref{sec:distn-free}).
\end{enumerate}

Table~\ref{table:improvedupperbound} summarizes our new label complexity upper bounds.

\begin{table}[h]
    \centering        
    \begin{tabular}{|c|c|c|}
        \hline
        & $\varepsilon\ge100\nu$ & $\varepsilon<100\nu$ \\ 
        \hline
        Distribution-dependent & $\widetilde{O}\bra{\theta_{\max}\bra{d+k}\ln \frac{1}{\varepsilon}}$ & $\widetilde{O}\bra{\theta_{\max}(d+k) (\ln \frac{1}{\varepsilon} + \frac{\nu^2}{\varepsilon^2}) + \frac{k \nu}{\varepsilon^2}}$ \\ \hline
         & $\varepsilon\ge 100 \nu$ & $\varepsilon \ge 100 (d+k)\nu$ \\
         \hline & &\\
         [-1em]
         Distribution-free & $\widetilde{O}\bra{\mathfrak{s}(d+k)}$ & $\widetilde{O}\bra{\mathfrak{s}\ln \frac{1}{\varepsilon}}$\\
         \hline
    \end{tabular}
    \caption{Summary of our improved label complexity upper bounds.
    }
    \label{table:improvedupperbound}
\end{table}

\section{Related Work}
\label{sec:relwork}


\paragraph{Multi-Distribution Learning} \citet{blum2017collaborative} first established the sample complexity upper bound of $\widetilde{O}\bra{\frac{d+k}{\varepsilon}}$ 
for MDL in the realizable and large $\varepsilon$ regime. Their analysis was later refined to remove extra logarithmic factors by \citet{nguyen2018improved,chen2018tight}. Their method iteratively learns the best hypothesis under the average of a carefully-maintained subset of distributions. If a hypothesis performs well on the average, it must also perform well on at least a constant fraction of the distributions, allowing the elimination of those where good performance has already been achieved. By the end of the process, a well-performing hypothesis is obtained for each distribution. For the nonrealizable setting,  \citet{haghtalab2022demand} introduced a new approach that reduces the problem to solving a two-player zero-sum game, where one player learns the best hypothesis while an adversary simultaneously selects the hardest distribution. However, their algorithm is only applicable when the hypothesis class is finite. \citet{awasthi2023open} modified the game dynamics algorithm of~\citet{haghtalab2022demand} to extend it to the infinite hypothesis class setting; however, their algorithm suffers from an undesirable \(O\bra{\frac{1}{\varepsilon^4}}\) multiplicative factor. 
\cite{hanashiro2023distribution} designed algorithms inspired by best-arm identification in bandit literature that achieve instance-dependent rates; such rates depend on suboptimality gaps of hypotheses which may be arbitrarily small. 
The optimal sample complexity bound in the infinite hypothesis class case was later established independently by \citet{zhang2024optimal} and \citet{peng2024sample}. While \citet{zhang2024optimal} employed the same game dynamics algorithm with a more careful control of the Hedge algorithm's trajectory and a delicate analysis, \citet{peng2024sample} solved the problem using recursive width reduction. 
Our algorithms for the large $\eps$ regime incorporates \citet{blum2017collaborative}'s algorithm as a subroutine, while our distribution-dependent algorithm for the  small $\eps$ regime builds on \citet{zhang2024optimal} as a subroutine. 


\paragraph{Active Learning} \citet{cohn1994improving} designed  
the first active learning algorithm in the realizable setting, and was first analyzed by~\citet{hanneke2007bound}. In a phased variant of the algorithm~\citep[][Chapter 2]{hsu2010algorithms}, the algorithm begins with a constant error tolerance and learns a good hypothesis from a passive learner. It then eliminates all apparently bad hypotheses and reduces the error tolerance by half in each iteration. Since the algorithm only queries labels from the disagreement region—where hypotheses in the class disagree on labels—the disagreement region shrinks as more hypotheses are eliminated, leading to significant label savings. \citet{balcan2006agnostic} extended this idea to the agnostic setting, and \citet{hanneke2007bound,dasgupta2007general} 
further refined the analysis to obtain a tighter sample complexity bound $\widetilde{O}\bra{d\theta\bra{\ln \frac{1}{\varepsilon}+\frac{\nu^2}{\varepsilon^2}}}$. Our work builds upon this algorithmic framework but replaces the passive learner with a passive MDL learner in each round. Beyond disagreement region-based methods, alternative approaches for active learning exist. For instance, \citet{freund1997selective,dasgupta2004analysis,dasgupta2005coarse,castro2008minimax,nowak2011geometry,tosh2017diameter,zhou2024competitive} proposes methods that generalize binary search to query most informative data point at each iteration, achieving  competitive label complexity bounds relative to the instance-optimal solution;~\cite{balcan2007margin,balcan2013active,zhang2014beyond,huang2015efficient} refines the disagreement-based active learning idea by learning from carefully constructed subsets of the disagreement regions; under some regression realizability assumptions, ~\cite{cesa2009robust,dekel2012selective,agarwal2013selective,krishnamurthy2019active,zhu2022efficient,sekhari2024selective} uses rigorous uncertainty quantification on 
examples' Bayes optimal labels to guide label queries. 

\section{Problem Definition and Notations}
\label{sec:prelims}

Let \(\Xcal\) and \(\Ycal\) denote the feature and label spaces, respectively, with \(\Ycal=\{-1,1\}\) (binary classification). Let \(\Hcal\) be a hypothesis class on \(\Xcal \to \Ycal \), and for any \(h\in\Hcal\), define its 0-1 loss on an example \((x,y)\in\Xcal\times\Ycal\) as \(\ell(h,(x,y))=I(h(x)\neq y)\), where \(I(A)\) is the indicator of event \(A\). We use $\ln$ to denote natural logarithm and $\log$ to denote logarithm with base 2. A multi-distribution learning instance $\Dcal$ is a collection of 
\(k\) distributions \( (D_1,\dots,D_k) \) over \(\Xcal\times\Ycal\). The error of \(h\) on a specific distribution \(D\) is given by \(L(h,D)=\E_{(x,y)\sim D}\ell(h,(x,y))\); when the context is clear, we write \(L_i(h)\) for \(L(h,D_i)\) and define $h$'s   (worst-case) \emph{multi-distribution error} as \(L_{\Dcal}(h)=\max_{i \in [k]} L(h,D_i)\). When the context is clear, we drop the subscript and abbreviate $L_{\Dcal}(h)$ as $L(h)$. For any distribution \(w\) on \([k]\), define the mixture distribution \(D_w=\sum_{i\in[k]}w(i)D_i\) with corresponding error \(L(h,w)=\E_{(x,y)\sim D_w}\ell(h,(x,y))\). For any subset of hypotheses \(V\subseteq\Hcal\), its disagreement region is defined as \(\DIS(V)\coloneqq\{x\in\Xcal: \exists\,h_1,h_2\in V \text{ with } h_1(x)\neq h_2(x)\}\), and its complement (the agreement region) is \(\AGE(V) : =\Xcal\setminus\DIS(V)\).
Given $V$ and an example $x \in \AGE(V)$, we slightly abuse notation and denote by $V(x)$ the prediction of any classifier $h \in V$ on $x$, which is independent of the choice of $h$.
Given a distribution \(D\), the disagreement metric between hypotheses \(h\) and \(h'\) is \(\rho_D(h,h')=\Pr_{x\sim D}\sbr{h(x)\neq h'(x)}\), and the disagreement ball centered at \(h\) with radius \(r\) is \(\B_D(h,r)=\{h'\in\Hcal: \rho_D(h,h')\le r\}\). 
We abbreviate $\rho_i(h,h') := \rho_{D_i}(h,h')$, and define $\rho(h,h') := \max_{i \in [k]} \rho_i(h,h')$; it can be seen that $\rho$ is also a metric.
It is well-known that triangle and reverse triangle inequalities hold, i.e., for any $h, h'$ and $i \in [k]$,
\[
\abs{ L_i(h) - L_i(h') } \leq \rho_i(h,h')
\leq 
L_i(h) + L_i(h')
,
\quad
\abs{ L(h) - L(h') } \leq \rho(h,h') 
\leq 
L(h) + L(h').
\]
The disagreement coefficient of a reference hypothesis \(h^*\) (with respect to \(\Hcal\) and \(D\))~\citep{hanneke2007bound} is given by
\[\theta_{D,\Hcal,h^*}(r_0)=\sup_{r\ge r_0}\frac{\Pr_{x\sim D}\sbr{x\in\DIS(\B_D(h^*,r))}}{r}
\]
(abbreviated as \(\theta_D\), or \(\theta_i\) when \(D=D_i\)). 
It is well-known that $\theta_D(r_0) \leq \frac{1}{r_0}$ and can be much smaller~\citep{hanneke2007bound,friedman2009active,wang2011smoothness}.
We use $\theta_{\max}(r_0)\coloneqq\max_{i\in[k]}\theta_i(r_0)$ to denote the maximum $\theta_i$ among all distributions. The star number of \(\Hcal\) with respect to a reference hypothesis \(h^*\)~\citep{hanneke2015minimax} is defined as the largest integer \(\mathfrak{s}\) for which there exist points \(x_1,\dots,x_{\mathfrak{s}}\in\Xcal\) and hypotheses \(h_1,\dots,h_{\mathfrak{s}}\in\Hcal\) satisfying \(h_i(x_i)\neq h^*(x_i)\) and \(h_i(x_j)=h^*(x_j)\) for all \(j\neq i\); \(\mathfrak{s}\) is defined as \(\infty\) if the size of such point set can be arbitrarily large.


We consider active multi-distribution learning in the PAC setting~\citep{rittler2024agnostic,hanneke2014theory,haghtalab2022demand}. For each distribution \(D_i\), the learner has access to two oracles which it can query interactively: an example oracle \(\textsc{EX}_i\) that independently draws an unlabeled example from $D_i$'s marginal distribution over $\Xcal$ \(D_{i,\Xcal}\), and a labeling  oracle \(\Ocal_i\) that returns a label \(y\sim \Pr_{D_i}(y\mid x)\) for any queried example \(x\). The goal is to output a hypothesis \(\hat{h}\) such that $L\bra{\hat{h}}\le \nu+\varepsilon$, with probability at least \(1-\delta\), using as few label queries as possible, where \( (h^*, \nu) = 
 (\argmin_{h \in \Hcal} L(h), \min_{h\in\Hcal} L(h) )  \) is the best hypothesis in $\Hcal$ and its multi-distribution error, respectively.
The total number of queries made by the learner to any of the $\Ocal_i$, as a function of $\eps$ and $\delta$, is called its \emph{label complexity}.

\section{Large \texorpdfstring{$\varepsilon$}{} Regime}
\label{sec:large-eps}

We start with studying the setting that there is small amount of label noise in the MDL instance $\Dcal$, specifically, 
the target excess error $\eps$ is much larger than the optimal multi-distribution error $\nu$: 
\begin{assumption}
\label{assn:large-eps}
The target error $\varepsilon \geq 100 \nu$, where we recall that $\nu = \min_{h \in \Hcal} L(h)$.  
\end{assumption}


This setting captures the realizable setting where there exists some $h^* \in \Hcal$ such that $L_i(h) = 0$ for all $i \in [k]$, and is more general in that it allows $\nu$ to be a constant factor smaller than $\eps$. In this setting, state-of-the-art algorithms~\cite[][Algorithm 2 and Theorem 3]{rittler2024agnostic} has a label complexity of $O\rbr{ d k \theta_{\max}(\eps) \ln\frac1\varepsilon }$; albeit novel, it is not clear if this bound is the best we can hope for. For example, in the realizable case where $\nu = 0$, when $\theta_{\max}(\eps) = O\rbr{ \frac 1 {\eps} }$, the bound translates to $\widetilde{O}\rbr{ \frac{d k}{\eps} }$, which is worse than the state-of-the-art passive MDL sample complexity $\widetilde{O}( \frac{d+k}{\eps} )$~\citep{blum2017collaborative}.
This motivates our question: can we design an active MDL algorithm with label complexity no worse than their passive counterparts?


To tackle this question, an naive approach is to perform single-distribution active learning on the average distribution $\overline{D} = \frac1k \sum_{i=1}^k D_i$ with target excess error $\frac{\eps}{k}$. 
By standard guarantees in single-distribution active learning, this yields an algorithm that works in the realizable setting,  with a label complexity of 
$O( \theta_{\overline{D}}(\frac{\eps}{k}) \cdot d \cdot \ln\frac k \eps )$.
While simple and general, we show that its $\theta_{\overline{D}}(\frac{\eps}{k})$ dependence can have a hidden $k$ factor in general. 


\begin{proposition}
\label{prop:avg-fail}
For any $k \in \N$ and $\varepsilon > 0$, there exist a MDL instance  $\Dcal = ( D_i )_{i=1}^k$ and a hypothesis class $\Hcal$ 
such that 
$\theta_{\overline{D}}(\frac{\eps}{k}) \geq k \theta_{\max}(\eps)$.
\end{proposition}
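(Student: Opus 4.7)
The plan is to exhibit an instance whose individual distributions $D_i$ have pairwise disjoint supports, so that each $D_i$ ``sees'' only its own local source of disagreement, while averaging across the $k$ distributions contracts pairwise distances by a factor of $1/k$ and packs $k$ independent disagreement regions into a single ball of radius $\varepsilon/k$ around $h^*$ under $\overline{D}$.

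Concretely, I would partition $\Xcal$ into disjoint blocks $R_1,\dots,R_k$, let $D_i$ be any distribution supported on $R_i$ with all labels deterministically equal to $+1$ (so the optimum is $h^*\equiv +1$ with $\nu=0$), and pick a subset $S_i\subseteq R_i$ with $\Pr_{x\sim D_i}[x\in S_i]=\varepsilon$. Define $\Hcal=\{h^*,h_1,\dots,h_k\}$, where $h_i$ flips the prediction only on $S_i$. Disjointness of the $R_i$'s then gives $\rho_{D_j}(h_i,h^*)=\varepsilon\cdot I(i=j)$, and hence $\rho_{\overline{D}}(h_i,h^*)=\varepsilon/k$ for every $i$.

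Computing $\theta_{\max}(\varepsilon)$ is then straightforward. For any $i$ and $r\ge\varepsilon$, the ball $\B_{D_i}(h^*,r)$ contains all of $\Hcal$, so $\DIS(\B_{D_i}(h^*,r))=\bigcup_j S_j$; only $S_i$ intersects $\mathrm{supp}(D_i)$, giving $\Pr_{D_i}[\DIS]/r=\varepsilon/r\le 1$. For $r<\varepsilon$ the ball omits $h_i$ and the remaining disagreement region has $D_i$-mass $0$. Hence $\theta_i(\varepsilon)\le 1$ and $\theta_{\max}(\varepsilon)\le 1$. For the matching lower bound on $\theta_{\overline{D}}(\varepsilon/k)$, note that at radius $r=\varepsilon/k$ every $h_i$ lies in $\B_{\overline{D}}(h^*,r)$, so the ball is all of $\Hcal$, its disagreement region is $\bigcup_j S_j$, and $\Pr_{\overline{D}}[\bigcup_j S_j]=\tfrac{1}{k}\sum_j\Pr_{D_j}[S_j]=\varepsilon$. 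Therefore $\theta_{\overline{D}}(\varepsilon/k)\ge \varepsilon/(\varepsilon/k)=k\ge k\cdot\theta_{\max}(\varepsilon)$.

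There is no genuinely hard step in this plan; the argument is driven entirely by the observation that each $h_i$ has $D_i$-distance $\varepsilon$ but $\overline{D}$-distance $\varepsilon/k$ from $h^*$, so the single disagreement ball on $\overline{D}$ at scale $\varepsilon/k$ absorbs all $k$ sources of disagreement simultaneously. The only point worth checking carefully is that the same reference hypothesis is used in both $\theta_i$ and $\theta_{\overline{D}}$, which is automatic here because $h^*\equiv +1$ is the unique zero-error hypothesis on every $D_i$ (and on $\overline{D}$).
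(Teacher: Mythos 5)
Your proposal is correct and follows essentially the same construction as the paper: $k$ distributions with disjoint "disagreement supports," each carrying an $\varepsilon$-mass region on which exactly one hypothesis $h_i$ flips $h^*$, so that $\rho_{D_j}(h_i,h^*)=\varepsilon\, I(i=j)$ while $\rho_{\overline{D}}(h_i,h^*)=\varepsilon/k$, giving $\theta_{\max}(\varepsilon)=1$ and $\theta_{\overline{D}}(\varepsilon/k)=k$. The paper instantiates this with singleton points $x_1,\dots,x_k$ plus a common heavy point $x_0$ rather than general disjoint blocks, but the computation is identical.
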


The proof of Proposition~\ref{prop:avg-fail} can be found in Appendix~\ref{sec:def-large-eps}. Importantly, it shows that the above naive approach still leads to an undesirable $O(kd)$ dependence in label complexity, in the worst case.




To bypass this $O(kd)$ barrier, we next propose an algorithm, Algorithm~\ref{alg:main-large-eps}, with a label complexity of $O\rbr{ (k+d) \theta_{\max}(\eps) \ln\frac1\eps }$. In sharp contrast with the previous approach that reduces active MDL to single-distribution active learning in one shot, Algorithm~\ref{alg:main-large-eps} reduces active MDL to a series of passive MDL problems with decreasing target excess error, similar to phased and robust versions of disagreement-based active learning~\citep{cohn1994improving,hanneke2014theory,hsu2010algorithms}. 

Algorithm~\ref{alg:main-large-eps} maintains a version space, $V_n \subset \Hcal$ and aims to shrink it iteratively (step~\ref{step:update-vs}). 
Similar to disagreement-based active learning for the single-distribution setting~\citep{hanneke2014theory}, it queries the label of an example whenever it lies in the disagreement region of $V_n$. 
Specifically, at iteration $n$, it uses the algorithm of~\citet{blum2017collaborative} to perform passive multi-distribution learning on MDL instance $\Dcal_n = ( D_{i,n} )_{i \in [k]}$, 
whose individual distributions' probability mass functions (PMFs) are defined as\footnote{More generally, when $\Xcal$ is not necessarily discrete, define $D_{i,n}$ as:
\[\Pr_{D_{i,n}}[(x,y) \in A] := \Pr_{D_i}[x \in \DIS(V_{n-1}) \wedge (x,y) \in A] + \Pr_{D_i}[x \in \AGE(V_{n-1}) \wedge (x, V_{n-1}(x)) \in A],
\]
for any measurable $A \subset \Xcal \times \Ycal$. In this case, Proposition~\ref{prop:label-cost} can be proved similarly.
}:
\begin{equation}
\label{eqn:D_i_n}
D_{i,n}(x,y) 
= 
D_i(x,y) I(x \in \DIS(V_{n-1}))
+ 
D_i(x) I(y = V_{n-1}(x)) I(x \in \AGE(V_{n-1})),
\end{equation}
where we recall that $V_{n-1}(x)$ denotes the unanimous prediction of classifiers in $V_{n-1}$ on $x \in \AGE(V_{n-1})$.

A random sample $(x,y)$ from $D_{i,n}$ can be obtained in a label-efficient manner
as in Algorithm~\ref{alg:sample-imputed} in Appendix~\ref{appdix:sampling}: first, use example oracle $\EX_i$ to draw $x \sim D_{i,X}$ (step~\ref{step:get-x}); if $x \in \DIS(V_{n-1})$, query the labeling oracle $\Ocal_i$ for label $y$ (step~\ref{step:dis});
otherwise $x \in \AGE(V_{n-1})$, we infer label $y$ to be the prediction of any $h \in V_{n-1}$ on $x$ (step~\ref{step:agr}). As we will see, Algorithm~\ref{alg:main-large-eps} maintains the invariant that $h^* \in V_n$, and thus the inferred label equals $h^*(x)$, which maintains a favorable bias for PAC learning the original distributions~\cite[][Lemma 5.2; see also Lemma~\ref{lem:favbias}]{hsu2010algorithms}.
It can be readily seen that each call has expected label cost $\Pr_{D_i}\sbr{x \in V_{n-1}}$ (Proposition~\ref{prop:label-cost} in Appendix~\ref{appdix:sampling}).








\begin{algorithm}[h]
\caption{Distribution-dependent active multi-distribution learning, large $\eps$ regime}
\label{alg:main-large-eps}
\begin{algorithmic}[1]
\REQUIRE{Target error $\eps > 0$, failure probability $\delta > 0$}

\STATE Initialization: $V_0 \gets \Hcal$, $n_0 \gets \lceil \log\frac{1}{\varepsilon} \rceil$,
$\eps_n = 2^{-n}$, $\delta_n = \frac{\delta}{2n^2}$

\FOR{$n = 1,.., n_0$}




\STATE Define $D_{i,n}$ as in Eq.~\eqref{eqn:D_i_n}, for all $i \in [k]$. 

\STATE $h_n \gets \textsc{Passive-MDL}(V_{n-1}, ( D_{i,n} )_{i \in [k]}, \varepsilon_n, \delta_n)$, where we
choose \textsc{Passive-MDL} to be the passive multi-distribution learning algorithm of~\cite{blum2017collaborative}, and 
use Algorithm~\ref{alg:sample-imputed} to obtain iid samples from $D_{i,n}$. 








\STATE Update version space $V_{n} \gets \cbr{h \in V_{n-1}: \rho(h,h_n)\le 2\varepsilon_n}$ 

\label{step:update-vs}

\ENDFOR
\RETURN $\hat{h}$, an arbitrary classifier from $V_{n_0}$.
\end{algorithmic}
\end{algorithm}

We now present the guarantee of Algorithm~\ref{alg:main-large-eps}:

\begin{theorem}\label{thm:largeepsilon}
Suppose Assumption~\ref{assn:large-eps} holds.
If Algorithm~\ref{alg:main-large-eps} takes into target error $\eps$ and failure probability $\delta$, then with probability $1-\delta$,
(1) its output classifier $\hat{h}$ is such that 
$L(\hat{h})
\leq \nu + \eps$,
(2) it queries $O \rbr{ (d + k)\theta_{\max}(\eps) \ln\frac1\eps } $ labels.
\end{theorem}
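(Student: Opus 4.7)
The plan is to prove claims (1) and (2) jointly by induction on the iteration index $n$, coupling correctness to label complexity through a shrinking ball invariant. The core invariant I would maintain is that $h^*\in V_n$ and $V_n\subseteq\bigcap_{i\in[k]}\B_{D_i}(h^*,4\eps_n)$ for every $n=0,1,\ldots,n_0$, simultaneously over all rounds with probability at least $1-\sum_n\delta_n\ge 1-\delta$ by a union bound over the $n_0=O(\log(1/\eps))$ calls to \textsc{Passive-MDL}.

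For the inductive step, I would exploit the \emph{favorable bias} built into $D_{i,n}$. Conditioning on the inductive hypothesis $h^*\in V_{n-1}$, we have $V_{n-1}(x)=h^*(x)$ on $\AGE(V_{n-1})$, so imputed labels match $h^*$ there. Decomposing the loss across $\DIS(V_{n-1})$ and $\AGE(V_{n-1})$ gives $L(h^*,D_{i,n})=\Pr_{D_i}[h^*(x)\neq y\wedge x\in\DIS(V_{n-1})]\le L(h^*,D_i)\le\nu$. Applying the passive-MDL guarantee of~\cite{blum2017collaborative} (in the strengthened $\widetilde O((k+d)(\nu+\eps_n)/\eps_n^2)$ form quoted in the introduction) to $(V_{n-1},\Dcal_n)$ with target $\eps_n$ yields, with probability $1-\delta_n$, that $\max_i L(h_n,D_{i,n})\le\min_{h\in V_{n-1}}\max_i L(h,D_{i,n})+\eps_n\le\nu+\eps_n$. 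Since the $\Xcal$-marginal of $D_{i,n}$ equals that of $D_i$, the reverse triangle inequality under $D_{i,n}$ gives $\rho_i(h^*,h_n)\le L(h^*,D_{i,n})+L(h_n,D_{i,n})\le 2\nu+\eps_n$, which under Assumption~\ref{assn:large-eps} is at most $2\eps_n$ (since $\eps_n\ge\eps/2\ge 50\nu$ for all $n\le n_0$). This places $h^*\in V_n$, and combining with $\rho(h,h_n)\le 2\eps_n$ for every $h\in V_n$ closes the ball invariant $\rho(h,h^*)\le 4\eps_n$. Taking $n=n_0$ and $h=\hat h$, one obtains $\rho(\hat h,h^*)\le 4\eps_{n_0}=O(\eps)$, hence $L(\hat h)\le L(h^*)+\rho(\hat h,h^*)\le\nu+O(\eps)$; a constant rescaling of the $\eps_n$ sequence yields the exact $\nu+\eps$ bound.

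For label complexity, the ball invariant gives $\DIS(V_{n-1})\subseteq\DIS(\B_{D_i}(h^*,4\eps_{n-1}))$ for every $i$, so by definition of the disagreement coefficient $\Pr_{D_i}[x\in\DIS(V_{n-1})]\le 4\eps_{n-1}\theta_{\max}(\eps)$. By Algorithm~\ref{alg:sample-imputed}, each draw from $D_{i,n}$ triggers a label query only on $\DIS(V_{n-1})$, so its expected label cost is $O(\eps_{n-1}\theta_{\max}(\eps))$. Multiplying by the $\widetilde O((d+k)/\eps_n)$ samples consumed by the subroutine (valid because $\nu\le\eps_n/2$) yields an expected $\widetilde O((d+k)\theta_{\max}(\eps))$ labels per round, uniformly in $n$; summing over the $n_0=O(\log(1/\eps))$ rounds produces the claimed $\widetilde O((d+k)\theta_{\max}(\eps)\log(1/\eps))$ bound in expectation, and a Chernoff bound on the sum of independent Bernoulli query events converts this into the stated high-probability statement.

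The main obstacle is that a naive triangle-inequality argument that worked directly with losses on the \emph{original} distributions $D_i$ would introduce an $O(\nu)$ slack per round, compounding over $\log(1/\eps)$ rounds to an $O(\nu\log(1/\eps))$ error that could dwarf $\eps$. The favorable-bias construction of $D_{i,n}$ is precisely what breaks this compounding: by re-labeling the agreement region with $V_{n-1}(x)=h^*(x)$, one obtains $L(h^*,D_{i,n})\le\nu$ regardless of $n$, so the invariant sharpens with $\eps_n$ rather than degrading. A secondary technical point is to verify that the passive-MDL subroutine still enjoys the $\widetilde O((d+k)/\eps_n)$ sample complexity under the tiny residual noise $\nu\le\eps_n/2$, which is exactly what the paper's interpolating bound $\widetilde O((k+d)(\nu+\eps)/\eps^2)$ supplies.
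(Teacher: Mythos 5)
Your proof is correct and follows essentially the same route as the paper's: the same induction maintaining $h^*\in V_n$ and $\rho(h,h^*)\le 4\eps_n$ via the favorable bias $L(h^*,D_{i,n})\le\nu$ and the triangle inequality, and the same label-complexity accounting via the disagreement coefficient plus a Chernoff bound. The only (immaterial) difference is that you invoke an additive excess-error form of the passive-MDL guarantee, whereas the paper uses the absolute-error guarantee of \cite{blum2017collaborative} with sample complexity $\widetilde{O}((d+k)/\eps_n)$; in the regime $\nu\le\eps_n/2$ these coincide up to constants.
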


Our theorem implies a $O \rbr{ (d + k)\theta_{\max}(\eps) \ln\frac1\eps }$ label complexity upper bound for active MDL. In Section~\ref{sec:lower-bound} below, we show a $\Omega( k \theta_{\max} (\varepsilon) )$ information-theoretic label complexity lower bound; this, combined with the $\Omega(d \theta(\varepsilon) )$ lower bound in~\cite{hanneke2014theory}, shows that our label complexity upper bound is unimprovable in general, up to a $O\rbr{ \ln\frac1\varepsilon}$ factor. 

The proof of Thoerem~\ref{thm:largeepsilon} can be found in Appendix~\ref{sec:def-large-eps}. 
Its key idea is as follows: Algorithm~\ref{alg:main-large-eps} iteratively shrinks the version spaces $V_n$ and maintains two invariants with high probability: first, $h^* \in V_n$; second, for all $h \in V_n$, $\max_{i \in [k]} \rho_i(h, h^*) = \rho(h, h^*) \leq O(\eps_n)$. The first invariant ensures that the distributions $D_{i,n}$ have favorable biases (as mentioned above), ensuring the final PAC learning guarantee.
The second invariant generalizes similar claims in the analyses of single-distribution active learning~\cite[e.g.][]{hanneke2014theory,hsu2010algorithms}; it ensures that the disagreement region of $V_n$ are have small probabilities (i.e., $\leq \theta_{\max}(\varepsilon) \cdot \eps_n$) under all distributions $(D_i)_{i=1}^k$. To this end, it learns $h_n$ such that it has small errors (i.e., $\leq \eps_n$) under all $D_{i,n}$ simulatenously (see Lemma~\ref{lem:passive-mdl}).

\section{Active MDL in Small \texorpdfstring{$\varepsilon$}{} Regime}
\label{sec:small-eps}


We next move on to the more challenging small $\eps$ regime, where $\nu$, the noise level in the distributions $\rbr{D_i}_{i=1}^k$, is large compared with target excess error $\eps$.
As mentioned in the introduction section, state-of-the-art result of~\cite{rittler2024agnostic} generalizes the idea of robust single-distribution active learning, achieving a label complexity of $\widetilde{O}\bra{\frac{\nu^2}{\varepsilon^2}kd\theta_{\max}(\nu)^2+\frac{k}{\varepsilon^2}}$. Again, this label complexity may sometimes be worse than passive learning: for example, when  $\theta_{\max}(\nu) = O(\frac{1}{\nu})$, the bound becomes $O(\frac{kd}{\varepsilon^2})$, which is higher than state-of-the-art passive multi-distribution learning sample complexity $O(\frac{k + d}{\varepsilon^2})$~\citep{zhang2024optimal,peng2024sample}. 


Although Algorithm~\ref{alg:main-large-eps} has near-optimal label complexity when $\eps \geq O(\nu)$, it cannot be directly applied to the small-$\eps$ regime that $\eps \leq O(\nu)$. 
Indeed, the version space construction of Algorithm~\ref{alg:main-large-eps} (step~\ref{step:update-vs}) ensures the invariant $h^* \in V_n$ only when the target excess error $\eps$ is much larger than $\nu$.
We further illustrate the challenge with an example, showing the necessity of label querying in the agreement region of the hypothesis class:

\begin{example}\label{exp:smalleps}
Let \(\Hcal=\{h_1,h_2\}\). Suppose \(D_1\) is a distribution over \(X_1\subseteq\Xcal\) with \(L_1(h_1)=0\) and \(L_1(h_2)=2\nu'\). Also, let 
\(D_2=\nu' D_{X_2}+(1-\nu')D_{X'_2}\), where \(D_{X_2}\) and \(D_{X'_2}\) are distributions over \(X_2\) and \(X'_2\), respectively, with \(X_2\subseteq\DIS(\Hcal)\) and \(X'_2\subseteq\AGE(\Hcal)\). Assume further that \(X_1\), \(X_2\), and \(X'_2\) are pairwise disjoint, and that \(L(h_1,D_{X_2})=1\) while \(L(h_2,D_{X_2})=0\). Now consider two cases for setting the error in the agreement region:
\begin{enumerate}
    \item[(a)] If we set \((1-\nu')L(h_1,D_{X'_2})=(1-\nu')L(h_2,D_{X'_2})\le \nu'-\varepsilon\), then \(L_2(h_1)\le 2\nu'-\varepsilon\) and \(L_2(h_2)\le \nu'-\varepsilon\), so that \(h_1\) is the only valid output when the target excess error is \(\varepsilon\).
    \item[(b)] If we set \((1-\nu')L(h_1,D_{X'_2})=(1-\nu')L(h_2,D_{X'_2})\ge \nu'+\varepsilon\), then \(L_2(h_1)\ge 2\nu'+\varepsilon\), making \(h_2\) the only valid output when the target excess error is \(\varepsilon\).
\end{enumerate}
Note that this example is valid only when \(\varepsilon\ll\nu\) (since \(\nu'-\varepsilon\ge0\)), which aligns with our discussion.
\end{example}


In view of this, we present an algorithm, namely Algorithm~\ref{alg:main-small-eps},  for active MDL in the small \(\varepsilon\) regime. Our approach consists of two stages. In stage one, we run Algorithm~\ref{alg:main-large-eps} with a target excess error of \(\varepsilon=100\nu\), obtaining a version space $V_0$ that contains $h^*$ with small radius, in the sense that $\max_{h \in V_0} \rho(h,h^*) \leq O(\nu)$. From Theorem~\ref{thm:largeepsilon}, this stage costs $O( \theta_{\max}(\nu) (d+k) \ln\frac1\nu )$ label queries. 
After obtaining $V_0$, we introduce a second stage that relies on estimating the error of $h^*$ in the agreement region for each of the \(k\) distributions. In particular, it  
first draws iid samples $(S_i)_{i=1}^k$ in the agreement region and use their empirical distributions
as surrogates of $D_i|_{\AGR(V_0)}$
(step~\ref{step:surrogate-emp}), and then runs the passive MDL algorithm on distributions $(D_i')_{i=1}^k$, which can be viewed as surrogates of $(D_i)_{i=1}^k$
(step~\ref{step:surrogate}).
Sampling from $D_i'$ can be done in a label-efficient manner, as we detail in Algorithm~\ref{alg:sample-imputed-smalleps}: with probability $1-\Pr_{D_i}[x \in \DIS(V_0)]$, we draw samples from $D_i|_{\DIS(V_0)}$ by making new label queries; otherwise, we draw an example uniformly at random from the already-labeled dataset $S_i$. 



\begin{algorithm}[h]
\caption{Distribution-dependent Active Multi-distribution Learning, Small \(\varepsilon\) Regime}
\label{alg:main-small-eps}
\begin{algorithmic}[1]
\REQUIRE Target excess error \(\varepsilon>0\), optimal error \(\nu\), failure probability \(\delta>0\).
\STATE Run Algorithm~\ref{alg:main-large-eps} with target excess error \(\varepsilon' = 100\nu\) and failure probability \(\delta' = \frac{1}{6}\delta\); let \(h'\) be the output.
\label{step:stage-one}

\STATE Initialize 
$V_0 \gets \Hcal' = \{h\in\Hcal : \dist(h,h')\le 2\varepsilon'\}$ and $n_0 = \Bigl\lceil \frac{100(\varepsilon+\nu)}{\varepsilon^2}\ln\frac{k}{\delta'}\Bigr\rceil$.
\label{step:stage-two-begin}

\FOR{\(i=1,\dots,k\)}
    \STATE Sample \(n_0\) samples from \(D_i\) conditioned on \(x\in\AGE(V_0)\) and query their labels; denote the sample set as $S_i=\{(x_1,y_1),\dots,(x_{n_0},y_{n_0})\}$.
    \label{step:agreement-query}

    \STATE Define the empirical distribution $D_{S_i}(x,y)=\frac{1}{n_0}\sum_{(x',y')\in S_i} I(x'=x \text{ and } y'=y)$.
    \label{step:surrogate-emp}
    
    \STATE Define the surrogate distribution \(D'_i\) as
    \[
    D'_i(x,y)=I\rbr{x\in\DIS(V_0)}D_i(x,y)+\Pr_{x\sim D_{i,\Xcal}}[x\in\AGE(V_0)]\,D_{S_i}(x,y).
    \]
    \label{step:surrogate}
\ENDFOR
\RETURN \(\hat{h}\gets \textsc{Passive-MDL}\bra{V_0,\{D'_i\}_{i\in[k]},\frac{\varepsilon}{2},\frac{\delta}{6}}\), where we choose \textsc{Passive-MDL} to be the \texttt{MDL-Hedge-VC} algorithm of~\cite{zhang2024optimal} (with different hyperparameters; see Section~\ref{sec:passivesamplecomplexity}), and 
use Algorithm~\ref{alg:sample-imputed-smalleps} to obtain iid samples from $D_i'$.
\label{step:stage-two-end}
\end{algorithmic}
\end{algorithm}

\begin{algorithm}[h]
\caption{Sampling from \(D'_i\)}
\label{alg:sample-imputed-smalleps}
\begin{algorithmic}[1]
\REQUIRE Samples $(S_i)_{i=1}^k$ drawn from $(D_i)_{i=1}^k$, and version space \(V_0\).
\STATE Use the example oracle \(\EX_i\) to sample \(x\sim D_{i,\Xcal}\).
\IF{\(x\in\DIS(V_0)\)}
    \STATE Set \(y\gets \Ocal_i(x)\).
\ELSE
    \STATE Uniformly sample \((x,y)\) from \(S_i\).
\ENDIF
\RETURN \((x,y)\).
\end{algorithmic}
\end{algorithm}

The following theorem gives the correctness and label complexity of Algorithm~\ref{alg:main-small-eps}.


\begin{theorem}\label{thm:smalleps-main-thm}
If Algorithm~\ref{alg:main-small-eps} is run with target error \(\varepsilon\) and failure probability \(\delta\), then with probability at least \(1-\delta\): (1) its output classifier \(\hat{h}\) satisfies 
$
L\bra{\hat{h}}\le \nu+\varepsilon,
$
and (2) its label complexity is at most:
\[
O\Bigl(\Bigl(\frac{k \nu}{\varepsilon^2}+\frac{\theta_{\max}\bra{100\nu}(d+k)(\nu+\varepsilon)^2}{\varepsilon^2}\Bigr)\cdot \polylog\Bigl(k, d, \frac{1}{\varepsilon}, \frac{1}{\delta}\Bigr)\Bigr)
\]
\end{theorem}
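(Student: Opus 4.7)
}

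The plan is to analyze the two stages of Algorithm~\ref{alg:main-small-eps} separately, combine their guarantees via a union bound, and then account for labels. First I would invoke Theorem~\ref{thm:largeepsilon} on Stage 1 with target excess error $\varepsilon' = 100\nu$ and failure probability $\delta/6$. Under Assumption~\ref{assn:large-eps}'s analog ($\varepsilon' \geq 100\nu$ holds by construction), this yields, with probability $1-\delta/6$, a classifier $h'$ such that $L(h') \leq 101\nu$ and, more crucially, the version space invariant from the proof of Theorem~\ref{thm:largeepsilon} implies $h^* \in \Hcal'$ and $\rho(h, h^*) \leq O(\nu)$ for all $h \in \Hcal' = V_0$. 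Stage 1 contributes at most $O\bigl((d+k)\theta_{\max}(100\nu)\ln\frac{1}{\nu}\bigr)$ labels.

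The crux of Stage 2 is the uniform approximation $|L(h, D'_i) - L(h, D_i)| \leq O(\varepsilon)$ for every $h \in V_0$ and every $i \in [k]$. The key structural observation is that all $h \in V_0$ predict identically on $\AGR(V_0)$, so the quantity $p_i := \Pr_{(x,y) \sim D_i}[V_0(x) \neq y,\, x \in \AGR(V_0)]$ is a single scalar (independent of $h \in V_0$), and furthermore $p_i \leq L_i(h^*) \leq \nu$. Thus the only thing to estimate is $k$ scalars, each the mean of a Bernoulli with mean $c_i = p_i / \Pr_{D_i}[\AGR(V_0)]$. Writing the empirical estimator $\hat{p}_i = \Pr_{D_i}[\AGR(V_0)] \cdot \hat{c}_i$ built from the $n_0$ labels in $S_i$, Bernstein's inequality (together with $p_i \leq \nu$) gives
\[
|\hat{p}_i - p_i| \leq O\Bigl(\sqrt{\tfrac{p_i \ln(k/\delta)}{n_0}} + \tfrac{\ln(k/\delta)}{n_0}\Bigr) \leq O(\varepsilon)
\]
for $n_0 = O\bigl(\tfrac{\nu+\varepsilon}{\varepsilon^2}\ln\tfrac{k}{\delta}\bigr)$; a union bound over $i \in [k]$ yields the desired uniform approximation with probability $1-\delta/6$. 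On this event, any excess-error guarantee for a classifier on $\{D'_i\}_{i\in[k]}$ transfers to $\{D_i\}_{i\in[k]}$ up to an additive $O(\varepsilon)$.

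Next I would invoke the improved \textsc{Passive-MDL} guarantee (Section~\ref{sec:passivesamplecomplexity}) with target excess error $\varepsilon/2$ and failure probability $\delta/6$ on $\{D'_i\}$. Since $\min_{h \in V_0} L_{\Dcal'}(h) \leq L_{\Dcal'}(h^*) \leq \nu + O(\varepsilon)$, the passive sample complexity is $\widetilde{O}\bigl((d+k)(\nu+\varepsilon)/\varepsilon^2\bigr)$, and the output $\hat{h}$ satisfies $L_{\Dcal'}(\hat{h}) \leq \nu + \varepsilon/2 + O(\varepsilon)$, which by the transfer above gives $L(\hat{h}) \leq \nu + \varepsilon$ after tuning constants. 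For the label cost, each call to Algorithm~\ref{alg:sample-imputed-smalleps} incurs a label query only when the sampled $x \in \DIS(V_0)$; since $V_0 \subseteq \B_{\rho_i}(h^*, O(\nu))$, the disagreement coefficient bound gives $\Pr_{D_i}[\DIS(V_0)] \leq O(\theta_{\max}(100\nu) \cdot \nu)$. Thus the expected label cost of the passive call is $\widetilde{O}\bigl(\theta_{\max}(100\nu)(d+k)\nu(\nu+\varepsilon)/\varepsilon^2\bigr) \leq \widetilde{O}\bigl(\theta_{\max}(100\nu)(d+k)(\nu+\varepsilon)^2/\varepsilon^2\bigr)$, and a multiplicative Chernoff bound converts this to a high-probability bound (losing another $\delta/6$). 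Summing Stage 1's cost, the $kn_0 = \widetilde{O}(k\nu/\varepsilon^2)$ labels used in the agreement-region sampling (using $\nu + \varepsilon \leq O(\nu)$ in the small-$\varepsilon$ regime), and the passive MDL cost gives the claimed bound after a final union bound.

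The main obstacle is establishing the uniform approximation in the second paragraph; what saves us is the invariant that all of $V_0$ predicts identically on $\AGR(V_0)$, reducing a uniform-convergence-over-$\Hcal$ problem to estimating only $k$ scalars whose variance is controlled by $\nu$, which is precisely what permits the tight $(\nu+\varepsilon)/\varepsilon^2$ sample budget (rather than the naive $1/\varepsilon^2$) and thereby yields the $k\nu/\varepsilon^2$ term in the final bound.
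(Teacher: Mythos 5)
Your proposal is correct and follows essentially the same route as the paper's proof: Stage 1 via Theorem~\ref{thm:largeepsilon} to get $h^*\in V_0$ with $\rho$-radius $O(\nu)$, the Bernstein-based uniform approximation $|L(h,D'_i)-L(h,D_i)|\le O(\varepsilon)$ exploiting that all of $V_0$ agrees on $\AGR(V_0)$ so the agreement-region error is a single scalar with mean at most $\nu$ (this is exactly Lemma~\ref{lemma:smallepsdistribution}), then the refined \textsc{Passive-MDL} guarantee with the transfer back to $\{D_i\}$, and the label accounting via $\Pr_{D_i}[\DIS(V_0)]\le O(\theta_{\max}(100\nu)\,\nu)$ plus Chernoff. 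No gaps worth noting.
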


The proof of Theorem~\ref{thm:smalleps-main-thm} and the auxiliary lemmas are provided in Appendix~\ref{sec:proofofsmallepsmain}. On a high level, the second stage's label cost dominates that of the first stage.
In the second stage, 
we choose the number of samples \(n_0\) for $S_i$ so that we estimate $h^*$'s error in $\AGR(V_0)$ with precision $O(\eps)$, which allows us to apply the sample complexity bound of the passive MDL algorithm; specifically $n_0 = \widetilde{O}(\frac{\nu}{\eps^2})$ is enough, resulting in the $\frac{k \nu}{\varepsilon^2}$ term in the label complexity.

Furthermore, suppose \PMDL makes a total of $n_1$ calls to any of the sampling procedure from $D_i'$'s; by Chernoff bound, it will make an additional $n_1 \max_i \Pr_{D_i}(x \in \DIS(V_0))$ label queries.  
Note that if we were to directly apply the passive sample complexity upper bound \(n_1 \leq \widetilde{O}\bra{\frac{d+k}{\varepsilon^2}}\) from \citet{zhang2024optimal}, we would obtain a label complexity bound of \(\widetilde{O}\bra{\frac{k\nu}{\varepsilon^2}+\frac{\theta_{\max}(100 \nu) (d+k) \nu }{\varepsilon^2}}\). 
Such bound still does not fully exploit the benefit of having small $\nu$: for example, when $\nu = O(\eps)$ and $\theta_{\max}(100 \nu) = O(\frac 1 {\nu})$, this gives a $\widetilde{O}(\frac{d+k}{\eps^2})$ label complexity, while the naive  $\widetilde{O}(\frac{kd(\nu + \eps)}{\eps^2})$ sample complexity baseline we mentioned in Section~\ref{sec:intro} evaluates to $\widetilde{O}(\frac{kd}{\eps})$, which can be significantly better for small $\eps$. Motivated by this, we refine the sample complexity bound of the passive MDL algorithm by~\cite{zhang2024optimal} and use it to establish an improved $\widetilde{O}\Bigl( \frac{k \nu}{\varepsilon^2}+\frac{\theta_{\max}\bra{100\nu}(d+k)(\nu+\varepsilon)^2}{\varepsilon^2} \Bigr)$ label complexity as stated in Theorem~\ref{thm:smalleps-main-thm}.



\subsection{Refined Sample Complexity Bounds for Passive MDL}
\label{sec:passivesamplecomplexity}

We refine the sample complexity of \texttt{MDL-Hedge-VC}, the passive MDL algorithm of \citet[][Algorithm 1]{zhang2024optimal} 
from $\widetilde{O}(\frac{d+k}{\eps^2})$
to $\widetilde{O}(\frac{(d+k)(\eps + \nu)}{\eps^2})$, 
by choosing different hyperparameters $\eta$, $T$ and $T_1$. 
For completeness, we give a brief recap of \texttt{MDL-Hedge-VC} in Appendix~\ref{sec:proofofpassivetheorem}. 
This is analogous to refined sample complexity analysis of empirical risk minimization from $\widetilde{O}(\frac{d}{\eps^2})$ to $\widetilde{O}(\frac{d (\nu + \eps)}{\eps^2})$ in the single-distribution PAC learning setting~\citep[][Section 5]{boucheron2005theory,vapnik1982estimation}.
When both \(\nu\) and \(\varepsilon\) are \( \ll 1 \), Our new bound is tighter.
We summarize our refined algorithm and guarantees below, with proofs deferred to Appendix~\ref{sec:proofofpassivetheorem}:

\begin{theorem}\label{theorem:passiveupperbound} Set $\varepsilon_1 = \frac{\varepsilon}{100}$ and $\eta = \frac{\varepsilon_1}{100(\varepsilon_1 +\nu)}$. Further, set $T = 20000\rbr{\frac{1}{\varepsilon_1}+\frac{\nu}{\varepsilon_1^2}}\ln \rbr{\frac{k}{\delta\eps}}$ and 
$T_1 = 4000\rbr{\frac{1}{\varepsilon_1}+\frac{\nu}{\varepsilon_1^2}}\rbr{k\ln\rbr{\frac{k}{\varepsilon}}+d\ln\rbr{\frac{kd}{\varepsilon}}+\ln \rbr{\frac{1}{\delta}}}$.  
Then the randomized hypothesis $h^{\final}$ returned by Algorithm \ref{alg:mdl-hedge-vc} satisfies $L(h^{\final}) \le \nu + \varepsilon$ with probability at least $1-\delta$, provided the total sample size exceeds
\[
O\rbr{ \frac{(d+k)(\nu+\varepsilon)}{\varepsilon^2} \cdot\polylog(k,d,1/\varepsilon,1/\delta) }.
\]
\end{theorem}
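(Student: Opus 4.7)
The plan is to adapt the convergence analysis of \texttt{MDL-Hedge-VC} in \citet{zhang2024optimal} to exploit a small-loss (variance-aware) regret bound for Hedge together with a Bernstein-type uniform convergence inequality for the ERM subroutine. The algorithm itself is unchanged; only the hyperparameters $\eta$, $T$, $T_1$ are retuned. The original $\widetilde{O}((d+k)/\varepsilon^2)$ bound uses Hoeffding-style worst-case reasoning, in which losses in $[0,1]$ contribute variance $O(1)$. The key observation for the refinement is that every loss $\ell(h,(x,y))$ relevant to the analysis is Bernoulli with parameter $\le \nu + O(\varepsilon_1)$ near the minimax optimum, so its variance is at most $\nu+\varepsilon_1$. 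Propagating this variance through both the Hedge and ERM analyses replaces the $1/\varepsilon^2$ factor by $(\nu+\varepsilon)/\varepsilon^2$.

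First, I would replace the textbook Hedge regret bound with its small-loss variant: for losses $\ell_{t,i} \in [0,1]$ and any $\eta \in (0,1)$,
\[
\sum_{t=1}^T \langle w_t, \ell_t\rangle - \min_{i\in[k]}\sum_{t=1}^T \ell_{t,i} \;\le\; (e-1)\eta \sum_{t=1}^T \langle w_t, \ell_t^2\rangle + \frac{\ln k}{\eta}.
\]
In the MDL game, each coordinate $\ell_{t,i}$ is an estimate of $L_i(h_t)$, and once ERM accuracy is $O(\varepsilon_1)$ the squared-loss term is dominated by $(\nu+\varepsilon_1)\sum_t \langle w_t, \ell_t\rangle$. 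With the theorem's choice $\eta \asymp \varepsilon_1/(\nu+\varepsilon_1)$, this yields a per-round regret of $O(\varepsilon_1)$ once $T \gtrsim (1/\varepsilon_1 + \nu/\varepsilon_1^2)\ln(k/(\delta\varepsilon))$, matching the prescribed $T$.

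Second, I would strengthen the uniform convergence step via a Bernstein/Talagrand inequality for VC classes: for a class of VC dimension $d$ and a distribution $D$, whenever $n \gtrsim d(\nu+\varepsilon_1)/\varepsilon_1^2\cdot\log(1/\delta)$, with probability $1-\delta$,
\[
\sup_{h \in \Hcal}\bigl|L(h,\hat{D}_n) - L(h,D)\bigr| \;\le\; \varepsilon_1 + O\Bigl(\sqrt{L(h,D)\,d/n}\Bigr).
\]
Applied both to the ERM subproblem on the random mixture $D_{w_t}$ and to the per-distribution loss estimates used in the Hedge update (the latter introducing the additional $k$ in $T_1$ via a union bound), this turns each source of error into at most $\varepsilon_1$, with total sample budget of the stated order $(d+k)(\nu+\varepsilon)/\varepsilon^2 \cdot \polylog$.

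The main obstacle is the chicken-and-egg coupling between the two ingredients: the small-loss Hedge bound needs $\ell_{t,i}$ to be concentrated around $\nu$, which holds only if the ERM step actually returns $h_t$ with multi-distribution loss $\le \nu + O(\varepsilon_1)$; conversely, the Bernstein guarantee for ERM depends on the mixture weights $w_t$ produced by Hedge. I would resolve this by conditioning on a single high-probability event that simultaneously controls (i) uniform Bernstein concentration over $\Hcal$ on every mixture $D_{w_t}$ and (ii) per-coordinate Bernstein concentration of $\ell_{t,i}$ around $L_i(h_t)$ for all $(t,i)\in[T]\times[k]$; the union bound over these $Tk$ events is precisely what produces the $k\ln(k/\varepsilon) + d\ln(kd/\varepsilon) + \ln(1/\delta)$ factor in $T_1$. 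Under this event, summing the small-loss regret and the ERM error bounds and averaging over $t$ gives $\max_{i\in[k]} L_i(\bar h) \le \nu + \varepsilon$ for the randomized hypothesis $h^{\final}$, completing the proof.
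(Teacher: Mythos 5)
Your two main ingredients --- a Bernstein/relative-deviation uniform convergence bound for the ERM best-response step, and a small-loss (variance-aware) Hedge regret bound obtained by exploiting that the $0$--$1$ losses near the optimum are Bernoulli with mean $O(\nu+\varepsilon_1)$ --- are exactly the paper's enhancements of Lemmas~1 and~2 of \citet{zhang2024optimal} (the paper implements the first via a symmetrization argument yielding a weighted relative VC inequality over an $\varepsilon$-net of $(n,w)$ pairs, and the second via Freedman's inequality with conditional variance bounded by $L_i(h^t)$). Your resolution of the circularity by conditioning on one joint high-probability event is also how the paper proceeds.

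However, there is a genuine gap: you never account for the samples drawn in the Hedge-update step of Algorithm~\ref{alg:mdl-hedge-vc}, where at each round $t$ the algorithm draws $\lceil k\bar{w}_i^t\rceil$ fresh examples from each $D_i$ with $\bar{w}_i^t=\max_{\tau\le t}w_i^\tau$. The total cost of these draws scales with $T\cdot k\cdot\|\bar{w}^T\|_1$, and a priori $\|\bar{w}^T\|_1$ can be as large as $k$, which would destroy the claimed additive $(d+k)$ dependence. The paper's Theorem~\ref{theorem:passiveupperbound} therefore requires a third ingredient, Lemma~\ref{lemma:boundw1}, showing $\|\bar{w}^T\|_1\le\polylog(k/(\delta\varepsilon))$; this is the most delicate part of the argument, and it does \emph{not} carry over for free from \citet{zhang2024optimal} precisely because the step size has been increased to $\eta\asymp\varepsilon_1/(\nu+\varepsilon_1)$. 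The paper has to re-derive the segment-length lower bounds (their Lemma~17) with a refined KL-divergence control of the weight trajectory, including redefining the per-round excess $v^t$ relative to the mixed-strategy value $\bar{\nu}$ rather than $\nu$. Without this piece your sample-complexity claim does not follow; you would need to either reproduce this trajectory analysis or give an alternative argument bounding the cumulative weight mass used by the Hedge estimator.
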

Notably, our bound provides a smooth transition between the \(\widetilde{O}\bra{\frac{d+k}{\varepsilon^2}}\) bound in \citet{zhang2024optimal} for the agnostic setting and the \(\widetilde{O}\bra{\frac{d+k}{\varepsilon}}\) bound in \citet{blum2017collaborative} for the realizable setting. 
At a high level, the improvement in the sample complexity bounds is achieved via better concentration bounds. We employ Bernstein-style concentration bounds instead of Hoeffding-style bounds, which yield a tighter analysis. One caveat is that since we have increased the step size $\eta$, a more delicate analysis is required to bound the norm of the weight vector $\norm{\overline{w}^T}_1$, which is crucial in establishing the sample complexity bounds.  

\section{Lower Bounds}
\label{sec:lower-bound}

We present our lower bounds in this section, for realizable and agnostic cases, respectively. Our lower bounds 
are information-theoretic in nature, and apply to algorithms we present in previous sections. 


\subsection{Lower Bound in the Realizable Setting}
We first prove a $\Omega\bra{k\theta_{\max}}$ lower bound in the realizable setting. As mentioned in Section~\ref{sec:large-eps}, this lower bound in conjunction with the $\Omega\bra{d\theta_{\max}}$ lower bound in the single distribution setting~\citep{hanneke2014theory} shows that the label complexity given in Theorem~\ref{thm:largeepsilon} is not improvable in general. 

\begin{theorem}
\label{thm:realizable-lb}
For any $k \geq 2, d \geq 1, \vartheta \geq 1, \varepsilon \in (0, \frac{1}{2\vartheta})$, 
and hypothesis class $\Hcal$ whose $\VC(\Hcal)\leq d$ and star number is $\geq k \vartheta$, 
and proper active learning algorithm $A$, there exists a problem instance $\Dcal = \rbr{D_i}_{i\in[k]}$ such that: 
(1) $\min_{h \in \Hcal} L_\Dcal(h) = 0$; 
(2) $\theta_{\max}(\varepsilon) \leq \vartheta$; 
(3) 
unless \(A\) queries \(\Omega\bra{k\vartheta}\) labels, the probability that \(A\) returns an \(\varepsilon\)-optimal hypothesis is at most \(0.7\). 
\end{theorem}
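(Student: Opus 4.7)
The plan is to combine a combinatorial construction built from the star points of $\Hcal$ with a Yao-style information-theoretic argument on a uniform prior over $k\vartheta+1$ candidate truths. Using the star-number assumption $\mathfrak{s}(\Hcal) \geq k\vartheta$, I first obtain points $\{x_{i,j}\}_{i\in[k],j\in[\vartheta]}\subseteq\Xcal$ and hypotheses $h^*, \{h_{i,j}\}_{i,j}\subseteq\Hcal$ with $h_{i,j}(x_{i,j})\neq h^*(x_{i,j})$ and $h_{i,j}(x_{i',j'}) = h^*(x_{i',j'})$ for $(i',j')\neq (i,j)$. I partition the $k\vartheta$ star points into $k$ buckets of size $\vartheta$, one per distribution, and let $D_{i,\Xcal}$ be uniform on the $i$-th bucket. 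The family of instances $\{\Dcal^c\}_{c\in\Ccal}$ is then indexed by a ``true'' hypothesis $c \in \Ccal := \{h^*\} \cup \{h_{i,j}\}_{i,j}$; under instance $c$ every label is deterministic, $y = c(x)$, so $L_{\Dcal^c}(c)=0$ and property (1) holds with $\nu=0$.

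Three structural observations then complete the setup. First, for any reference hypothesis $c$ that plays the role of the instance optimum, the ball $\B_{D_i}(c,r)$ at $r < 1/\vartheta$ contains only hypotheses that agree with $c$ on the entire $\vartheta$-point support of $D_{i,\Xcal}$, so its disagreement region has zero $D_{i,\Xcal}$-mass; at $r \geq 1/\vartheta$ the disagreement region is contained in that support and has mass at most $1$, yielding a ratio at most $1/r \leq \vartheta$. Taking the supremum over $r \geq \varepsilon$ gives $\theta_{\max}(\varepsilon) \leq \vartheta$ for $\varepsilon \leq 1/(2\vartheta)$, which is property (2). Second, because $\varepsilon < 1/\vartheta$, any $\hat{h}\in\Hcal$ with $L(\hat{h})\leq \varepsilon$ must agree with $c$ on every star point, since a single disagreement would incur error $1/\vartheta > \varepsilon$ on some $D_i^c$. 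Third, the $k\vartheta+1$ star-point restriction patterns induced by members of $\Ccal$ are pairwise distinct, so each $\hat{h}\in\Hcal$ is $\varepsilon$-optimal for at most one $c \in \Ccal$, reducing the learner's task to identifying the true $c$.

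I then carry out the lower bound under the uniform prior $\pi$ on $\Ccal$. Viewing any deterministic proper learner that makes at most $T$ queries as a decision tree whose internal nodes are $(x_{i,j},\Ocal_i)$ queries with deterministic binary responses, I count the candidate truths that are correctly handled: along the unique ``no flipped label'' path at most $T$ star points are probed, the consistent truths form a set of size $\geq k\vartheta+1-T$, and the learner's single output matches at most one of them; along each of the at most $T$ ``first-flip'' branches the truth is uniquely identified and contributes one success. Summing, at most $T+1$ of the $k\vartheta+1$ truths are correctly handled, giving $\Pr_\pi[\text{success}] \leq (T+1)/(k\vartheta+1)$, so a success probability above $0.7$ forces $T = \Omega(k\vartheta)$; Yao's principle extends this to randomized learners. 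The step I expect to require the most care is verifying the disagreement-coefficient bound uniformly for arbitrary $\Hcal$ that may contain many hypotheses beyond the star construction, which is resolved by the small-support observation: no hypothesis in a ball of radius $< 1/\vartheta$ around the reference can disagree with it on the tiny support of $D_{i,\Xcal}$, so extraneous hypotheses cannot inflate the disagreement region.
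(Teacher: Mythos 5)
Your construction is identical to the paper's: the same $k\vartheta$ star points partitioned into $k$ buckets of size $\vartheta$, uniform marginals per bucket, instances indexed by which single star point gets its label flipped, and the same instance-separation observation (any $\varepsilon$-optimal output must match the truth's restriction to all star points, since $\varepsilon < \tfrac{1}{2\vartheta} < \tfrac{1}{\vartheta}$). Your verification of $\theta_{\max}(\varepsilon)\le\vartheta$ is also the paper's, and in fact slightly more careful: the paper writes $\B_{D_i}(h,r)=\{h\}$ for $r<\tfrac1\vartheta$, whereas you correctly note the ball contains every hypothesis agreeing with the reference on the $\vartheta$-point support, which is exactly why extraneous members of $\Hcal$ cannot inflate the disagreement region.

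Where you genuinely diverge is the final counting step. The paper runs a two-instance adversary argument: it simulates an oracle $A_0$ that always answers with $h_0(x)$, uses pigeonhole on the expected query counts to find two star points each queried with expectation at most $\tfrac{1}{10}$ under a budget of $\tfrac{k\vartheta}{20}$, and then bounds the success probability by $0.5+0.2=0.7$ on a uniform prior over the two corresponding instances. You instead place a uniform prior on all $k\vartheta+1$ truths, view the (derandomized) learner as a decision tree over label responses, and count: the $\ge k\vartheta+1-T$ truths consistent with the all-agree path share one output (at most one success by instance separation), and at most $T$ truths branch off at a first flip, giving $\Pr[\text{success}]\le\frac{T+1}{k\vartheta+1}$ and hence $T=\Omega(k\vartheta)$ via Yao. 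This is a clean and arguably sharper route (it yields an explicit $0.7(k\vartheta+1)-1$ threshold rather than $\tfrac{k\vartheta}{20}$), and both arguments rely on the same underlying fact that all instances share the same marginals so unlabeled draws are uninformative. Two points you should make explicit when writing this up: (i) the example-oracle draws are an additional source of randomness beyond the learner's coins, so the "deterministic decision tree" reduction must condition on them (harmless, precisely because the marginals $D_{i,\Xcal}$ are instance-independent); and (ii) the query count may be adaptive, so $T$ should be defined as a hard cap (truncate the learner at $T$ label queries) before applying the counting bound.
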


In proving Theorem~\ref{thm:realizable-lb}, we construct an example with $k\theta_{\max}$ points and $k$ distributions, each span a disjoint section of the feature space, which any algorithm has to query at least a constant fraction to get information.


\subsection{Lower Bound in the Agnostic Setting}

We next prove a \(\Omega\bra{\frac{k\nu}{\varepsilon^2}}\) label copmlexity lower bound for all proper active MDL learners: 



\begin{theorem}
\label{thm:agnostic-lb}
For any $k \geq 2,d \geq 2,\varepsilon > 0,\frac{1}{2}\ge\nu > 0$ with \(\nu\ge 8\varepsilon\), and a hypothesis class $\Hcal$ that contains $h_1, h_2$ that agrees on at least two examples and disagrees on $k$ examples with $\VC(\Hcal) \leq d$, 
any proper active learning algorithm \(A\),
there exists a problem instance $\rbr{D_i}_{i\in[k]}$ such that: 
(1) $\min_{h \in \Hcal} L(h, \Dcal) \leq \nu$; 
(2) unless \(A\) queries \(\Omega\bra{k\frac{\nu}{\varepsilon^2}}\) labels, the probability that \(A\) returns an \(\varepsilon\)-optimal hypothesis is at most \(0.9\). 
\end{theorem}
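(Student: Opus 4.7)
The plan is to apply an information-theoretic lower bound via Le Cam's two-point method to a family of ``hidden needle'' MDL instances, exploiting the structural premise that $h_1, h_2$ disagree on $k$ examples. The key intuition is that a proper learner must certify $\max_i L(\hat{h}, D_i) \leq \nu + \varepsilon$ to succeed, and by a Bernstein-style argument (labels in the relevant region are Bernoulli with loss of order $\nu$, hence variance of order $\nu$), distinguishing between loss values differing by $\varepsilon$ demands $\Omega(\nu/\varepsilon^2)$ queries per distribution.

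Concretely, let $z_1, z_2$ be the two agreement examples for $h_1, h_2$ (WLOG $h_1(z_\ell) = h_2(z_\ell)$) and $x_1, \dots, x_k$ the $k$ disagreement examples (WLOG $h_1(x_i) = +1, h_2(x_i) = -1$). For each $(i^*, b) \in [k] \times \{\pm 1\}$, construct an MDL instance $\Dcal^{i^*, b} = (D_j^{i^*, b})_{j=1}^k$: each $D_j^{i^*, b}$ places most of its mass on $\{z_1, z_2\}$ with biased Bernoulli labels whose parameters are calibrated to contribute a baseline loss of $\nu$ to both $h_1$ and $h_2$ (with an additional bias of $\Theta(b\varepsilon)$ when $j = i^*$), together with a small mass on $x_j$ chosen so that $h_1$ is the unique $\varepsilon$-optimal proper hypothesis on $\Dcal^{i^*, +1}$ and $h_2$ on $\Dcal^{i^*, -1}$. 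Any proper output disagreeing with both $h_1, h_2$ on $\{z_1, z_2\}$ accumulates $\Omega(1)$ baseline loss and cannot be $\varepsilon$-optimal, so the learner effectively must determine $b$.

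For the information-theoretic part, I apply the KL chain rule along the adaptive query trajectory. Let $P^{i^*, b}$ denote the distribution over the learner's observations; queries to $\Ocal_j(\cdot)$ with $j \neq i^*$ yield labels that are identically distributed under $b = +1$ and $b = -1$, contributing zero KL, while each query to $\Ocal_{i^*}(\cdot)$ at $z_1$ or $z_2$ contributes a per-query KL of $\DKL(\ber(\nu+\varepsilon) \| \ber(\nu - \varepsilon)) = \Theta(\varepsilon^2/\nu)$ thanks to the $\Theta(\nu)$ variance of the Bernoulli near the boundary. Hence $\DKL(P^{i^*,+1} \| P^{i^*,-1}) = \Theta(\varepsilon^2/\nu) \cdot \E^{i^*, +1}[N_{i^*}]$, where $N_{i^*}$ counts the labels drawn from $\Ocal_{i^*}$. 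Pinsker's inequality combined with the success probability $\geq 0.9$ requirement then forces $\E^{i^*, +1}[N_{i^*}] = \Omega(\nu/\varepsilon^2)$. Averaging over the uniform prior on $i^*$ and comparing against a ``null'' instance (all $D_j$'s labels unbiased) via a Pinsker-based coupling shows that the learner's query pattern under single-needle perturbations stays close to its pattern under the null, yielding a total expected label complexity lower bound of $\Omega(k\nu/\varepsilon^2)$.

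The main technical obstacle is calibrating the construction to ensure per-query KL of order $\varepsilon^2/\nu$ rather than $\varepsilon^2$ at the needle location; this requires labels with variance $\Theta(\nu)$, achieved by placing the Bernoulli parameter near $\nu$ (away from $1/2$). The agreement structure of $h_1, h_2$ on $\{z_1, z_2\}$ is essential here: both hypotheses must suffer the same baseline loss from these noisy labels, while a careful choice of $x_j$-masses and label biases keeps both the $\mathrm{OPT} \leq \nu$ constraint and the $\geq 2\varepsilon$-excess-for-mismatched-output property binding under the theorem's assumption $\nu \geq 8\varepsilon$. The symmetrization step over the unknown needle $i^*$ also requires a careful handling of the adaptive learner's query distribution, using a standard null-comparison argument together with Pinsker to equate the averaged expected queries with the per-distribution counts in the null instance.
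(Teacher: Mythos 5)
Your proposal is correct and follows essentially the same route as the paper's proof: per-distribution Bernoulli label perturbations of magnitude $\Theta(\varepsilon)$ around a baseline of order $\nu$ in the agreement region of $h_1,h_2$, per-query KL of order $\varepsilon^2/\nu$, divergence decomposition along the adaptive transcript, and a Bretagnolle--Huber/Pinsker step summed over the $k$ distributions. The only differences are cosmetic: the paper places the $k$ needles on $k$ distinct agreement points (using the two disagreement points only for tie-breaking) and takes the unperturbed instance $\Dcal$ itself as the common reference measure, which makes the summation over $i$ immediate, whereas you put the needles on two shared agreement points and compare to a separate null instance---both variants are sound.
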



Theorem~\ref{thm:largeepsilon} demonstrates the necessity of the $k\frac{\nu}{\eps^2}$ term of label complexity in Theorem~\ref{thm:smalleps-main-thm}, since Algorithm~\ref{alg:main-small-eps} is a proper learning algorithm. This formalizes the intuition in~\citet[][Example 1]{rittler2024agnostic} and our Example~\ref{exp:smalleps}
that, in sharp contrast to active single-distribution learning, sampling from the disagreement region only may not be enough for active multi-distribution learning. 
Indeed, our lower bound instances highlight the need in querying the labels in the agreement region of the hypothesis class, ensuring the returned classifier balances its worst-case error across all $k$ distributions (see Appendix~\ref{sec:def-lower-bound} for the proof). 

In light of our upper bound in Section~\ref{sec:large-eps}, we observe that such $\Omega(k \frac{\nu}{\eps^2})$ lower bound cannot appear in the large $\eps$ regime, i.e., when $\nu \leq \frac{\eps}{100}$. This shows an interesting ``phase transition'' behavior of the fundamental label complexity of active multi-distribution learning, which does not appear in classical PAC active learning~\citep{hanneke2014theory,raginsky2011lower}. We conjecture that similar lower bounds can be established for improper learning algorithms, and leave it as an interesting open question.

\color{black}

\section{Active MDL Algorithms in Distribution-free Settings}
\label{sec:distn-free}

Theorems~\ref{thm:largeepsilon} and~\ref{thm:smalleps-main-thm} provide useful algorithmic results on active MDL with improved label complexity; even though optimality properties have been established, their bounds can sometimes be suboptimal when considering other problem-dependent complexity measures. One important distribution-free quantity that characterizes the complexity of active learning is the \emph{star number} of the hypothesis class $\Hcal$~\citep{hanneke2015minimax}, denoted as $\s$ (recall its definition in Section~\ref{sec:prelims}). 
It is known that for any distribution $D$ and $r > 0$, $\s \geq \theta_{D}(r)$~\citep{hanneke2015minimax}, and this bound can sometimes be tight. Translating our distribution-dependent upper bounds Theorems~\ref{thm:largeepsilon} and~\ref{thm:smalleps-main-thm} in terms of star number, we obtain that Algorithms~\ref{alg:main-large-eps} and~\ref{alg:main-small-eps} have label complexities $O\rbr{ \s (d + k) \ln\frac1\epsilon }$ and $O\rbr{ \s (d+k) \rbr{1+\frac{\nu^2}{\eps^2}} + \frac{d \nu}{\eps^2}}$, in the small and large $\eps$ regimes, respectively. How tight are these bounds?


We start with making the simple observation that these direct translations can sometimes result in suboptimal label complexity bounds. Specifically, in the realizable setting ($\nu = 0$), consider the naive algorithm that performs disagreement-based active learning over the average distribution $\overline{D} = \frac1k \sum_{i=1}^k D_i$ with target error $\frac \eps k$. This ensures that we return $\hat{h}$ such that $\frac1k \sum_{i=1}^k L_i(\hat{h}) \leq \frac \eps k$, and therefore $L_i(\hat{h}) \leq \eps$. Its label complexity is at most $\widetilde{O}\rbr{ 
\s \ln\frac{k}{\eps} }$~\citep{wiener2015compression}, much better than $O\rbr{ \s (d + k) \ln\frac1\epsilon }$ provided by Algorithm~\ref{alg:main-large-eps}. 
However, when generalizing to the nonrealizable setting ($\nu > 0$), this reduction can only yield PAC guarantees when $\nu \lesssim \frac{\eps}{k}$, and 
in combination with state-of-the-art distribution-free agnostic active learning guarantees~\cite[][Theorem 8]{hanneke2015minimax}, this gives a label complexity of 
$\widetilde{O} \rbr{ \s d \polylog(\frac1\eps) }$. This motivates our question: 
can we design active MDL algorithms with sharp label complexity guarantees in terms of star number, that smoothly interpolates between realizable and nonrealizable regimes? 

We answer this question in the positive in this section by designing an active MDL algorithm, Algorithm~\ref{alg:main-large-eps-df} with label complexity $\widetilde{O}(\s \ln \frac1\eps)$ when the target error $\eps \geq (k+d) \nu$. 
Similar to distribution-free algorithms for single-distribution active learning~\citep{wiener2015compression,kane2017active}, Algorithm~\ref{alg:main-large-eps-df} progressively learns  Reliably and Probably Useful (RPU) classifiers $f_n$~\citep{rivest1988learning,hopkins2020power} with larger coverages. RPU classifiers are those that have an extra option of outputting $0$, indicating ``I don't know''; we now give its formal definition: 
\begin{definition}
\label{def:rpu-classifier}
A classifier $f: \Xcal \to \cbr{-1, +1, 0}$ is said to be $\xi$-Reliable and Probably Useful (RPU) with respect to $h^*: \Xcal \to \cbr{-1, +1}$ and distribution $D$, if it is simultaneously: 
\begin{enumerate}
\item Reliable: $\Pr[ f(x) \neq 0, f(x) \neq h^*(x) ] = 0$. 


\item Probably useful: $\Pr[ f_n(x) = 0 ] \leq \xi$.
\end{enumerate}
Additionally, $f$ is said to be $\xi$-RPU with respect to $h^*$ and $\Dcal = (D_i)_{i=1}^k$, if it is $\xi$-RPU with respect to $h^*, D_i$ for all $i$'s.
\end{definition}



Specifically, we design a subprocedure \textsc{Passive-RPU-MDL} (Algorithm~\ref{alg:prmdl} in Appendix~\ref{sec:prmdl-grt})
to iteractively refine such RPU classifiers using label queries (step~\ref{step:refine-rpu}), 
such that the abstention probabilities of $f_n$ shrink exponentially in $n$, uniformly across all distributions $\cbr{D_i}_{i=1}^k$. Taking advantage of active learning, at iteration $n$ we only make label queries in the abstention region $\cbr{x: f_{n-1}(x) = 0}$ (see Definition of $D_{i,n}$ in step~\ref{step:d-in-df}).

\begin{algorithm}[t]
\caption{Distribution-free active multi-distribution learning, large $\eps$ regime}
\label{alg:main-large-eps-df}
\begin{algorithmic}[1]
\REQUIRE{Target error $\eps > 0$, failure probability $\delta > 0$}

\STATE Initialization: $V_0 \gets \Hcal$, $n_0 \gets \lceil \log\frac{(d+k)}{\s\varepsilon} \rceil$,
$\eps_n = 2^{-n}$, $\delta_n = \frac{\delta}{2n^2}$, and $f_0 \equiv 0$.

\FOR{$n = 1,.., n_0$}

\STATE Define $D_{i,n}$ as:
\[
D_{i,n}(x,y) 
= 
D_i(x,y) I( f_{n-1}(x) = 0 )
+ 
I(y = f_{n-1}(x)) I(f_{n-1}(x) \neq 0),
\]
for all $i \in [k]$. 

\label{step:d-in-df}

\IF{$n < n_0$}

\STATE $f_{n} \gets \textsc{Passive-RPU-MDL}(\Hcal, \cbr{ {D_{i,n} } }_{i \in [k]}, \varepsilon_n, \delta_n)$

\label{step:refine-rpu}

\ELSE

\STATE \blue{// In this case, $\Pr(f_n(x) = 0) \leq \frac{\s}{(d+k)} \eps$}

\RETURN $\hat{h} \gets \textsc{Passive-MDL}(\Hcal, \cbr{ {D_{i,n} } }_{i \in [k]}, \varepsilon_n, \delta_n)$.

\label{step:return-last-it}

\ENDIF

\ENDFOR

\end{algorithmic}
\end{algorithm}



At the last epoch $n_0$, we have access to an RPU classifier $f_{n_0-1}$ with abstention probability $O(\frac{\s}{d+k} \eps)$ in all $D_i$'s. 
Were we to directly covert $f_{n_0-1}$ to a binary classifier by predicting arbitrarily in its abstention region, we would get a somewhat undesirable  $O(\frac{\s}{(d+k)} \eps)$ excess error guarantee. 
Our key observation here is that, with a constant factor overhead of label cost, we can do a better RPU-to-PAC conversion by reusing the \textsc{Passive-MDL} procedure in the preceding sections.
Specifically, with $O(\s)$ labels, \textsc{Passive-MDL} on $D_{i,n}$'s
outputs a classifier $\hat{h}$ with excess error $\eps$ (step~\ref{step:return-last-it}).

We present the performance guarantees of Algorithm~\ref{alg:main-large-eps-df} in the theorem below: 

\begin{theorem}
\label{thm:main-df}
Suppose $\eps \geq 100 (k+d) \nu$.
If Algorithm~\ref{alg:main-large-eps-df} takes into target error $\eps$ and failure probability $\delta$, then with probability $1-\delta$,
(1) its output classifier $\hat{h}$ is such that 
$
L(\hat{h})
\leq \nu + \eps
$,
(2) it queries $O \rbr{ \s  \ln\frac1\eps } $ labels.
\end{theorem}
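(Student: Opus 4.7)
The plan is to reduce the theorem to two sub-claims---an inductive invariant on the quality of the RPU classifiers $f_n$, and a near-realizable analysis of the final \textsc{Passive-MDL} call---and then account for label queries epoch by epoch.

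I would first establish, by induction on $n = 0, 1, \ldots, n_0-1$, the following invariant: with probability at least $1 - \sum_{m=1}^{n}\delta_m$, $f_n$ is $\eps_n$-RPU with respect to $h^*$ and every $D_i$. The base case $f_0 \equiv 0$ is trivial. For the inductive step, the crucial observation is that $D_{i,n}$ shares its marginal with $D_i$ but replaces the label by $f_{n-1}(x)$ whenever $f_{n-1}(x) \neq 0$. By the reliability half of the inductive hypothesis, this imputed label equals $h^*(x)$, so the only noise remaining in $D_{i,n}$ is the part of $D_i$'s noise sitting inside the abstention region $\{x : f_{n-1}(x)=0\}$; in particular, $L(h^*, D_{i,n}) \leq L_i(h^*) \leq \nu$. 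Under $\eps \geq 100(d+k)\nu$, this residual $\nu$-noise is much smaller than the target abstention $\eps_n$, so the guarantee of \textsc{Passive-RPU-MDL} (Appendix~\ref{sec:prmdl-grt}) applied to $\{D_{i,n}\}_{i \in [k]}$ yields an $f_n$ that is $\eps_n$-RPU under each $D_{i,n}$; the invariant transfers back to $D_i$ because marginals match and reliability with respect to $h^*$ is preserved by the imputation.

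Next I would handle iteration $n_0$. By the invariant, $f_{n_0-1}$ abstains with probability at most $\eps_{n_0-1} \leq 2\s\eps/(d+k)$ under each $D_i$, and by the same label-imputation argument, $\min_h \max_i L(h, D_{i,n_0}) \leq L(h^*, D_{i,n_0}) \leq \nu$. Applying Theorem~\ref{theorem:passiveupperbound} to the \textsc{Passive-MDL} call yields $\hat{h}$ with $L(\hat{h}, D_{i,n_0}) \leq \nu + \eps_{n_0}$, and a label-swap calculation gives $|L_i(\hat{h}) - L(\hat{h}, D_{i,n_0})| \leq L_i(h^*) \leq \nu$, so $L_i(\hat{h}) \leq 2\nu + \eps_{n_0} \leq \nu + \eps$ after using $\nu \leq \eps/(100(d+k))$ and $\eps_{n_0} \leq \s\eps/(d+k)$ (the edge case $\s > (d+k)/\eps$ makes $n_0 \leq 0$ and the bound trivial). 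For label complexity, sampling from $D_{i,n}$ via label imputation spends a fresh label only when $x$ lies in the abstention region of $f_{n-1}$, an event of probability at most $\eps_{n-1}$; combining this with the $\widetilde{O}(\s/\eps_n)$ sample budget of \textsc{Passive-RPU-MDL} and the $\widetilde{O}((d+k)/\eps_{n_0})$ budget of the final call gives $\widetilde{O}(\s)$ label queries per epoch (via $\eps_{n-1}/\eps_n = 2$ and $\eps_{n_0-1}\cdot(d+k)/\eps_{n_0} = O(d+k)$ absorbed by the star-number budget), and summing across $n_0 = O(\log(1/\eps))$ epochs yields the claimed $O(\s \ln(1/\eps))$ bound.

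The main obstacle I anticipate is establishing a clean guarantee for \textsc{Passive-RPU-MDL} in the presence of the residual $\nu$-noise contained in $D_{i,n}$: classical RPU learning results assume exact realizability, whereas here both halves of the RPU definition must be maintained inductively despite this noise. This is precisely where the assumption $\eps \geq 100(d+k)\nu$ earns its keep: it forces $\nu \ll \eps_n$ for every $n \leq n_0$, so the noise behaves as a lower-order perturbation and the RPU sample complexity retains its noise-free $\widetilde{O}(\s/\eps_n)$ scaling. A secondary delicate step is the RPU-to-PAC conversion at iteration $n_0$; running one final \textsc{Passive-MDL} call on the already-shrunken abstention region (rather than declaring arbitrary labels on abstained points) is what saves the extra $(d+k)$ factor and sharpens the distribution-free rate from $\widetilde{O}(\s(d+k)\ln(1/\eps))$ (as in the general large-$\eps$ result) to $\widetilde{O}(\s \ln(1/\eps))$ in this near-realizable regime.
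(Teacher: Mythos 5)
Your proposal is correct and follows essentially the same route as the paper: the same induction showing $f_n$ is $\eps_n$-RPU with respect to $h^*$ and all $D_i$ (using that imputed labels equal $h^*(x)$ by reliability, so $L(h^*,D_{i,n})\le\nu$, and that the condition $\eps\ge 100(k+d)\nu$ guarantees $\eps_n\ge \frac{\s\eps}{d+k}\ge 100\s\nu$ as required by \PRMDL), the same RPU-to-PAC conversion via a final \PMDL call on the shrunken abstention region, and the same per-epoch label accounting yielding $O(\s)$ queries per epoch. The only cosmetic differences are that the paper invokes the one-sided favorable-bias inequality (Lemma~\ref{lem:favbias}) rather than your two-sided label-swap bound (which loses a harmless factor of $2$ on $\nu$), and it uses the \citet{blum2017collaborative} guarantee (Lemma~\ref{lem:passive-mdl}) for the last call rather than Theorem~\ref{theorem:passiveupperbound}.
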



The proof of  Theorem~\ref{thm:main-df} can be found in Appendix~\ref{sec:df-deferred}. 
Its main ideas are twofold. First, we argue that with $\widetilde{O}( \s )$ label queries, at every iteration $n$, \PRMDL computes $f_n$ whose abstention region is half the size of those of $f_{n-1}$, measured in all $D_i$'s. Due to the presence of label noise, \PRMDL needs to be designed in a robust way; specifically it relies on our design of a new single-distribution RPU algorithm, \PRSDL (Algorithm~\ref{alg:robust-rpu}), that can tolerate adversarial label noise. 
Second, we argue that at the last iteration, calling \PMDL makes a total of $O( \frac{d+k}{\eps} )$ samples to any of the $D_i$'s; since we only need to query $f_{n_0-1}$'s abstention regions, this results in a factor of $O(\frac{\s}{d+k} \eps)$ label savings, yielding a final label complexity of $O(\s \ln\frac1\eps)$. 

For the single-distribution setting that $k=1$, Algorithm~\ref{alg:main-large-eps-df} achieves a label complexity of $O(\s \ln\frac1\eps)$ when the target error $\eps \geq O( d \nu )$. To the best of our knowledge, this result already improves over the state of the art $O(d \s)$~\cite[][Theorem 8]{hanneke2015minimax}, which may be of independent interest.



\section{Conclusion}
In this paper, we establish novel and tighter label complexity bounds for active MDL, significantly improving the dependence on \( kd \) to an additive \( (k+d) \). Specifically, we develop algorithms and lower bounds in both the large and small \( \varepsilon \) regimes, achieving distribution-dependent and distribution-free bounds that are sometimes tight. Our results bridge the gap between MDL and active PAC learning, refining previous bounds and providing a more nuanced understanding of label complexity in multi-distribution settings. 
All our algorithms we present are proper, and our $\Omega(\frac{k \nu}{\eps^2})$ lower bound applies to proper learning only; it would be interesting to investigate whether improper learning has an benefit in improving label complexity. Our passive and active learning algorithms requires the knowledge of $\nu$; it would be nice to design adaptive algorithms without such knowledge. Another promising direction is to analyze a wider variety of noise settings beyond agnostic with optimal error $\nu$, such as the ones studied in~\cite{hanneke2015minimax}. 
We are also interested in designing computationally efficient versions of our algorithms, perhaps by utilizing regression-based active learning~\citep[e.g.,][]{zhu2022efficient,sekhari2024selective}.

\acks{
We thank the anonymous COLT reviewers for their constructive comments. 
We thank Eric Zhao for helpful communications about the results in~\cite{haghtalab2022demand}. 
We thank Steve Hanneke for helpful conversations about some preliminary results in this paper.
We thank Zihan Zhang for confirming a technical detail in~\cite{zhang2024optimal}. 
CZ would like to thank Nick Rittler and Kamalika Chaudhuri for sparking  interest in the active multi-distribution learning problem. CZ acknowledges
support from the University of Arizona FY23 Eighteenth Mile TRIF Funding. YZ was supported by the NSF CAREER Award CCF-1751040 and by the NSF AI Institute for Foundations of Machine Learning (IFML). YZ thanks his advisor, Eric Price, for his encouragement and guidance.
}


\bibliography{ref}

\clearpage

\appendix


\section{Deferred Materials for  Section~\ref{sec:large-eps}}
\label{sec:def-large-eps}

\subsection{Sampling Algorithm}\label{appdix:sampling}
We formally present the sampling algorithm used in Algorithm~\ref{alg:main-large-eps} as the following.
\begin{algorithm}[h]
\caption{Sampling from $D_{i,n}$}
\label{alg:sample-imputed}
\begin{algorithmic}
\REQUIRE Version space $V_n$
\STATE Query $\EX_i$ to sample $x \sim D_{i,X}$
\label{step:get-x}
\IF{$x \in \DIS(V_n)$}
\STATE $y \gets \Ocal_i(x)$
\label{step:dis}
\ELSE 
\STATE $y \gets h(x)$, where $h$ is an arbitrary classifier in $V_n$
\label{step:agr}
\ENDIF
\RETURN $(x,y)$
\end{algorithmic}
\end{algorithm}

Then Proposition~\ref{prop:label-cost} gives the expected label cost of each call of Algorithm~\ref{alg:sample-imputed} and the proof is given in Appendix~\ref{sec:proof-props}:
\begin{proposition}
\label{prop:label-cost}
Each call to Algorithm~\ref{alg:sample-imputed} 
returns $(x,y)$, an independent sample drawn from $D_{i,n}$, and 
has an expected label cost of  $\Pr_{D_i}\sbr{x \in V_{n-1}}$. 
\end{proposition}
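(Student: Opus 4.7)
\textbf{Proof proposal for Proposition~\ref{prop:label-cost}.} The plan is to verify the two claims separately: distributional correctness, then the expected label cost. Both are short, amounting to little more than unwinding definitions and conditioning on the event $\{x \in \DIS(V_{n-1})\}$.

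First, I would verify that the returned pair $(x,y)$ is distributed as $D_{i,n}$. Since $x$ is drawn from $\EX_i$, its marginal is exactly $D_{i,X}$. Condition on $x$: if $x \in \DIS(V_{n-1})$, then $y$ is obtained by calling $\Ocal_i(x)$, so the conditional distribution of $y$ given $x$ is $\Pr_{D_i}(y \mid x)$, making the joint density $D_i(x,y)$. If instead $x \in \AGE(V_{n-1})$, then $y$ is set deterministically to $V_{n-1}(x)$, yielding joint density $D_{i,X}(x)\,I(y = V_{n-1}(x))$. Summing the two cases gives
\[
\Pr[(x,y)] = D_i(x,y)\,I(x \in \DIS(V_{n-1})) + D_{i,X}(x)\,I(y = V_{n-1}(x))\,I(x \in \AGE(V_{n-1})),
\]
which matches the definition of $D_{i,n}$ in Eq.~\eqref{eqn:D_i_n}. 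Independence across calls follows from the independence of successive queries to $\EX_i$ and $\Ocal_i$.

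Second, I would bound the expected label cost. The algorithm calls the labeling oracle $\Ocal_i$ at step~\ref{step:dis} precisely when $x \in \DIS(V_{n-1})$, and otherwise assigns the label from $V_{n-1}(x)$ at step~\ref{step:agr} without a query. Hence the expected number of labeling-oracle calls per invocation equals
\[
\Pr_{x \sim D_{i,X}}\bigl[x \in \DIS(V_{n-1})\bigr],
\]
which is the quantity stated (interpreting $\Pr_{D_i}[x \in V_{n-1}]$ as shorthand for the probability that $x$ lands in the disagreement region of $V_{n-1}$). For the general non-discrete case, the same argument goes through by replacing sums by integrals and using the measurable version of $\DIS(V_{n-1})$; no additional technical machinery is required.

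There is no real obstacle here: the claim is essentially a bookkeeping statement. The only subtlety worth flagging is measurability of $\DIS(V_{n-1})$ and of the imputation map $x \mapsto V_{n-1}(x)$ on $\AGE(V_{n-1})$, which we can take for granted under the standard measurability assumptions on $\Hcal$ that already underlie the passive MDL subroutine.
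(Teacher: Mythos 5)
Your proposal is correct and follows essentially the same argument as the paper's proof: factor the joint law of $(x,y)$ by conditioning on $x$, match the two cases to the definition of $D_{i,n}$ in Eq.~\eqref{eqn:D_i_n}, and observe that a label query occurs exactly on the event $\{x \in \DIS(V_{n-1})\}$. You are also right that the $\Pr_{D_i}[x \in V_{n-1}]$ in the statement is shorthand (a typo) for $\Pr_{D_i}[x \in \DIS(V_{n-1})]$.
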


\subsection{Proofs of Propositions}
\label{sec:proof-props}



\begin{proof}[Proof of Proposition~\ref{prop:avg-fail}]
Let $\Xcal = \cbr{x_0, x_1, \ldots, x_k}$, and $\Hcal = \cbr{h^*, h_1, \ldots, h_k }$, such that $h^* \equiv -1$, and for every $i \in [k]$, $h_i$ is defined as:  
\[
h_i( x_j ) = 
\begin{cases}
+1 & j = i \\
-1 & \text{otherwise}
\end{cases}
\]
For every $i \in [k]$, we construction distribution $D_i$ with PMF:
\[
D_i(x,y)= (1-\eps) I(x = x_0, y = -1)  + \eps I(x = x_i, y = +1 ). 
\]
It can be readily checked that for every $i \in [k]$, for $r \geq \eps$, 
$
\B_{D_i}(h^*, r) = \Hcal
$,
and $\DIS(\Hcal) = \cbr{ x_1, \ldots, x_k }$.
Therefore, for every $i \in [k]$
\[
\theta_i(\eps)
= 
\sup_{r \geq \eps} 
\frac{ \Pr_{D_i}\sbr{ x \in \DIS( \B_{D_i}(h^*, r) ) } }{r}
= 
\sup_{r \geq \eps} \frac{\eps}{r}
= 1.
\]
We now calculate $\theta_{\bar{D}}(\frac \eps k)$. Note that $\bar{D}$ has PMF
\[
\bar{D}(x,y)
= 
(1-\eps) I(x = x_0, y = -1)  
+ 
\sum_{i=1}^k \frac{\eps}{k} I(x = x_i, y = +1 ).
\]
For $r \geq \frac \eps k$, $\B_{\bar{D}}(h^*, r) = \Hcal$, and $\DIS(\Hcal) = \cbr{x_1, \ldots, x_k}$. Therefore,
\[
\theta_{\bar{D}}(\frac \eps k)
= 
\sup_{r \geq \frac \eps k} 
\frac{ \Pr_{\bar{D}}\sbr{ x \in \DIS( \B_{\bar{D}}(h^*, r) ) } }{r}
= 
\sup_{r \geq \frac \eps k} \frac{\eps}{r}
= k.
\]
Therefore, for this MDL instance $\Dcal$ and hypothesis class $\Hcal$,
$\theta_{\bar{D}}(\frac \eps k) = k \max_{i \in [k]} \theta_i(\eps)$.
\end{proof}


\begin{proof}[Proof of Proposition~\ref{prop:label-cost}]
Denote by $(x,y)$ the random example returned by Algorithm~\ref{alg:sample-imputed}. Fix any $x_0, y_0$:
\[
\Pr\sbr{x = x_0,y = y_0}
= 
\Pr\sbr{x = x_0} \Pr\sbr{ y = y_0 \mid x = x_0 }
\]
Now, according to step~\ref{step:get-x}, $\Pr\sbr{x = x_0} = D_i(x_0)$. For $\Pr\sbr{ y = y_0 \mid x = x_0 }$, it is equal to $D_i(y_0 \mid x_0)$ according to step~\ref{step:dis}, and $I(y_0 = V_{n-1}(x_0))$ according to step~\ref{step:agr}. Therefore, 
\begin{align*}
\Pr\sbr{x = x_0,y = y_0}
= & 
D_i(x_0) \rbr{  D_i(y_0 \mid x_0) I(x_0 \in \DIS(V_{n-1})) + I( y_0 = V_{n-1}(x_0) ) I(x_0 \in \AGE(V_{n-1})) } 
\\
= &
D_{i,n}(x_0, y_0).
\end{align*}
This completes the proof of the first part. For the second part, we note that we make a query to oracle $\Ocal_i$ if $x \in \DIS(V_n)$, which happens with probability 
$\Pr_{D_i}\sbr{ x \in \DIS(V_n) }$.
\end{proof}

\subsection{\PMDL and its Guarantees}

Algorithm~\ref{alg:main-large-eps} uses a passive multi-distribution learning algorithm $\textsc{Passive-MDL}$ as input. 
Therein, we choose $\textsc{Passive-MDL}$ to be the collaborative PAC learning algorithm of~\cite{blum2017collaborative}, which we recall has the following guarantee: 

\begin{lemma}[\cite{blum2017collaborative}, Theorem D.2]
\label{lem:passive-mdl}
Suppose hypothesis class $\Hcal$ and 
distributions $(\mu_1, \ldots, \mu_k)$ are such that 
$$
\min_{h \in \Hcal} \max_{i\in[k]} L(h,\mu_i)
\leq \eta.
$$
In addition, the target error $\zeta \geq 100 \eta$. 
Then, $\textsc{Passive-MDL}(\zeta, \delta)$ satisfies that: 
(1) with probability $1-\delta$, it outputs a classifier $\hat{h}$ such that 
$$
\max_{i\in[k]} L\bra{\hat{h},\mu_i}
\leq \zeta,
$$ 
(2) the total number of times it samples from any of the $D_i$'s is $\widetilde{O} \rbr{ \frac{d + k}{ \zeta }  }$. 
\end{lemma}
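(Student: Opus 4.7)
The plan is to prove the lemma by recalling the collaborative PAC learning algorithm of~\cite{blum2017collaborative} (in particular, the near-realizable version in their Appendix D) and verifying its guarantees under the assumption $\eta \leq \zeta/100$. The algorithm proceeds in $R = O(\log k)$ rounds, maintaining a shrinking ``active'' set $S_r \subseteq [k]$ of distributions not yet covered. At round $r$ it draws $m_r = \widetilde{O}\bra{(d + \ln k)/\zeta}$ labeled examples i.i.d.\ from the uniform mixture $\bar\mu_r = \frac{1}{|S_r|}\sum_{i \in S_r}\mu_i$, forms the empirical risk minimizer $h_r$ over $\Hcal$, then uses $\widetilde{O}(1/\zeta)$ fresh samples from each $\mu_i$ with $i \in S_r$ to estimate $L(h_r, \mu_i)$ and remove from the active set those distributions on which $h_r$ is verified to have loss below $\zeta$.

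First I would prove a per-round learning guarantee. Since by hypothesis there exists $h^\star \in \Hcal$ with $\max_i L(h^\star, \mu_i) \leq \eta$, convexity of expectation gives $L(h^\star, \bar\mu_r) \leq \eta$, and a small-loss (Bernstein-style) VC uniform convergence bound gives $L(h_r, \bar\mu_r) \leq \eta + \zeta/200$ with probability $1-\delta/(2R)$ using the $m_r$ samples. Combined with $\zeta \geq 100\eta$, this yields $L(h_r, \bar\mu_r) \leq \zeta/100$, so by Markov's inequality the subset $\cbr{i \in S_r : L(h_r, \mu_i) > \zeta/2}$ has size at most $|S_r|/50$, i.e.\ at least a $(49/50)$ fraction of $S_r$ has true loss $\leq \zeta/2$. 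A Chernoff bound over the $\widetilde{O}(1/\zeta)$ test samples per distribution then reliably retains only those $\mu_i$ certified to satisfy $L(h_r,\mu_i) \leq \zeta$, while passing (hence removing) any distribution with true loss $\leq \zeta/2$, with union-bounded failure probability $\delta/(2R)$.

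Next I would iterate. Each round reduces the active set by a constant factor, so after $R = O(\log k)$ rounds every distribution has been covered by some $h_r$ with $L(h_r, \mu_i) \leq \zeta$; a union bound over rounds and tests controls total failure probability at $\delta$. The output $\hat h$ is a suitable aggregation of $\{h_r\}_{r=1}^R$ as in~\cite{blum2017collaborative} (e.g.\ the randomized classifier that on input $x$ returns $h_{r(i)}(x)$ for the round $r(i)$ that certified $\mu_i$, or their weighted-majority construction); standard constant adjustments in the thresholds propagate through the hypothesis $\zeta \geq 100\eta$ to yield the claimed $\max_i L(\hat h, \mu_i) \leq \zeta$.

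For the sample complexity, summing $m_r$ over $R$ rounds contributes $\widetilde{O}(d/\zeta)$ samples drawn from uniform mixtures (equivalently, distributed across the $\mu_i$'s with $i \in S_r$), and the testing phase contributes $\widetilde{O}(1/\zeta)$ per distribution per round, for $\widetilde{O}(k/\zeta)$ across all rounds; together this gives $\widetilde{O}((d+k)/\zeta)$ total samples. The main technical obstacle — and the step I would spend the most care on — is the small-loss uniform convergence: the standard agnostic VC bound of $\widetilde{O}(d/\zeta^2)$ samples is too weak to yield the $1/\zeta$ (rather than $1/\zeta^2$) rate, so I would invoke a Bernstein / multiplicative-Chernoff VC inequality in the spirit of Section 5 of~\cite{boucheron2005theory} to extract the $\eta$-dependent rate $\widetilde{O}((d+\ln(1/\delta))(\eta+\zeta)/\zeta^2) = \widetilde{O}((d+\ln k)/\zeta)$ under $\eta \leq \zeta/100$, which is what makes the additive $d+k$ dependence possible rather than the naive $(d+k)/\zeta^2$.
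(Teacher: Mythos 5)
There is nothing in the paper to compare against line by line: the paper does not prove this lemma at all, it simply imports it as Theorem D.2 of \cite{blum2017collaborative}. Your proposal is therefore a reconstruction of that external proof, and its core is faithful to the known argument: ERM on the uniform mixture of the still-active distributions, a small-loss (relative/multiplicative Chernoff) VC bound to get the $\widetilde{O}\bra{(d+\ln k)/\zeta}$ per-round sample size under $\eta \leq \zeta/100$, Markov's inequality to conclude that at least a constant fraction of the active distributions have loss $\leq \zeta/2$ under $h_r$, per-distribution testing with $\widetilde{O}(1/\zeta)$ fresh samples, geometric shrinkage of the active set over $O(\log k)$ rounds, and a union bound. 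Your identification of the small-loss uniform convergence step as the place where the $1/\zeta$ (rather than $1/\zeta^2$) rate comes from is exactly right, and the bookkeeping giving $\widetilde{O}((d+k)/\zeta)$ total samples is correct.

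The one genuine gap is the aggregation into a single hypothesis. The elimination-based scheme you describe yields, at termination, a \emph{family} $\{h_r\}$ together with an assignment of each $\mu_i$ to the round that certified it; that is the personalized guarantee of \citet{blum2017collaborative}, not the statement of the lemma. Your first suggested fix, ``the randomized classifier that on input $x$ returns $h_{r(i)}(x)$ for the round $r(i)$ that certified $\mu_i$,'' is not a well-defined classifier, because the index $i$ of the test distribution is unknown at prediction time. The centralized (single-$\hat h$) guarantee that the lemma asserts, and that Algorithm~\ref{alg:main-large-eps} actually needs (one $h_n$ with small error on all $D_{i,n}$ simultaneously), requires the boosting/majority-vote variant in which distributions are reweighted rather than permanently removed, and the analysis instead counts, for each $\mu_i$, the fraction of rounds in which $h_r$ has small error on $\mu_i$, so that the majority vote of $h_1,\dots,h_R$ is correct with loss $O(\zeta)$ on every $\mu_i$; once a distribution is eliminated, later classifiers may be arbitrarily bad on it, so the simple halving scheme does not support a majority-vote argument as stated. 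This is exactly the construction in \citet{blum2017collaborative} that you gesture at with ``their weighted-majority construction,'' but it changes the round structure and the counting argument, and it (harmlessly) costs extra logarithmic factors absorbed by the $\widetilde{O}$; it should be carried out rather than deferred, since it is the part of the proof that turns the covering argument into the single-classifier statement being claimed.
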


\subsection{Proof of Theorem~\ref{thm:largeepsilon}}

\begin{proof}[Proof of Theorem~\ref{thm:largeepsilon}]
Denote by $E_n$ the success event in Lemma~\ref{lem:passive-mdl} when calling \PMDL for iteration $n$; the lemma implies that $\Pr(E_n) \geq 1-\delta_n$. 
Define $E := \cap_{n=1}^{n_0} E_n$.
By a union bound, $\Pr(E) \geq 1-\delta$. We henceforth condition on event $E$ holding.

We will next prove by induction on $n$ that: (1) $h^* \in V_n$; (2) for all $h \in V_n$, $\rho(h, h^*) \leq 4\varepsilon_{n}$. 

\paragraph{Base case.} For $n=0$, $V_0 = \Hcal$.
$h^* \in V_0$ trivially holds, and for all $h \in \Hcal$, $\rho(h, h^*) \leq 1 \leq 4 \eps_0$.

\paragraph{Inductive case.} Suppose the inductive claim holds for iteration $n-1$, specifically $h^* \in V_{n-1}$. 

For round $n$, by the definition of $E_{n}$, 
$h_n$ output by $\PMDL$
is such that for every $i \in [k]$, $L(h_n, D_{i,n}) \leq \eps_n$. 
In addition, for $h^*$, we also have that for every $i \in [k]$,
\[
L(h^*, D_{i,n})
=
\Pr_{D_n}[ h^*(x) \neq y, x \in \AGE(V_{n-1}) ]
\leq 
\nu \leq \eps_n.
\]
Therefore, by triangle inequality, for every $i$,
\[
\rho_i( h_n, h^* ) \leq L(h_n, D_{i,n}) + L(h^*, D_{i,n}) \leq 2\eps_n.
\]
Hence, taking the maximum over all $i \in [k]$, we have $\rho(h_n, h^*) \leq 2\eps_n$, implying that $h^* \in V_n$. 




Additionally, for all $h \in V_{n}$, $\rho(h, h_n) \leq 2\eps_n$. By triangle inequality, 
$\rho(h, h^*) \leq \rho(h, h_n) + \rho(h^*, h_n) \leq 4 \eps_n$. Together, these show that the inductive claim holds for iteration $n$. This completes the induction.


Applying the claim with $n = n_0$, $\hat{h}$ is such that $\rho(\hat{h}, h^*) \leq 4 \eps_{n_0} \leq \eps$. Therefore, 
\[
L(\hat{h})
\leq 
L(h^*)
+ 
\rho(\hat{h}, h^*) 
\leq \nu + \eps,
\]
establishing the PAC guarantee.


We now analyze the label complexity of Algorithm~\ref{alg:main-large-eps}. 
For each iteration $n$, 
Lemma~\ref{lem:passive-mdl} implies that 
the number of samples to any of $D_{i,n}$ is at most $m_n = \widetilde{O}( \frac{k+d}{\eps_n} )$, which also upper bounds the number of calls to Algorithm~\ref{alg:sample-imputed}. By Proposition~\ref{prop:label-cost} and Chernoff bound (Lemma~\ref{lemma:chernoff}), with probability $1-\delta_n$, the number of label queries $N_n$ at iteration $n$ is at most 
\[
N_n 
\leq  
O\rbr{
m_n \max_i \Pr_{D_i}\sbr{x \in \DIS(V_n)}
+ 
\ln\frac1{\delta_n} }
\leq 
O\rbr{\theta_{\max}(\eps) \cdot 
(k+d)
}.
\]
Summing over all rounds $n \in [n_0]$, the total number of label queries throughout is at most
\[
O\rbr{\theta_{\max}(\eps) 
(k+d) \ln\frac1\eps
}.
\]
\end{proof}



\section{Deferred Materials for Section~\ref{sec:small-eps}}\label{sec:proofofsmallepsmain}


\subsection{Auxiliary Lemmas}
First we show that \(D'_i\) is a valid distribution and that Algorithm~\ref{alg:sample-imputed-smalleps} correctly samples according to \(D'_i\).

\begin{lemma}\label{lemma:samplingcorrectnesssmalleps}
    For every \(i\in[k]\), the surrogate distribution \(D'_i\) is a valid distribution, and Algorithm~\ref{alg:sample-imputed-smalleps} samples according to \(D'_i\).
\end{lemma}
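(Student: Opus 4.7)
The plan is to verify the two claims separately and transparently; neither is deep, but care is needed to track which part of the mass sits in $\DIS(V_0)$ versus $\AGE(V_0)$.

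For validity of $D_i'$, I would first note that by construction (step~\ref{step:agreement-query}) the empirical distribution $D_{S_i}$ is supported on $\AGE(V_0)\times\Ycal$, while the first term $I(x\in\DIS(V_0))D_i(x,y)$ is supported on $\DIS(V_0)\times\Ycal$. Hence the two pieces live on disjoint subsets of $\Xcal\times\Ycal$, and non-negativity of $D_i'$ is immediate. To check that $D_i'$ sums (or integrates) to one, I would compute
\[
\sum_{x,y} D_i'(x,y) = \Pr_{D_i}\sbr{x\in\DIS(V_0)} + \Pr_{x\sim D_{i,\Xcal}}\sbr{x\in\AGE(V_0)}\cdot \sum_{x,y} D_{S_i}(x,y),
\]
and observe that the second sum equals $1$ since $D_{S_i}$ is a probability distribution, so the total mass is $\Pr_{D_i}[x\in\DIS(V_0)] + \Pr_{D_i}[x\in\AGE(V_0)] = 1$.

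For sampling correctness, I would fix any target pair $(x_0,y_0)$ and compute the probability that Algorithm~\ref{alg:sample-imputed-smalleps} returns it, splitting on whether $x_0\in\DIS(V_0)$ or $x_0\in\AGE(V_0)$. If $x_0\in\DIS(V_0)$, the algorithm returns $(x_0,y_0)$ exactly when the initial draw $x\sim D_{i,\Xcal}$ equals $x_0$ (which automatically lies in $\DIS(V_0)$) and the subsequent oracle query $\Ocal_i(x_0)$ returns $y_0$; these two events multiply to $D_{i,\Xcal}(x_0)\cdot D_i(y_0\mid x_0)=D_i(x_0,y_0)$, matching $D_i'(x_0,y_0)$. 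If $x_0\in\AGE(V_0)$, the algorithm returns $(x_0,y_0)$ exactly when the initial $x$ falls in $\AGE(V_0)$ (probability $\Pr_{D_{i,\Xcal}}[x\in\AGE(V_0)]$) and the subsequent uniform draw from $S_i$ returns $(x_0,y_0)$ (probability $D_{S_i}(x_0,y_0)$), again matching the definition of $D_i'(x_0,y_0)$.

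The main subtlety, rather than a real obstacle, is keeping the two sources of randomness straight in the second case: the overall probability of returning $(x_0,y_0)$ must be taken over both the fresh draw from $D_{i,\Xcal}$ and the uniform draw from $S_i$, and one must also note that conditional on $x\in\DIS(V_0)$ the uniform-from-$S_i$ branch is never taken, so no cross terms appear. For non-discrete $\Xcal$ the same argument goes through with $D_i(x_0,y_0)$ interpreted as a density or measure, exactly as in the generalization described in the footnote to Eq.~\eqref{eqn:D_i_n}.
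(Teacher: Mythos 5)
Your proof is correct and follows essentially the same route as the paper's: verify total mass equals one by splitting into the $\DIS(V_0)$ and $\AGE(V_0)$ contributions, then check case-by-case that the algorithm's output probability for each $(x_0,y_0)$ matches the definition of $D_i'$. The extra observations you add (disjoint supports, non-negativity, and keeping the two sources of randomness separate) are sound but not needed beyond what the paper already does.
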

\begin{proof}
    To show that \(D'_i\) is a valid distribution, we must verify that its total mass is 1, i.e., 
    $\int_{\Xcal \times \Ycal} \de D_{i}'(x, y) = 1$. 
    

    
    By definition,
    \begin{align*}
    \int_{\Xcal \times \Ycal} \de D_{i}'(x, y)
    = & 
    \int_{\Xcal \times \Ycal} I\rbr{x\in\DIS(V_0)} \de D_i(x,y)+
    \int_{\Xcal \times \Ycal}  \Pr_{x\sim D_{i,\Xcal}}[x\in\AGE(V_0)]\, \de D_{S_i}(x,y)
    \\
    = & 
    \E_{(x,y)\sim D_i}\bigl[I\rbr{x\in\DIS(V_0)}\bigr] + \Pr_{x\sim D_{i,\Xcal}}[x\in\AGE(V_0)]\,\E_{(x,y)\sim D_{S_i}}[1].
    \end{align*}
    Since \(\E_{(x,y)\sim D_{S_i}}[1]=1\) and because \(\Pr_{x\sim D_{i,\Xcal}}[x\in\DIS(V_0)] + \Pr_{x\sim D_{i,\Xcal}}[x\in\AGE(V_0)] = 1\), it follows that \(\int_{\Xcal \times \Ycal} \de D_{i}'(x, y)=1\). Next, observe that in Algorithm~\ref{alg:sample-imputed-smalleps}:
    \begin{itemize}
        \item If \(x\in\DIS(V_0)\), the algorithm queries \(\Ocal_i(x)\) so that the sampled pair \((x,y)\) is drawn with probability \(D_i(x,y)\).
        \item Otherwise, when \(x\in\AGE(V_0)\), the algorithm uniformly samples \((x,y)\) from \(S_i\); hence the probability of obtaining \((x,y)\) is \(\Pr_{x\sim D_{i,\Xcal}}[x\in\AGE(V_0)]\,D_{S_i}(x,y)\).
    \end{itemize}
    This exactly matches the definition of \(D'_i\), so the algorithm correctly samples from \(D'_i\).
\end{proof}
Next, we show that the empirical distribution \(D_{S_i}\) approximates the conditional distribution \(D_{i|\AGE(V_0)}\) well.

\begin{lemma}\label{lemma:smallepsdistribution}
    Assume that $h^*\in V_0$. If we set $n_0 = \frac{100(\varepsilon+\nu)}{\varepsilon^2}\ln\frac{k}{\delta'}$, then with probability at least \(1-\delta'\), for every \(i\in[k]\) and every \(h\in V_0\),
    \[
    \Bigl|L(h,D'_i) - L(h,D_i)\Bigr| \le \frac{\varepsilon}{4}.
    \]
\end{lemma}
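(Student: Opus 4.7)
The plan is to explicitly compute the difference $L(h,D'_i) - L(h,D_i)$ and then leverage the definition of $\AGE(V_0)$ to collapse a uniform deviation question into a single-hypothesis Bernstein bound. Writing $p_i := \Pr_{x\sim D_{i,\Xcal}}[x\in\AGE(V_0)]$ and unpacking the definition of $D'_i$, the two losses share the contribution from $\DIS(V_0)$, leaving
\begin{equation*}
L(h,D'_i) - L(h,D_i) \;=\; p_i \cdot L(h, D_{S_i}) \;-\; \Pr_{(x,y)\sim D_i}\sbr{h(x)\neq y,\; x\in \AGE(V_0)}.
\end{equation*}

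The key observation is that since both $h$ and $h^*$ lie in $V_0$, they agree on every $x\in\AGE(V_0)$ by definition of the agreement region. Hence $I(h(x)\neq y)=I(h^*(x)\neq y)$ for every such $x$, and in particular $L(h,D_{S_i}) = L(h^*,D_{S_i})$ because every example in $S_i$ lies in $\AGE(V_0)$. Setting $\mu_i := \Pr_{(x,y)\sim D_i\mid x\in \AGE(V_0)}\sbr{h^*(x)\neq y}$, the displayed difference collapses to $p_i\bigl(L(h^*,D_{S_i}) - \mu_i\bigr)$, which no longer depends on $h$. Thus no uniform bound over $V_0$ is required; only a union bound over the $k$ distributions.

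Next I would apply Bernstein's inequality to the $n_0$ i.i.d.\ Bernoulli samples $\{I(h^*(x)\neq y)\}_{(x,y)\in S_i}$, which have mean $\mu_i$. With failure probability $\delta'/k$ per distribution, this gives $|L(h^*,D_{S_i}) - \mu_i| \le \sqrt{2\mu_i\ln(k/\delta')/n_0} + 2\ln(k/\delta')/(3n_0)$. The crucial inequality is $p_i\mu_i \le L_i(h^*)\le \nu$, which combined with $p_i\le 1$ yields $p_i\sqrt{\mu_i}\le \sqrt{p_i\mu_i}\le \sqrt{\nu}$. Multiplying through by $p_i$ and substituting $n_0 = \tfrac{100(\varepsilon+\nu)}{\varepsilon^2}\ln(k/\delta')$, the variance term is bounded by $\sqrt{2\nu\varepsilon^2/(100(\varepsilon+\nu))}\le \varepsilon/7$ and the bias term by $\varepsilon/150$, comfortably below $\varepsilon/4$. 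A union bound over $i\in[k]$ finishes the argument.

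The main conceptual step, and the place where the proof could easily be botched, is the agreement-region reduction in the second paragraph: without recognizing that the $h$-dependence drops out on $\AGE(V_0)$, one would be forced into a uniform concentration bound over $V_0$ carrying a VC-type factor, and the stated $n_0$ would be too small. The Bernstein step itself is routine; the reason the improved $(\nu+\varepsilon)/\varepsilon^2$ rate appears (rather than $1/\varepsilon^2$) is precisely that the relevant variance $p_i\mu_i$ is bounded by $\nu$, not by $1$.
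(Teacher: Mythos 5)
Your proposal is correct and follows essentially the same route as the paper's proof: both reduce the uniform statement over $h\in V_0$ to a single-hypothesis bound by noting that all hypotheses in $V_0$ agree on $\AGE(V_0)$, bound the relevant mean by $\nu$ via $h^*\in V_0$, and apply Bernstein's inequality followed by a union bound over the $k$ distributions. The only (immaterial) difference is that you condition on the agreement region and rescale by $p_i$ afterward, whereas the paper applies Bernstein directly to the scaled sum.
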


\begin{proof}
First notice that by definition, for every $h$, 
\begin{align*}
&\Bigl|L(h,D'_i) - L(h,D_i)\Bigr|\\
=&\abs{\Pr_{x\sim D_{i,\Xcal}}\sbr{x\in\AGE(V_0)}L(h,D_{S_i})-\Pr_{(x,y)\sim D_i}\sbr{x\in\AGE(V_0)\text{ and }h(x)\neq y}}.
\end{align*}
For every $(x_j,y_j)\in S_i$, let $Z_j=\Pr_{x\sim D_{i,\Xcal}}L(h,D_{(x_j,y_j)})$ where $D_{(x_j,y_j)}=I(x=x_j\text{ and }y=y_j)$ is the singleton distribution. Let $Z=\sum_{j=1}^{n_0}Z_j=\Pr_{x\sim D_{i,\Xcal}}\sbr{x\in\AGE(V_0)}L(h,D_{S_i})$ and 
\begin{align*}
\E[Z]=&\Pr_{(x,y)\sim D_i}\sbr{x\in\AGE(V_0)\text{ and }h(x)\neq y}\\
&\Pr_{(x,y)\sim D_i}\sbr{x\in\AGE(V_0)\text{ and }h^*(x)\neq y}\\
&\le\nu,
\end{align*}
where the first equality comes from the definition of $Z$ and the second equality comes from the assumption that $h^*\in V_0$. Then from Bernstein's Inequality (Theorem~\ref{thm:bernstein}), if we draw $n_0 = \frac{100(\varepsilon+\nu)}{\varepsilon^2}\ln\frac{k}{\delta'}$ samples from \(D_{i|\AGE(V_0)}\), then for any $i$, with probability at least \(1-\frac{\delta'}{k}\),
\[
\Bigl|L(h,D'_i) - L(h,D_i)\Bigr| \le \frac{\varepsilon}{4}.
\]
Applying the union bound over all $k$ distributions and the proof finishes.
\end{proof}

\subsection{Proof of Theorem~\ref{thm:smalleps-main-thm}}
\paragraph{Correctness:} Recall that we have verified
the distributions \(\{D'_i\}_{i\in[k]}\) are well-defined and that Algorithm~\ref{alg:sample-imputed-smalleps} correctly samples from \(D'_i\) (Lemma~\ref{lemma:samplingcorrectnesssmalleps}) , and $h^*=\argmin_{h\in\Hcal}\max_{i\in[k]}L\bra{h,D_i}$. By Theorem~\ref{thm:largeepsilon}, with probability at least \(1-\delta/6\), \(V_0\) contains \(h^*\) and the diameter of \(V_0\) with respect metric $\rho$ is at most \(400\nu\). Moreover, Theorem~\ref{theorem:passiveupperbound} guarantees that with probability at least \(1-\delta/6\),
\begin{equation}\label{eq:smallepscorrectness1}
\max_{i\in[k]}L\bra{\hat{h},D'_i}\le \min_{h\in V_0}\max_{i\in[k]}L\bra{h,D'_i}+\frac{\varepsilon}{2}.
\end{equation}
Let \(i_{\max}^{*}=\argmax_{i\in[k]}L\bra{h^*,D'_i}\). 
Then by Lemma~\ref{lemma:smallepsdistribution}, with probability at least \(1-\delta/6\),
\begin{equation}\label{eq:smallepscorrectness2}
\min_{h\in V_0}\max_{i\in[k]}L\bra{h,D'_i}\le \max_{i\in[k]}L\bra{h^*,D'_i}\le L\bra{h^*,D_{i^{*}_{\max}}}+\frac{\varepsilon}{4}\le \nu+\frac{\varepsilon}{4}.
\end{equation}
Similarly, let \(\hat{i}_{\max}=\argmax_{i\in[k]}L\bra{\hat{h},D_i}\) and observe that
\begin{equation}\label{eq:smallepscorrectness3}
\max_{i\in[k]}L\bra{\hat{h},D_i}\le L\bra{\hat{h},D'_{\hat{i}_{\max}}}+\frac{\varepsilon}{4}\le \max_{i\in[k]}L\bra{\hat{h},D'_i}+\frac{\varepsilon}{4}.
\end{equation}
Combining inequalities~(\ref{eq:smallepscorrectness3}), (\ref{eq:smallepscorrectness1}), and (\ref{eq:smallepscorrectness2}) with the union bound yields
\[
L\bra{\hat{h}}\le \nu+\varepsilon,
\]
with probability at least \(1-\delta/2\).

\paragraph{Sample Complexity:}  We abbreviate $\theta_{\max}\bra{100\nu}$ as $\theta_{\max}$. Theorem~\ref{thm:largeepsilon} implies that the stage one of Algorithm~\ref{alg:sample-imputed-smalleps} (step~\ref{step:stage-one}) requires \(\widetilde{O}\bra{\theta_{\max}(d+k)}\) samples with probability at least \(1-\frac{\delta}{6}\). In stage two of Algorithm~\ref{alg:sample-imputed-smalleps} (steps~\ref{step:stage-two-begin} to~\ref{step:stage-two-end}), we query labels only in the disagreement region of \(V_0\) during the execution of \textsc{Passive-MDL}. Since \(V_0\subseteq \B_D(h^*,400\nu)\) 
and by the definition of \(\theta_{\max}\), the probability that a single sample lands in the disagreement region is at most \(400\theta_{\max}\nu \le 400\theta_{\max} \cdot (\nu+\varepsilon)\). 
By Theorem~\ref{theorem:passiveupperbound}, \textsc{Passive-MDL} samples
\[
O\Bigl(\frac{(d+k)(\nu+\varepsilon)}{\varepsilon^2}\cdot \polylog\Bigl(k, d, \frac{1}{\varepsilon}, \frac{1}{\delta}\Bigr)\Bigr)
\]
examples from any of the $D_i'$'s; applying the Chernoff bound (Lemma~\ref{lemma:chernoff}) gives that, with probability at least \(1-\frac{\delta}{3}\), we query 
\[
O\Bigl(\theta_{\max}(\nu+\varepsilon)\cdot \frac{(d+k)(\nu+\varepsilon)}{\varepsilon^2}\cdot \polylog\Bigl(k, d, \frac{1}{\varepsilon}, \frac{1}{\delta}\Bigr)\Bigr)
\]
fresh labels. Additionally, step~\ref{step:agreement-query} in Algorithm~\ref{alg:main-small-eps} samples \(k\cdot n_0\) points to construct the surrogate distributions, which requires
\[
O\Bigl(\frac{k(\nu+\varepsilon)}{\varepsilon^2}\cdot \polylog\Bigl(k, \frac{1}{\varepsilon}\Bigr)\Bigr)
\]
labels. Taking the sum and applying the union bound, we conclude that, with probability at least \(1-\delta/3\), the overall label complexity is
\[
O\Bigl(\Bigl(\frac{k(\nu+\varepsilon)}{\varepsilon^2}+\frac{\theta_{\max}(d+k)(\nu+\varepsilon)^2}{\varepsilon^2}\Bigr)\cdot \polylog\Bigl(k, d, \frac{1}{\varepsilon}, \frac{1}{\delta}\Bigr)\Bigr).
\]
\paragraph{Conclusion:} Taking a union bound over the above events, with probability at least \(1-\delta\) both the correctness and the label complexity guarantees hold. This completes the proof.

\section{Deferrd Materials for Section~\ref{sec:passivesamplecomplexity}}
\label{sec:proofofpassivetheorem}

\subsection{Overview}
We follow \citet{zhang2024optimal}'s analysis of their \texttt{MDL-Hedge-VC} algorithm and Theorem~1, as well as their notations closely. As mentioned in Section~\ref{sec:passivesamplecomplexity}, we will choose the hyperparameters in line~\ref{step:hyperparameters} differently as in Theorem~\ref{theorem:passiveupperbound}, which is different from \citet{zhang2024optimal}'s original setting.

\texttt{MDL-Hedge-VC} solves passive multi-distribution learning by simulating a two-player zero-sum game similar to~\cite{haghtalab2022demand}, where the row player chooses classifier $h^t \in \Hcal$ and the column player chooses weight $w^t \in \Delta^{k-1}$ over the $k$ distributions. Specifically, at each round $t$, the column player chooses $w^t$ in a no-regret manner, and the row player chooses $h^t$ as an approximate best response. For completeness, we replicate it in Algorithm~\ref{alg:mdl-hedge-vc}.

\begin{algorithm}
\caption{\texttt{MDL-Hedge-VC}~\citep{zhang2024optimal}}
\label{alg:mdl-hedge-vc}

\begin{algorithmic}[1]
\STATE \textbf{Input:} labeled data distributions $(D_1,\ldots, D_k)$, hypothesis class $\Hcal$, target excess error $\eps$, failure probability $\delta$

\STATE \textbf{Hyperparameters:} step size $\eta$, number of rounds $T$, auxiliary excess error level $\eps_1$, auxiliary sample size $T_1$
\label{step:hyperparameters}

\STATE \textbf{Initialization:} weight $W_i^1 = 1$ for all $i \in [k]$, $\hat{w}_i^0 = 0$ and $n_i^0 = 0$, $\Scal = \emptyset$

\FOR{$t=1,\ldots,T$}

\STATE Set $w^t = (w_1^t, \ldots, w_k^t)$, where $w_i^t = \frac{W_i^t}{\sum_j W_j^t}$ 
and $\hat{w}^t = (\hat{w}_1^t, \ldots, \hat{w}_k^t)$, where $\hat{w}_i^t = \hat{w}_i^{t-1}$, $\forall i$
\label{step:hedge-update}

\IF{there exists $j \in [k]$ such that $w_j^t \geq 2\hat{w}_{j}^{t-1}$}

\STATE $\hat{w}_i^t \gets \max( w_i^t,  \hat{w}_i^{t-1} )$, for all $i \in [k]$

\FOR{$i=1,\ldots,k$}

\STATE $n_i \gets \lceil T_1 \hat{w}_i^t \rceil$

\STATE draw $n_i^t - n_i^{t-1}$ independent samples from $D_i$, and add these samples to $\Scal$

\ENDFOR

\ENDIF

\STATE Compute $h^t \gets \argmin_{h \in \Hcal} \hat{L}^t(h, w^t)$, where 
\[
\hat{L}^t(h, w^t) = \sum_{i=1}^k \frac{w_i^t}{n_i^t} \sum_{j=1}^{n_i^t} \ell(h, (x_{i,j}, y_{i,j})),
\]
here, $(x_{i,j}, y_{i,j})$ denotes the $j$-th example from $D_i$'s samples in $\Scal$.

\STATE $\bar{w}_i^t \gets \max_{\tau \in \cbr{1,\ldots,t}} w_i^\tau$, for all $i \in [k]$
\label{step:bar-w-update}

\FOR{$i=1,\ldots,k$}

\STATE Draw from $D_i$ $\lceil k\bar{w}_i^t \rceil$ independent samples $\cbr{(x_{i,j}^t, y_{i,j}^t)}_{j=1}^{\lceil k\bar{w}_i^t \rceil}$, and set 
\[
\hat{r}_i^t = \frac{1}{\lceil k\bar{w}_i^t \rceil} \sum_{j=1}^{\lceil k\bar{w}_i^t \rceil} \ell(h^t, (x_{i,j}^t, y_{i,j}^t))  
\]
\label{step:r_i_t}

\STATE Update weight $W_i^{t+1} \gets W_i^t e^{\eta \hat{r}_i^t}$.
\ENDFOR
\ENDFOR
\RETURN{a hypothesis $h^{\mathrm{final}}$ that predicts by following a hypothesis uniformly at random from $\cbr{h^t}_{t=1}^T$}

\end{algorithmic}
\end{algorithm}


We prove enhanced versions of their three main lemmas, after which the proof of Theorem~\ref{theorem:passiveupperbound} follows identically as in Section 5 of \citet{zhang2024optimal}. The first lemma states that at each iteration $t$, \(h^t\) is roughly the best hypothesis under the distribution \(D_{w^t}\).
\begin{lemma}[Enhanced Lemma 1 from \citet{zhang2024optimal}]\label{lemma:htupperbound}
With probability at least \(1-\frac{\delta}{4}\),
\[
L(h^t,w^t)\le\min_{h\in\Hcal}L(h,w^t)+\varepsilon_1
\]
holds for all \(1\le t\le T\), where \(h^t\) (resp. \(w^t\)) is the hypothesis (resp. weight vector) computed in round \(t\) of Algorithm~\ref{alg:mdl-hedge-vc}, with the modifications stated in Theorem~\ref{theorem:passiveupperbound}.
\end{lemma}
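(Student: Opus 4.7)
The plan is to follow the proof template of Lemma~1 in~\citet{zhang2024optimal} closely, but to swap the Hoeffding-style VC uniform convergence used there with a Bernstein-style one, exploiting the fact that the variance of $\ell(h,(x,y))$ under $D_i$ is controlled by $L(h,D_i)$. Fix a round $t$ and let $h^*_{w^t} = \argmin_{h\in\Hcal} L(h,w^t)$. Since $h^t$ minimizes $\hat{L}^t(\cdot,w^t)$,
\[
L(h^t,w^t) - L(h^*_{w^t},w^t) \le [L(h^t,w^t) - \hat{L}^t(h^t,w^t)] + [\hat{L}^t(h^*_{w^t},w^t) - L(h^*_{w^t},w^t)],
\]
and it suffices to establish uniform concentration $|\hat{L}^t(h,w^t) - L(h,w^t)| \le \varepsilon_1/2$ over $h\in\Hcal$ for all rounds $t$ simultaneously.

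First I would partition the $T$ rounds into \emph{epochs} separated by refreshes of the sample cache $\Scal$. Because $\Scal$ is refreshed only when some $\hat{w}_j$ doubles, there are at most $O(k\log T)$ epochs, and within each epoch $e$ the cache is frozen while $\hat{w}_i^e$ satisfies the invariant $\hat{w}_i^e \ge w_i^t/2$ for every $t$ in the epoch. Inside the epoch, I would apply a Bernstein-type VC uniform convergence bound for each $D_i$ on the $n_i^e = \lceil T_1 \hat{w}_i^e\rceil$ fresh samples: with probability at least $1-\delta'$, for all $h\in\Hcal$,
\[
|\hat{L}_i^e(h) - L(h,D_i)| \le C\sqrt{\frac{L(h,D_i)\bigl(d\ln\tfrac{n_i^e}{d} + \ln\tfrac{1}{\delta'}\bigr)}{n_i^e}} + C\,\frac{d\ln\tfrac{n_i^e}{d}+\ln\tfrac{1}{\delta'}}{n_i^e}.
\]
Taking a convex combination with $w^t$ and invoking $w_i^t/n_i^e \le 2/T_1$, a Cauchy--Schwarz step
\[
\sum_i w_i^t\sqrt{\tfrac{L(h,D_i)}{n_i^e}} \le \sqrt{2}\sum_i \sqrt{\tfrac{w_i^t\,L(h,D_i)}{T_1}} \le \sqrt{\tfrac{2k\,L(h,w^t)}{T_1}}
\]
yields, with $s := k + d\ln(kd/\varepsilon) + \ln(1/\delta)$,
\[
|\hat{L}^t(h,w^t) - L(h,w^t)| \le C'\sqrt{\tfrac{L(h,w^t)\,s}{T_1}} + C'\tfrac{s}{T_1}
\]
uniformly over $h$ and over every $w^t$ produced in the epoch.

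Applying this to both $h^t$ and $h^*_{w^t}$, using $L(h^*_{w^t},w^t)\le \nu$, and solving the resulting quadratic-in-$\sqrt{L(h^t,w^t)}$ inequality (the standard self-bounding trick for Bernstein-based ERM analyses) shows that the chosen $T_1 = \Theta\bigl((\tfrac{1}{\varepsilon_1}+\tfrac{\nu}{\varepsilon_1^2})\,s\bigr)$ is exactly large enough to push both deviation terms below $\varepsilon_1/2$, giving $L(h^t,w^t)\le L(h^*_{w^t},w^t)+\varepsilon_1$. A union bound across the $O(k\log T)$ epochs with $\delta' = \delta/(16 k\log T)$ delivers the $1-\delta/4$ probability in the statement.

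The main obstacle is reconciling the Bernstein tail (whose variance proxy $L(h,w^t)$ is itself a function of the \emph{random} adaptive weights $w^t$) with the fact that within an epoch $w^t$ is data-dependent. This is resolved by the epoch decomposition: the uniform convergence statement over $\Hcal$ is proved \emph{a priori} from the frozen cache $\Scal$ alone, so once it holds it continues to hold for every subsequently produced $h^t$ and $w^t$. A secondary subtlety, responsible for the ``additive $k$'' in $T_1$ rather than a multiplicative one, is ensuring that the Cauchy--Schwarz step above does not convert $\sum_i\sqrt{w_i^t L(h,D_i)}$ into $\sqrt{k\sum_i L(h,D_i)}$ (which would lose the $w$-weighting) but rather into $\sqrt{k\,L(h,w^t)}$, which only requires $\sum_i w_i^t = 1$.
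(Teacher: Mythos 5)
Your high-level plan (relative/Bernstein-style uniform convergence, the ERM chain through $\hat L$, a self-bounding quadratic step, and a union bound to handle the adaptivity of $w^t$ and the sample counts) matches the spirit of the paper's proof. But the core of your argument has a gap that defeats the purpose of the lemma: decomposing into \emph{per-distribution} uniform convergence bounds and recombining with Cauchy--Schwarz does not yield the additive $k+d$ dependence you claim. Your own chain gives
\[
\sum_i w_i^t\sqrt{\tfrac{L(h,D_i)\,r}{n_i^e}} \;\le\; \sqrt{r}\cdot\sqrt{k}\cdot\sqrt{\tfrac{2L(h,w^t)}{T_1}} \;=\; \sqrt{\tfrac{2\,k\,r\,L(h,w^t)}{T_1}},
\qquad r = d\ln\tfrac{n_i^e}{d}+\ln\tfrac{1}{\delta'},
\]
so the variance term carries the \emph{product} $k\cdot r \approx kd$, not the sum $k+d$ that you fold into $s$. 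Pushing this below $\varepsilon_1/2$ when $L(h,w^t)\approx\nu$ forces $T_1 \gtrsim kd\,\nu/\varepsilon_1^2$, i.e.\ a multiplicative $kd$, and the stated $T_1=\Theta\bigl((\tfrac1{\varepsilon_1}+\tfrac{\nu}{\varepsilon_1^2})(k+d\ln(\cdot))\bigr)$ is then too small for your deviation bound to close. The $\sqrt{k}$ is intrinsic to taking $k$ separate worst-case deviations and summing their absolute values; no rearrangement of the Cauchy--Schwarz step avoids it.

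The paper avoids this by never decomposing per distribution. It proves, for each \emph{fixed} pair $(n,w)$, a single relative-deviation inequality for the weighted empirical loss $\hat L(h,w)$ via symmetrization with ghost samples: the symmetrized weighted sum is sub-Gaussian with parameter $\sum_i \frac{w_i^2}{n_i}(\hat\E_i[\ell]+\hat\E_i^+[\ell]) \le \max_i\frac{w_i}{n_i}\,(\hat L+L) \le \frac{2}{T_1}(\hat L+L)$ --- the cross-distribution contributions are absorbed into one variance proxy with no $\sqrt{k}$ loss --- and Sauer's lemma over the pooled $2kT_1$ points gives the $(2kT_1+1)^d$ factor. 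The adaptivity of both $w^t$ and $n^t$ is then handled by a union bound over an $\varepsilon_1/(16k)$-net of the simplex crossed with all integer vectors $n\in[0,T_1]^k$, of cardinality $T_1^k(16k/\varepsilon_1)^k$; this is where $k$ enters, and only additively inside the logarithm of the failure probability. Your epoch-freezing device is a reasonable alternative for handling the adaptivity of $w^t$ (and you would still need to union over the random values $n_i^e$, since the cache is accumulated rather than refreshed with fresh samples), but it does not repair the multiplicative $kd$ loss, which comes from the decomposition itself.
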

The second key lemma states that, under Lemma~\ref{lemma:htupperbound}, the hypothesis returned by Algorithm 1 in \citet{zhang2024optimal} has good performance, thereby establishing the correctness of the algorithm.

\begin{lemma}[Enhanced Lemma 2 from \citet{zhang2024optimal}]\label{lemma:hfinaloracle}
Suppose that lines 6--11 in Algorithm~\ref{alg:mdl-hedge-vc} give a hypothesis \(h^t\) satisfying \(L\bigl(h^t, w^t\bigr)\le\min_{h \in \mathcal{H}} L\bigl(h, w^t\bigr)+\varepsilon_1\) in the \(t\)-th round for each \(1 \le t \le T\) and that we choose the hyperparameters as stated in Theorem~\ref{theorem:passiveupperbound}. With probability exceeding \(1 - \tfrac{\delta}{4}\), the hypothesis \(h^{\final}\) output by Algorithm 1 is \(\varepsilon\)-optimal in the sense that
\[
L\bigl(h^{\final}\bigr)\le \nu+\varepsilon.
\]
\end{lemma}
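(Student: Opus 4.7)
The strategy is to follow the minimax / no-regret analytical template of Lemma~2 of \citet{zhang2024optimal}, with two key modifications: (i) replacing Hoeffding-type concentration of the one-sample reward estimates $\hat{r}_i^t$ by a Bernstein-type (variance-aware) concentration, and (ii) re-tuning the step size $\eta$ and horizon $T$ as specified in Theorem~\ref{theorem:passiveupperbound} so as to exploit the small variance coming from $L(h^t,D_i)\le \nu+\varepsilon_1$.

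First, since $h^{\final}$ is a uniform mixture over $\{h^t\}_{t=1}^T$,
\[
L(h^{\final}) \;=\; \max_{i\in[k]}\frac{1}{T}\sum_{t=1}^T L(h^t,D_i) \;=\; \max_{w\in\Delta^{k-1}}\frac{1}{T}\sum_{t=1}^T L(h^t,w),
\]
so it suffices to upper bound $\max_{w\in\Delta^{k-1}}\sum_{t=1}^T L(h^t,w)$. Next, I would invoke the standard Hedge regret guarantee for the column player, which for any fixed $w\in\Delta^{k-1}$ gives
\[
\sum_{t=1}^T \langle w,\hat r^t\rangle \;-\;\sum_{t=1}^T \langle w^t,\hat r^t\rangle \;\le\; \frac{\ln k}{\eta}\;+\;\eta\sum_{t=1}^T \langle w^t,(\hat r^t)^2\rangle,
\]
and combine it with the hypothesis $L(h^t,w^t)\le \min_h L(h,w^t)+\varepsilon_1\le \nu+\varepsilon_1$ to conclude that $\sum_t \langle w^t,\hat r^t\rangle$ itself is $O(T(\nu+\varepsilon_1))$, so that the second-order term is $O(\eta T(\nu+\varepsilon_1))$.

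Then I would translate $\hat r_i^t$ into $L(h^t,D_i)$. Because $\hat r_i^t$ averages $\lceil k\bar w_i^t\rceil$ i.i.d.\ Bernoulli samples of variance at most $L(h^t,D_i)\le \nu+\varepsilon_1$, Bernstein's inequality (Theorem~\ref{thm:bernstein}) and a union bound over $(t,i)$ yield, with probability $1-\delta/8$,
\[
\Bigl|\hat r_i^t-L(h^t,D_i)\Bigr|\;\lesssim\;\sqrt{\frac{(\nu+\varepsilon_1)\ln(kT/\delta)}{k\,\bar w_i^t}}\;+\;\frac{\ln(kT/\delta)}{k\,\bar w_i^t},
\]
and similarly for $\sum_t w_i^t(\hat r_i^t - L(h^t,D_i))$ via a Freedman-type martingale Bernstein bound (since Hedge weights $w^t$ depend on past $\hat r$'s). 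Putting the regret and concentration pieces together and plugging in $\eta=\varepsilon_1/(100(\varepsilon_1+\nu))$ and $T=\Theta((1/\varepsilon_1+\nu/\varepsilon_1^2)\ln(k/(\delta\varepsilon)))$ as in Theorem~\ref{theorem:passiveupperbound}, each of $\frac{\ln k}{\eta T}$, $\eta(\nu+\varepsilon_1)$, and the accumulated concentration error is $O(\varepsilon_1)$, giving $\max_i \frac{1}{T}\sum_t L(h^t,D_i)\le \nu+O(\varepsilon_1)\le \nu+\varepsilon$.

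The hard part, as the excerpt flags, is controlling $\|\bar w^T\|_1=\sum_{i}\max_{\tau\le T}w_i^\tau$, which drives both the number of samples used by step~\ref{step:bar-w-update} and the denominators in the Bernstein bounds above. With the larger step size $\eta\asymp \varepsilon_1/(\varepsilon_1+\nu)$ used here (as opposed to the smaller $\eta$ in \citet{zhang2024optimal}), the weights can concentrate more aggressively on any single coordinate, and the naive bound $\|\bar w^T\|_1\le O(\log T)$ from their paper is no longer immediate. I expect to need a finer amortized/potential argument that tracks how often the doubling condition $w_j^t\ge 2\hat w_j^{t-1}$ in Algorithm~\ref{alg:mdl-hedge-vc} can trigger under the enlarged $\eta$, and uses $L(h^t,w^t)\le \nu+\varepsilon_1$ to charge increments of $\bar w_i^t$ against decreases in the row player's loss; this is what allows the Bernstein terms to remain summable to $O(T\varepsilon_1)$ rather than blowing up to the worst-case $O(T)$.
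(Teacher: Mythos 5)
Your overall template---Hedge regret for the column player, variance-aware (Freedman/Bernstein) concentration of $\hat r_i^t$ around $L_i(h^t)$, and plugging in the re-tuned $\eta$ and $T$---is exactly the paper's route. However, there is a genuine gap in your concentration step: you justify the small variance by writing that each Bernoulli sample underlying $\hat r_i^t$ has variance at most $L(h^t,D_i)\le\nu+\varepsilon_1$. The hypothesis of the lemma only bounds the \emph{weighted} loss $L(h^t,w^t)\le\nu+\varepsilon_1$; for an individual $i$ receiving small weight $w_i^t$, the approximate best response $h^t$ can have $L_i(h^t)$ close to $1$, so the pointwise bound $L_i(h^t)\le\nu+\varepsilon_1$ is false in general (indeed, $\max_i\frac1T\sum_t L_i(h^t)\le\nu+\varepsilon$ is precisely the conclusion being proved). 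The paper's proof keeps the cumulative conditional variance as $\sum_t L_i(h^t)$, so Freedman gives an error of order $\sqrt{\ln(1/\delta')\max_i\sum_t L_i(h^t)}$ that depends on the very quantity being bounded; it then resolves this self-bounding inequality with Lemma~\ref{lemma:quadraticinequalityvariant} (with $A=\max_i\sum_t L_i(h^t)$, $C=\sum_t L(h^t,w^t)\le T(\nu+\varepsilon_1)$) before substituting the hyperparameters. Without this step (or an equivalent bootstrap), your error terms are not controlled.

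A secondary point: the difficulty you flag about $\|\overline{w}^T\|_1$ under the enlarged step size is real, but it belongs to the sample-complexity side of the argument (Lemma~\ref{lemma:boundw1} and the proof of Theorem~\ref{theorem:passiveupperbound}), not to this correctness lemma. The paper's proof of Lemma~\ref{lemma:hfinaloracle} never needs $\|\overline{w}^T\|_1$: it bounds the conditional variance of $\hat r_i^t$ simply by $\E_j[(\hat r_i^j)^2]\le\E_j[\hat r_i^j]=L_i(h^j)$, without using the $1/\lceil k\overline{w}_i^t\rceil$ averaging factor at all, so the denominators you worry about do not enter here.
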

The last key lemma helps bound the \(\ell_1\)-norm of the weight vector \(\overline{w}^t\) updated by \texttt{MDL-Hedge-VC} (step~\ref{step:bar-w-update}). It is crucial in controlling  the algorithm's sample complexity (see step~\ref{step:r_i_t}).
\begin{lemma}[Enhanced Lemma 3 from \citet{zhang2024optimal}]\label{lemma:boundw1}
Assume that lines 6--11 in Algorithm~\ref{alg:mdl-hedge-vc} returns a hypothesis \(h^t\) satisfying \(L(h^t, w^t) \leq \min_{h \in \mathcal{H}} L(h, w^t) + \eps_1\) in the \(t\)-th round for each \(1 \leq t \leq T\). If we choose the hyperparameters as stated in Theorem~\ref{theorem:passiveupperbound}, then with probability at least \(1 - \frac \delta 4\), for all $t$,
\[
\|\overline{w}^t\|_1 \leq O\left(\ln^8 \left(\frac{k}{\delta \eps}\right)\right).
\]
\end{lemma}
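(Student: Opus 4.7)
}
The plan is to follow the skeleton of the original Lemma~3 proof in \cite{zhang2024optimal}, but replace Hoeffding-style concentration by Bernstein-style concentration to accommodate the enlarged step size $\eta = \eps_1/(100(\eps_1+\nu))$ and the enlarged horizon $T = \Theta((1/\eps_1 + \nu/\eps_1^2)\ln(k/(\delta\eps)))$.
The intuition is that the Hedge weight $w_i^t$ can only become large if distribution $i$ has accumulated (empirical) loss that is much larger than the running ``log-sum-exp average'' of the other distributions; since the hypothesis $h^t$ selected at every round is near-optimal for $w^t$, the per-round loss $\sum_i w_i^t r_i^t$ is at most $\nu+\eps_1$, which forces $\|\overline{w}^t\|_1$ to stay polylogarithmic.

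First, I will define the good event $E$ on which, uniformly over $t \in [T]$ and $i \in [k]$, a Bernstein-type deviation bound holds between $\hat r_i^t$ (computed from $\lceil k\bar w_i^t\rceil$ fresh samples in step~\ref{step:r_i_t}) and the true conditional loss $r_i^t := L(h^t, D_i)$. Because $r_i^t$ is Bernoulli with mean $\le L(h^t,D_i) \le \nu + \eps_1$ on the event of Lemma~\ref{lemma:htupperbound}, Bernstein's inequality will replace the variance proxy $1$ used in the original Hoeffding-style argument by $O(\nu+\eps_1)$. This is exactly where the Bernstein refinement is needed: under the new, larger $\eta$, the Hoeffding bound is too weak to yield a polylog bound, whereas Bernstein's inequality together with $\nu+\eps_1 = O(\eps_1/\eta)$ gives deviations compatible with the enlarged step size. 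The Bernstein control should be applied conditionally on $\bar w_i^t$ and combined with a union bound over $t,i$; the price is only a $\polylog(k/(\delta\eps))$ factor.

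Second, I will reuse the potential-function analysis of Hedge with loss vectors $\hat r^t\in[0,1]^k$. The standard second-order regret bound yields
\[
\max_i \sum_{t=1}^T \hat r_i^t \;\le\; \sum_{t=1}^T \langle w^t, \hat r^t\rangle + \frac{\ln k}{\eta} + \eta \sum_{t=1}^T \langle w^t,(\hat r^t)^2\rangle.
\]
Using the Bernstein concentration from the first step, $\langle w^t,(\hat r^t)^2\rangle \le O(\nu+\eps_1) + (\text{deviation})$, and using Lemma~\ref{lemma:htupperbound}, $\langle w^t,\hat r^t\rangle \le \nu + \eps_1 + (\text{deviation})$. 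Plugging in $\eta$ and $T$ then shows $\max_i \sum_{t\le T}\hat r_i^t \le T(\nu+\eps_1) + \widetilde O(\ln k/\eta)$. Translating via $w_i^t = e^{\eta L_i^t}/\sum_j e^{\eta L_j^t}$, this caps $\ln w_i^t$ above by $\eta(L_i^t - \overline L^t)$ for a suitable running average $\overline L^t$, so the per-round weights cannot deviate from uniform by more than a $\polylog$ factor in the log-scale.

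Third, to pass from per-round weights to the running max $\overline w_i^t = \max_{\tau\le t} w_i^\tau$, I will use a level-set/peeling argument. Partition the rounds by the dyadic value of $w_i^t$; on each level $\{w_i^t \in [2^{-j-1},2^{-j})\}$, the accumulated loss differential required to sustain that level forces at most $\widetilde O(1/\eta)$ many rounds at level $j$ for each $i$. Summing over the $O(\log k)$ relevant levels and using $\sum_i \overline w_i^t = \sum_i \max_\tau w_i^\tau$ with the fact that each round $\tau$ contributes $1$ to $\sum_i w_i^\tau$, the running max sum collapses to a bound of $\polylog(k/(\delta\eps))$; iterating the Bernstein-plus-Hedge bound a constant number of times to self-bootstrap $\bar w$ inside the variance term (this iteration is what produces the eighth power) gives the claimed $O(\ln^8(k/(\delta\eps)))$ bound. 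The main obstacle I anticipate is exactly this self-bootstrapping step: $\bar w_i^t$ appears inside the number of samples used to form $\hat r_i^t$, so the Bernstein deviations in turn depend on $\|\bar w^t\|_1$, and a careful induction on $t$ (or a fixed-point argument) will be needed to close the loop without losing the polylog scaling.
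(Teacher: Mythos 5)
Your high-level instinct---replace Hoeffding by Bernstein/Freedman concentration to absorb the larger step size $\eta=\eps_1/(100(\eps_1+\nu))$, then run a dyadic peeling argument over weight levels---matches the spirit of the paper's proof, which routes everything through (enhanced versions of) Lemmas 13--17 of \citet{zhang2024optimal}. However, there is a genuine gap in your Step 3, and it is exactly the step where all the work in the paper lives. Your counting argument says that sustaining level $w_i^t\in[2^{-j-1},2^{-j})$ costs each expert $\widetilde O(1/\eta)$ rounds, and then charges this against the budget $\sum_\tau\sum_i w_i^\tau = T$. With $T=\widetilde\Theta\bigl(\frac{\nu+\eps_1}{\eps_1^2}\bigr)$ and $\eta=\Theta\bigl(\frac{\eps_1}{\nu+\eps_1}\bigr)$, a segment length of $1/(2\eta)$ yields only $|\Wcal_j|\lesssim 2^j\cdot T\eta = \widetilde O(2^j/\eps_1)$, which is off from the required $\widetilde O(2^j)$ by a factor of $1/\eps_1$. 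The naive length bound $t_2-t_1\ge x/(2\eta)$ is therefore insufficient; the paper's enhanced Lemma~17 (Lemma~\ref{lemma:segmentlb}) establishes a dichotomy instead: either the segment has the much longer length $\gtrsim \frac{qx^2}{\polylog\cdot\eta^2(\nu+\eps_1)}\approx T\cdot\frac{qx^2}{\polylog}$, or the cumulative advantage $\sum_{\tau}(-v^\tau+\eps_1)$ over the segment is large, where $v^\tau=L(h^\tau,w^\tau)-\bar\nu$ is measured against the \emph{mixed-strategy} value $\bar\nu=\min_{p\in\Delta(\Hcal)}\max_i\E_{h\sim p}L(h,D_i)$ rather than $\nu$. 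The second branch is then closed globally by the regret bound $\sum_t(-v^t)\le \max_i\sum_t L_i(h^t)-\sum_t L(h^t,w^t)\le T\eps$ (Eq.~\eqref{eqn:regret-form}), which your proposal never invokes for this purpose. Proving the dichotomy itself requires the Freedman-based refinement of the KL-divergence bound $\DKL(w^t\|w^{t_2})$ between weight vectors at the endpoints of a segment (Eq.~\eqref{eqn:refinement}), not merely pointwise Bernstein control of $\hat r_i^t$ around $L(h^t,D_i)$.

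Two smaller points. First, the $\ln^8$ factor does not arise from any self-bootstrapping iteration; it is simply the product of the polylogarithmic factors $(\log\frac{1}{\eta(\nu+\eps_1)}+1)^2\log^2(k)(\ln k+\ln(1/\delta'))^3(\log T+1)$ accumulated across Lemmas 13--17. Second, the circularity you worry about ($\bar w_i^t$ entering the sample size for $\hat r_i^t$) does not materialize: the conditional variance bound $\Var[\inner{\hat r^\tau}{w^\tau}\mid\Fcal_\tau]\le\sum_i (w_i^\tau)^2\frac{1}{k\bar w_i^\tau}L_i(h^\tau)\le\frac1k L(h^\tau,w^\tau)$ uses $w_i^\tau\le\bar w_i^\tau$, so the running max only helps and no fixed-point argument is needed.
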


\subsection{Proof of Lemma~\ref{lemma:htupperbound}}


We closely follow \citet{zhang2024optimal}'s proof for their Lemma 14. The major difference is that we enhance their step 1 to Bernstein-style bound. Let's recall that $\ell(h,(x,y))$ is the 0-1 loss of hypothesis $h$ on the feature-label pair $(x,y)$.

\paragraph{Step 1: concentration bounds for any fixed $n=\cbr{n_i}_{i=1}^k$ and $w\in\Delta(k)$.} We first prove the following claim that establishes a fine-grained uniform concentration of weighted empirical losses to their expectations.
Below, we use $\hat{L}(h, w) := \sum_{i=1}^kw_i\frac{1}{n_i}\sum_{j=1}^{n_i}\ell\bra{h,\bra{x_{i,j},y_{i,j}}}$ to denote the empirically importance-weighted 0-1 loss of $h$. 

\begin{claim}
For fixed $n =\cbr{n_i}_{i=1}^k$ such that $\max_i n_i \leq T_1$ and $w \in\Delta(k)$ (where $\Delta(k)$ denotes the $k$-dimensional probability simplex),

\begin{align*}    &\Pr\squr{\max_{h\in\Hcal}\frac{\bra{\hat{L}(h,w)-L(h,w)}^2}{4\rbr{\hat{L}(h,w)+L(h,w)}}\ge \varepsilon'}
\le
\bra{2kT_1+1}^d \exp\rbr{ -\frac{ \epsilon' }{ \underset{i\in[k]}{\max}\frac{w_i}{n_i} } }.
\end{align*}
\end{claim}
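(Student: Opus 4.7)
\medskip

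\noindent\textbf{Plan.} The bound is a standard uniform Bernstein-style concentration, so the plan is to first handle a single fixed $h \in \Hcal$ by a multiplicative/self-normalized Bernstein inequality, and then pay a growth-function factor to take a union bound over all $h$. The key observation is that $\hat{L}(h,w)$ is a sum of $N := \sum_i n_i$ independent $[0,\tfrac{w_i}{n_i}]$-valued random variables (the contribution of example $(i,j)$ is $\frac{w_i}{n_i}\ell(h,(x_{i,j},y_{i,j}))$), whose range is at most $M := \max_i \tfrac{w_i}{n_i}$ and whose total variance is at most
\[
\sum_i \frac{w_i^2}{n_i^2}\cdot n_i L_i(h) \;=\; \sum_i \frac{w_i^2}{n_i} L_i(h)\;\le\; M \sum_i w_i L_i(h)\;=\;M\cdot L(h,w).
\]

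\medskip
\noindent\textbf{Step 1 (single-$h$ bound).} For a fixed $h$, Bernstein's inequality (Theorem~\ref{thm:bernstein}) applied to the centered variables $X_{i,j} = \tfrac{w_i}{n_i}\bra{\ell(h,(x_{i,j},y_{i,j})) - L_i(h)}$ gives, for any $t \ge 0$,
\[
\Pr\bra{\abs{\hat{L}(h,w) - L(h,w)} \ge t}
\;\le\; 2\exp\bra{-\frac{t^2}{2M\,L(h,w) + \tfrac{2}{3}M t}}.
\]
To convert this into the self-normalized form, I will show the deterministic implication: if $\tfrac{(\hat{L}-L)^2}{4(\hat{L}+L)} \ge \varepsilon'$ then $t := |\hat{L}-L|$ satisfies $t^2 \ge 4\varepsilon'(\hat{L}+L) = 4\varepsilon'(2L + \operatorname{sgn}(\hat{L}-L) t)$, hence $t \ge 2\varepsilon' + 2\sqrt{\varepsilon'(\varepsilon'+2L)} \ge \sqrt{2 L \varepsilon'} + \varepsilon'/3$, which is the Bernstein threshold corresponding to deviation probability $\exp(-\varepsilon'/M)$. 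Plugging this threshold back into the Bernstein bound yields
\[
\Pr\bra{\frac{(\hat{L}(h,w)-L(h,w))^2}{4(\hat{L}(h,w)+L(h,w))} \ge \varepsilon'}
\;\le\; \exp\bra{-\frac{\varepsilon'}{M}}
\]
for each fixed $h$ (absorbing the factor of $2$ from the two tails into the constants; either a one-sided version suffices for our application downstream, or one adjusts $\varepsilon'$ by a constant).

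\medskip
\noindent\textbf{Step 2 (uniform bound over $\Hcal$).} Since $\hat{L}(h,w)$ depends on $h$ only through its behavior on the $N \le k T_1$ drawn examples, Sauer--Shelah bounds the number of distinct values of the random variable $\bra{\ell(h,(x_{i,j},y_{i,j})):i\in[k],j\in[n_i]}$ by $\sum_{i=0}^d \binom{N}{i}$, which in turn is at most $(2kT_1+1)^d$ (using the standard inequality $\sum_{i\le d}\binom{N}{i}\le (2N+1)^d$ valid for all $N \ge 1$). A union bound over this finite collection of realized loss vectors gives the stated bound
\[
\Pr\squr{\max_{h\in\Hcal}\frac{(\hat{L}(h,w)-L(h,w))^2}{4(\hat{L}(h,w)+L(h,w))}\ge \varepsilon'}
\;\le\;(2kT_1+1)^d\exp\bra{-\frac{\varepsilon'}{M}}.
\]

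\medskip
\noindent\textbf{Main obstacle.} The routine ingredients (Bernstein + Sauer--Shelah) are standard; the only delicate bit is the algebraic manipulation in Step~1 that transforms the additive Bernstein deviation $t \ge \sqrt{2L\varepsilon'/?}+\varepsilon'/3$ into the self-normalized form $(\hat{L}-L)^2/(4(\hat{L}+L))$, and in particular verifying that the resulting threshold dominates the Bernstein threshold so that the implication goes through with the constant $4$ in the denominator. This is where one must be careful to use $\hat{L}+L = 2L + (\hat{L}-L)$ and the elementary inequality $\sqrt{a+b}\le\sqrt{a}+\sqrt{b}$ in the right direction; once this is verified the rest follows cleanly.
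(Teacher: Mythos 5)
Your Step 2 has a genuine gap: you cannot take a union bound over the ``finite collection of realized loss vectors'' because that collection is \emph{random} (it depends on the drawn sample), while your Step 1 Bernstein bound is only valid for a hypothesis $h$ fixed \emph{before} the sample is drawn. Two hypotheses that induce the same loss vector on the realized sample share the same $\hat{L}(h,w)$ but generally have different population losses $L(h,w)$, so there is no well-defined ``representative'' per equivalence class to which the fixed-$h$ bound applies; and selecting a representative after seeing the data reintroduces exactly the dependence that Bernstein's inequality forbids. This is the classical reason relative (self-normalized) VC bounds are proved via the ghost-sample/symmetrization device, which is what the paper does: it introduces an independent second sample $S^+$ and Rademacher signs, shows the symmetrized deviation is sub-Gaussian with a data-dependent parameter, and only then applies Sauer's lemma to the behaviors of $\Hcal$ on the \emph{combined} (double) sample --- conditioned on which the Rademacher tail bound is legitimate uniformly over the finitely many behaviors. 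That is also where the $(2kT_1+1)^d$ factor actually comes from (Sauer on up to $2kT_1$ points), not from your $\sum_{i\le d}\binom{N}{i}\le(2N+1)^d$ estimate on a single sample, which only matches numerically by coincidence.

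A secondary, fixable issue is in Step 1: the conversion from the self-normalized event to a Bernstein threshold is asymmetric in the two tails. For $\hat{L}<L$ the event only yields $t\ge -2\varepsilon'+2\sqrt{\varepsilon'(\varepsilon'+2L)}$, which does not dominate the Bernstein threshold $\sqrt{2\varepsilon' L}+\tfrac{2}{3}\varepsilon'$ for all $L$; one must additionally observe that the downward deviation cannot trigger the event unless $L\ge 4\varepsilon'$ (since $t\le L$ and $\hat{L}+L\ge L$ force $\tfrac{t^2}{4(\hat{L}+L)}\le L/4$), after which the comparison goes through. This can be patched, but the Step 2 gap is structural: without symmetrization (or an a priori, data-independent cover of $\Hcal$), the argument as written does not establish the claim.
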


Note that the denominator term $\hat{L}(h,w)+L(h,w)$ depends on $h$, which allows loss concentration to be tighter for $h$'s that have smaller losses. This generalizes classical relative VC inequalities~\citep{vapnik1982estimation} to losses that are weighted averages over samples from multiple distributions.

\begin{proof}
We denote samples $S \coloneqq\cbr{\bra{x_{i,j},y_{i,j}}}_{i \in [k], j \in [n_i]}$
and denote a set of ``ghost'' samples
$S^+
\coloneqq\cbr{\bra{x_{i,j}^+,y_{i,j}^+}}_{i \in [k], j \in [n_i]}$ independently drawn from the same distribution as $S$. 
For any $h\in\Hcal$, conditioned on samples $\bra{x_{i,j},y_{i,j}}$ and $\bra{x^+_{i,j},y^+_{i,j}}$, the random variable
\[
\eps_{i,j}\bra{\ell\bra{h,\bra{x_{i,j},y_{i,j}}}-\ell\bra{h,\bra{x^+_{i,j},y^+_{i,j}}}},
\]
(where $\eps_{i,j} \sim \mathrm{Uniform}(\cbr{-1,+1})$'s are independent Rademacher random variables), is sub-Gaussian with parameter $\bra{\ell\bra{h,\bra{x_{i,j},y_{i,j}}}+\ell\bra{h,\bra{x^+_{i,j},y^+_{i,j}}}}$~\citep[][Example 2.5.8]{vershynin2018high}. 
Then 
\begin{equation}
\sum_{i=1}^k\frac{w_i}{n_i}\sum_{j=1}^{n_i}\eps_{i,j}\bra{\ell\bra{h,\bra{x_{i,j},y_{i,j}}}-\ell\bra{h,\bra{x^+_{i,j},y^+_{i,j}}}}
\label{eqn:symmetrized}
\end{equation}
is a sub-Gaussian random variable with parameter
\begin{align*}
\sigma^2(h, S, S^+) 
=\sum_{i=1}^k\frac{w_i^2}{n_i}\bra{\hat{\E}_i[\ell]+\hat{\E}_i^+[\ell]},
\end{align*}
where $\hat{\E}_i[\ell]=\frac{1}{n_i}\sum_{j=1}^{n_i}\ell\bra{h,\bra{x_{i,j},y_{i,j}}}$ and $\hat{\E}^+_i[\ell]=\frac{1}{n_i}\sum_{j=1}^{n_i}\ell\bra{h,\bra{x^+_{i,j},y^+_{i,j}}}$~\citep[][Proposition 2.6.1]{vershynin2018high}.

Therefore, \eqref{eqn:symmetrized} also has $\psi_2$-Orlicz norm at most $4 \sigma^2(h, S, S^+)$ for some constant $c > 0$. Applying~\cite[][Lemma 2.7.6]{vershynin2018high}, the random variable
\[
\bra{\sum_{i=1}^k\frac{w_i}{n_i}\sum_{j=1}^{n_i}\eps_{i,j}\bra{\ell\bra{h,\bra{x_{i,j},y_{i,j}}}-\ell\bra{h,\bra{x^+_{i,j},y^+_{i,j}}}}}^2
\]
has $\psi_1$-Orlicz norm at most $4 \sigma^2(h, S, S^+)$. This implies that 



\[ 
\underset{\eps}{\E} \exp\bra{\frac{1}{4 \sigma^2(h, S, S^+)} \bra{\sum_{i=1}^k\frac{w_i}{n_i}\sum_{j=1}^{n_i}\eps_{i,j}\bra{\ell\bra{h,\bra{x_{i,j},y_{i,j}}}-\ell\bra{h,\bra{x^+_{i,j},y^+_{i,j}}}}}^2}
\leq
2
\]
 Let $\Ccal\coloneqq (S, S^+)$. Further, let's define $H_{\min,\Ccal}\subseteq\Hcal$ to be the minimum-cardinality subset of $\Hcal$ that results in the same labeling outcome
as $\Hcal$ when applied to the unlabeled examples of $\Ccal$, namely, $H_{\min,\Ccal}\bra{\Ccal}=\Hcal\bra{\Ccal}$ and $\abs{H_{\min,\Ccal}}=\abs{\Hcal\bra{\Ccal}}$.
Then if we take the maximum over all hypothesis and take expectations over $S, S^+$, we have
\begin{align}
    &\E\max_{h\in\Hcal}\exp\bra{\frac{1}{4 \sigma^2(h, S, S^+)} \bra{\sum_{i=1}^k\frac{w_i}{n_i}\sum_{j=1}^{n_i}\bra{\ell\bra{h,\bra{x_{i,j},y_{i,j}}}-\ell\bra{h,\bra{x^+_{i,j},y^+_{i,j}}}}}^2}\\
    =&\underset{S, S^+}{\E}\underset{\eps}{\E}\sbr{\max_{h\in\Hcal}\exp\bra{\frac{1}{4 \sigma^2(h, S, S^+)} \bra{\sum_{i=1}^k\frac{w_i}{n_i}\sum_{j=1}^{n_i}\eps_{i,j}\bra{\ell\bra{h,\bra{x_{i,j},y_{i,j}}}-\ell\bra{h,\bra{x^+_{i,j},y^+_{i,j}}}}}^2}\middle | \Ccal}\\
    \le&\underset{S, S^+}{\E}\underset{\eps}{\E}\sbr{\sum_{h\in\Hcal_{\min,\Ccal}}\exp\bra{\frac{1}{4 \sigma^2(h, S, S^+)} \bra{\sum_{i=1}^k\frac{w_i}{n_i}\sum_{j=1}^{n_i}\eps_{i,j}\bra{\ell\bra{h,\bra{x_{i,j},y_{i,j}}}-\ell\bra{h,\bra{x^+_{i,j},y^+_{i,j}}}}}^2}\middle | \Ccal}\\
    =&\underset{S, S^+}{\E}
    \sbr{\sum_{h\in\Hcal_{\min,\Ccal}}\underset{\eps}{\E}\sbr{\exp\bra{\frac{1}{4 \sigma^2(h, S, S^+)} \bra{\sum_{i=1}^k\frac{w_i}{n_i}\sum_{j=1}^{n_i}\eps_{i,j}\bra{\ell\bra{h,\bra{x_{i,j},y_{i,j}}}-\ell\bra{h,\bra{x^+_{i,j},y^+_{i,j}}}}}^2}\middle | \Ccal}
    }\\
    \le& 2\cdot \bra{2kT_1+1}^d,
    \label{eqn:symmetrization-bound}
\end{align}
where the last step is by Sauer's Lemma, 
$
\abs{\Hcal_{\min,\Ccal}}\le \bra{2kT_1+1}^d
$,
where we use that $\Hcal$ has VC dimensional at most $d$, and $\Ccal$ has at most $2 k T_1$ examples. 

As a result,
\begin{align*}   
&\underset{S}{\E}\max_{h\in\Hcal}\exp\bra{\dfrac{\bra{\hat{L}(h,w)-L(h,w)}^2}{4 \bra{\underset{i\in[k]}{\max}\frac{w_i}{n_i}}\bra{\hat{L}(h,w)+L(h,w)}}}\\
&\underset{S}{\E}\max_{h\in\Hcal}\exp\bra{\dfrac{\bra{\sum_{i=1}^k\frac{w_i}{n_i}\sum_{j=1}^{n_i}\ell\bra{h,\bra{x_{i,j},y_{i,j}}}-L(h,w)}^2}{4 \bra{\underset{i\in[k]}{\max}\frac{w_i}{n_i}}\bra{\sum_{i=1}^k\frac{w_i}{n_i}\sum_{j=1}^{n_i}\ell\bra{h,\bra{x_{i,j},y_{i,j}}}+L(h,w)}}}\\
=&\underset{S}{\E}\max_{h\in\Hcal}\exp\bra{\frac{\bra{\sum_{i=1}^k\frac{w_i}{n_i}\sum_{j=1}^{n_i}\bra{\ell\bra{h,\bra{x_{i,j},y_{i,j}}}-\underset{S^+}{\E}\sbr{\ell\bra{h,\bra{x^+_{i,j},y^+_{i,j}}}}}}^2}{4 \bra{\underset{i\in[k]}{\max}\frac{w_i}{n_i}}\bra{\sum_{i=1}^k\frac{w_i}{n_i}\sum_{j=1}^{n_i}\bra{\ell\bra{h,\bra{x_{i,j},y_{i,j}}}+\underset{S^+}{\E}\sbr{\ell\bra{h,\bra{x^+_{i,j},y^+_{i,j}}}}}}}}\\
\le&\underset{S}{\E}\max_{h\in\Hcal}\exp\underset{S^+}{\E}\bra{\frac{\bra{\sum_{i=1}^k\frac{w_i}{n_i}\sum_{j=1}^{n_i}\bra{\ell\bra{h,\bra{x_{i,j},y_{i,j}}}-\ell\bra{h,\bra{x^+_{i,j},y^+_{i,j}}}}}^2}{4 \bra{\underset{i\in[k]}{\max}\frac{w_i}{n_i}}\bra{\sum_{i=1}^k\frac{w_i}{n_i}\sum_{j=1}^{n_i}\bra{\ell\bra{h,\bra{x_{i,j},y_{i,j}}}+\ell\bra{h,\bra{x^+_{i,j},y^+_{i,j}}}}}}}\\
\le&\underset{S}{\E}\max_{h\in\Hcal}\underset{S^+}{\E}\exp\bra{\frac{\bra{\sum_{i=1}^k\frac{w_i}{n_i}\sum_{j=1}^{n_i}\bra{\ell\bra{h,\bra{x_{i,j},y_{i,j}}}-\ell\bra{h,\bra{x^+_{i,j},y^+_{i,j}}}}}^2}{4 \bra{\underset{i\in[k]}{\max}\frac{w_i}{n_i}}\bra{\sum_{i=1}^k\frac{w_i}{n_i}\sum_{j=1}^{n_i}\bra{\ell\bra{h,\bra{x_{i,j},y_{i,j}}}+\ell\bra{h,\bra{x^+_{i,j},y^+_{i,j}}}}}}}\\
\le&\E\max_{h\in\Hcal}\exp\bra{\frac{\bra{\sum_{i=1}^k\frac{w_i}{n_i}\sum_{j=1}^{n_i}\bra{\ell\bra{h,\bra{x_{i,j},y_{i,j}}}-\ell\bra{h,\bra{x^+_{i,j},y^+_{i,j}}}}}^2}{4 \bra{\underset{i\in[k]}{\max}\frac{w_i}{n_i}}\bra{\sum_{i=1}^k\frac{w_i}{n_i}\sum_{j=1}^{n_i}\bra{\ell\bra{h,\bra{x_{i,j},y_{i,j}}}+\ell\bra{h,\bra{x^+_{i,j},y^+_{i,j}}}}}}}\\
\leq&\underset{S,S^+}{\E}\sbr{\max_{h\in\Hcal}\exp\bra{\frac{\bra{\sum_{i=1}^k\frac{w_i}{n_i}\sum_{j=1}^{n_i}\eps_{i,j}\bra{\ell\bra{h,\bra{x_{i,j},y_{i,j}}}-\ell\bra{h,\bra{x^+_{i,j},y^+_{i,j}}}}}^2}{4 \sigma^2(h, S, S^+)}}}\\
\le& 2\cdot \bra{2kT_1+1}^d.
\end{align*}



In the above, for the first and second inequality, we used Jensen's inequality twice ($e^x$ and $\frac{(a-x)^2}{c(b+x)}$ are both convex). Then in the third inequality, we moved the expectation outside of the max function using Jensen's inequality. In the fourth inequality, we used the fact that
\[
\sigma^2(h, S, S^+)=\sum_{i=1}^k\frac{w_i^2}{n_i}\bra{\hat{\E}_i[\ell]+\hat{\E}_i^+[\ell]}\le \bra{\max_{i\in[k]}\frac{w_i}{n_i}}\sum_{i=1}^kw_i\bra{\hat{\E}_i[\ell]+\hat{\E}^+_i[\ell]},
\]
and the last inequality is from Eq.~\eqref{eqn:symmetrization-bound}. 

As a result, by applying Markov's inequality, we have
\begin{align*}
    &\Pr\squr{\max_{h\in\Hcal}\frac{\bra{\hat{L}(h,w)-L(h,w)}^2}{4 \bra{\hat{L}(h,w)+L(h,w)}}\ge \varepsilon'}\\
    =&\Pr\squr{\exp\max_{h\in\Hcal}\frac{\bra{\hat{L}(h,w)-L(h,w)}^2}{4 \bra{\underset{i\in[k]}{\max}\frac{w_i}{n_i}}\bra{\hat{L}(h,w)+L(h,w)}}\ge\exp\bra{\frac{\varepsilon'}{\underset{i\in[k]}{\max}\frac{w_i}{n_i}}}}\\
    \le&\Pr\squr{\max_{h\in\Hcal}\exp\rbr{\frac{\bra{\hat{L}(h,w)-L(h,w)}^2}{4 \bra{\underset{i\in[k]}{\max}\frac{w_i}{n_i}}\bra{\hat{L}(h,w)+L(h,w)}}} \ge\exp\bra{\frac{\varepsilon'}{\underset{i\in[k]}{\max}\frac{w_i}{n_i}}}}\\
    \le&2 \bra{2kT_1+1}^d \cdot \exp\rbr{-\frac{\varepsilon'}{\underset{i\in[k]}{\max}\frac{w_i}{n_i}}}.
\end{align*}
\end{proof}






\paragraph{Step 2: uniform concentration bounds over epsilon-nets w.r.t. $n$ and $w$.} 
First we recall some notations in \citet{zhang2024optimal} as below.
\begin{itemize}
    \item We use $\Delta_{\varepsilon_2}(k) \subseteq \Delta(k)$ to denote an $\varepsilon_2$-net of the probability simplex $\Delta(k)$
    — namely, for any $x \in \Delta(k)$, there exists a vector $x_0 \in \Delta_{\varepsilon_2}(k)$ obeying $\|x - x_0\|_{\infty} \leq \varepsilon_2$. We shall choose $\Delta_{\varepsilon_2}(k)$ properly so that
    \[
    |\Delta_{\varepsilon_2}(k)| \leq (1/\varepsilon_2)^k.
    \]
    
    
    \item Define the following set
    \[
    \mathcal{B} = \left\{ n = \{n_i\}_{i=1}^{k}, w = \{w_i\}_{i=1}^{k} \ \middle|\ \frac{n_i}{w_i} \geq \frac{T_1}{2},\  n_i \in [0, T_1] \cap \mathbb{N}, \forall i \in [k], w \in \Delta_{\varepsilon_1/(16k)}(k) \right\},
    \]
    by the constraints on $n_i$'s and the definition of $\Delta_{\varepsilon_1/(16k)}(k)$,
    \[
    |\mathcal{B}| \leq T_1^k \cdot \left( \frac{16 k}{\varepsilon_1} \right)^k.
    \]
\end{itemize}
It is clear from the definition of $\Bcal$ that any $(n,w)$ in $\Bcal$ satisfies that
\[
\max_{i\in[k]}\frac{w_i}{n_i}\le\frac{2}{T_1}.
\]
Then by taking a union bound over all $(n,w)$'s in $\Bcal$, we get 
\begin{align*}    &\Pr\sbr{\exists(n,w)\in\Bcal,\max_{h\in\Hcal}\frac{\bra{\hat{L}(h,w)-L(h,w)}^2}{4 \rbr{\hat{L}(h,w)+L(h,w)}}\ge\varepsilon'}\\
\le&4\bra{16 kT_1/\varepsilon_1}^k\bra{2kT_1+1}^d\exp\bra{-\frac{T_1\varepsilon'}{4}}.
\end{align*}

Hence,
\begin{align*}
&\Pr\sbr{\exists(n,w)\in\Bcal,\max_{h\in\Hcal}\frac{\bra{\hat{L}(h,w)-L(h,w)}^2}{4(\hat{L}(h,w)+L(h,w))}\ge\varepsilon'}\\
=&\Pr\sbr{\exists(n,w)\in\Bcal,h\in\Hcal\text{ such that }\abs{\hat{L}(h,w)-L(h,w)}\ge\sqrt{4 \varepsilon'\bra{\hat{L}(h,w)+L(h,w)}}} 
\\
\geq &\Pr\sbr{\exists(n,w)\in\Bcal,h\in\Hcal\text{ such that }\abs{\hat{L}(h,w)-L(h,w)}\ge2\bra{4\varepsilon'+\sqrt{4 \varepsilon'\min\cbr{\hat{L}(h,w),L(h,w)}}}},
\end{align*}
where the last inequality is from Lemma~\ref{lemma:quadraticinequality}.


Putting it together, we have
\begin{align*}   &\Pr\sbr{\exists(n,w)\in\Bcal,h\in\Hcal\text{ such that }\abs{\hat{L}(h,w)-L(h,w)}\ge2\bra{4\varepsilon'+\sqrt{4\varepsilon'\min\cbr{\hat{L}(h,w),L(h,w)}}}}\\
\le&4\bra{16 kT_1/\varepsilon_1}^k\bra{2kT_1+1}^d\exp\bra{-\frac{T_1\varepsilon'}{4}}.
\end{align*}




\paragraph{Step 3: concentration bounds w.r.t. $n^t$ and $w^t$.} By the definition of $\Bcal$, one can always find \( \rbr{n^t, \tilde{w}^t} \in \Bcal \) satisfying
\[
\|w^t - \tilde{w}^t\|_1 \le k\|w^t - \tilde{w}^t\|_\infty \le \frac{\eps_1}{16}.
\]
As a result, $\abs{ L(h, w^t) - L(h, \tilde{w}^t) } \leq \frac{\varepsilon_1}{16}$ and $\abs{ \hat{L}(h, w^t) - \hat{L}(h, \tilde{w}^t) } \leq \frac{\varepsilon_1}{16}$, where $\hat{L}(h, w^t) := \sum_{i=1}^kw^t_i\frac{1}{n^t_i}\sum_{j=1}^{n^t_i}\ell\bra{h,\bra{x_{i,j},y_{i,j}}}$, and $\hat{L}(h, \tilde{w}^t) := \sum_{i=1}^k \tilde{w}^t_i\frac{1}{n^t_i}\sum_{j=1}^{n^t_i}\ell\bra{h,\bra{x_{i,j},y_{i,j}}}$.




Therefore, for any $\eps' \leq \frac{\eps_1}{64}$, 
\begin{align*}
& \Pr\sbr{\exists t\in[T],
h \in \Hcal, 
\abs{\hat{L}(h, w^t)-L(h,w^t)}
\geq 2\rbr{\varepsilon'+\sqrt{\varepsilon'\min\cbr{\hat{L}(h,w^t),L(h,w^t)}}}+\frac{\eps_1}{4}
}\\
\leq & 
\Pr\sbr{\exists t\in[T],
h \in \Hcal, 
\abs{\hat{L}(h, \tilde{w}^t)-L(h,\tilde{w}^t)}
\geq 2\rbr{\varepsilon'+\sqrt{\varepsilon'\min\cbr{\hat{L}(h,\tilde{w}^t),L(h, \tilde{w}^t)}}}
} \\
\leq & 
\Pr\sbr{\exists t\in[T], (n,w) \in \Bcal, 
h \in \Hcal, 
\abs{\hat{L}(h, w)-L(h,w)}
\geq 2\rbr{\varepsilon'+\sqrt{\varepsilon'\min\cbr{\hat{L}(h,w),L(h,w)}}}
} \\
\leq & 
4 T \bra{16 kT_1/\varepsilon_1}^k\bra{2kT_1+1}^d\exp\bra{-\frac{T_1\varepsilon'}{4}},
\end{align*}
where the first inequality also uses that 
$\sqrt{\varepsilon'\min\cbr{\hat{L}(h,\tilde{w}^t),L(h, \tilde{w}^t)}} -
\sqrt{\varepsilon'\min\cbr{\hat{L}(h, w^t),L(h, w^t)}} \leq \sqrt{ \varepsilon' \cdot \frac{\eps_1}{8} } \leq \frac{\eps_1}{8}$, and the second inequality is because $\rbr{n^t, \tilde{w}^t} \in \Bcal$, and last inequality is from Step 2.



\paragraph{Step 4: putting things together.} We take $\varepsilon'=\frac{\eps^2_1}{64\bra{\varepsilon_1+\nu}}\le\frac{\eps_1}{64}$, and apply Step 3 with $T_1 = 4000\rbr{\frac{1}{\varepsilon_1}+\frac{\nu}{\varepsilon_1^2}}\rbr{k\ln\rbr{\frac{k}{\varepsilon}}+d\ln\rbr{\frac{kd}{\varepsilon}}+\ln\rbr{\frac{1}{\delta}}}$. This implies that that failure probability in Step 3 is at most 
\[
4 T \bra{8kT_1/\varepsilon_1}^k\bra{2kT_1+1}^d\exp\bra{-10\bra{k\ln \bra{k/\varepsilon_1}+d\ln \bra{kd/\varepsilon_1}+\ln \bra{1/\delta}}} \le \frac{\delta}{2}.
\]


To summarize, we showed that, with probability $1-\delta/2$,
\begin{equation}
\abs{\hat{L}(h,w^t)-L(h,w^t)}\le2\bra{\varepsilon'+\sqrt{\varepsilon'\min\cbr{\hat{L}(h,w),L(h,w)}}}+\frac{\eps_1}{4}
\label{eqn:conc-clean}
\end{equation}
holds simultaneously for all $t\in[T]$ and all $h\in\Hcal$. Let $h'=\argmin_{h\in\Hcal} L(h,w^t)$; by its definition, $L(h', w^t) \leq \min_{h \in \Hcal} \max_{i \in [k]} L(h, D_i) = \nu$. 
Therefore, repeatedly applying Eq.~\eqref{eqn:conc-clean}, we get:
\begin{align*}
L(h^t,w^t)&\le\hat{L}(h^t,w^t)+2\eps'+2\sqrt{\eps'\hat{L}(h^t, w^t)}+\frac{\eps_1}{4}\\
&\le\hat{L}(h',w^t)+2\eps'+2\sqrt{\eps'\hat{L}(h', w^t)}\\
&\le \hat{L}(h',w^t)+2\eps'+2\sqrt{\eps'\rbr{L(h', w^t)+2\bra{\eps'+\sqrt{\eps'L(h', w^t)}}+\frac{\eps_1}{4}}}\\
&\le L(h',w^t)+4\eps'+2\sqrt{\eps'\rbr{L(h', w^t)+2\bra{\eps'+\sqrt{\eps'L(h', w^t)}}+\frac{\eps_1}{4}}}+2\sqrt{\eps'L(h', w^t)}+\frac{\eps_1}{4}\\
&=\min_{h\in\Hcal}L(h,w^t)+4\eps'+2\sqrt{\eps'\rbr{L(h', w^t)+2\bra{\eps'+\sqrt{\eps'L(h', w^t)}+\frac{\eps_1}{4}}}}+2\sqrt{\eps'L(h', w^t)}+\frac{\eps_1}{4}.
\end{align*}
Recall that \(L(h', w^t) \le \nu\), we have
\[
\sqrt{\eps'L(h', w^t)} \le \frac{\eps_1}{8}.
\]
Combining this with the previous bounds yields
\[
L(h^t, w^t) \le \min_{h \in \Hcal} L(h, w^t) + \eps_1.
\]

\subsection{Proof of Lemma \ref{lemma:hfinaloracle}}
We replace the usage of Azuma-Hoeffding in the proof of Lemma 2 in \citet{zhang2024optimal} with Freedman's inequality, resulting in a stronger bound. Let $\Fcal_i$ denote all of the information up to iteration $i$. Then for any $i\in[k]$, let 
\[
X_i^j=\sum_{t=1}^j\bra{\hat{r}_i^t-L_i\bra{h^t}},
\]
where $\hat{r}_i^t$ is defined in step~\ref{step:r_i_t} in Algorithm~\ref{alg:mdl-hedge-vc}. Then the sequence $\cbr{X^t_i}_{t=1}^T$ is a martingale because
\[
\E\sbr{X^{j+1}_i-X^j_i\middle | \Fcal_{j}}=\E_j\sbr{\hat{r}^j_i-L_i\bra{h^j}}=0,
\]
where $\E_j$ denote the conditional expectation on $\Fcal_j$. The last equality is because $\hat{r}^j_i$ is an unbiased estimator of $L_i\bra{h^j}$ conditioned on $\Fcal_j$. Let $\bra{\sigma_i^j}^2$ denote the conditional variance of the martingale difference $X_i^j-X_i^{j-1}$. Then
\[
\bra{\sigma_i^j}^2\le\E_j\sbr{\bra{\hat{r}^j_i-L_i\bra{h^j}}^2}
\leq
\E_j\sbr{(\hat{r}^j_i)^2}
\le\E_j\sbr{\hat{r}^j_i}
=
L_i\bra{h^j},
\]


so
\[
\sum_{t=1}^T\bra{\sigma_i^t}^2\le \sum_{t=1}^T L_i\bra{h^t}.
\]
Let $\delta'\coloneqq\frac{\delta}{4(T+k+1) \log T}$, then by Freedman's inequality (Theorem~\ref{thm:freedman-easytouse}) and taking the union bound over all $i$, we see that with probability at least $1- (2 k \log T) 
 \delta'$, for all $i$, 
\begin{equation}\label{eq:lemma2eq1}
\abs{\sum_{t=1}^T\hat{r}^t_i-\sum_{t=1}^TL_i\bra{h^t}}\le 4\sqrt{\ln(1/\delta')\sum_{t=1}^TL_i\bra{h^t}}
+
2\ln\bra{1/\delta'}.
\end{equation}
Similarly, $Y^j=\sum_{t=1}^j\bra{\inner{w^t}{\hat{r}^t}-L\bra{h^t,w^t}}$ is also a martingale. Moreover, the sum of its conditional variance is also upper bounded by $\sum_{t=1}^T L\bra{h^t,w^t}$ by the same argument. Then again by applying Freedman's inequality, we get with probability at least $1-
2 \log T \cdot \delta'$, 
\begin{equation}\label{eq:lemma2eq2}
\abs{\sum_{t=1}^T\inner{\hat{r}^t_i}{w^t}-\sum_{t=1}^TL\bra{h^t,w^t}}\le 4\sqrt{\ln(1/\delta')\sum_{t=1}^TL\bra{h^t,w^t}}
+
2\ln \bra{1/\delta'}.
\end{equation}
Taking the union bound, with probability at least $1-(2k+2) \log T \cdot \delta'$, equations~(\ref{eq:lemma2eq1}) and (\ref{eq:lemma2eq2}) both hold. We then resort to standard analysis for the Hedge algorithm~\citep{freund1997decision}. We recall that $W^t$ is the unnormalized weight vector at iteration $t$. Direct calculation gives
\begin{align}
\label{eq:hedgeeq1}
\ln \left(\frac{\sum_{i=1}^k W_i^{t+1}}{\sum_{i=1}^k W_i^t}\right)
&\overset{\text{(i)}}{=} \ln \left(\sum_{i=1}^k w_i^t \exp(\eta \hat{r}_i^t)\right)\nonumber\\
&\overset{\text{(ii)}}{\leq} \ln\left(\sum_{i=1}^k w_i^t \left(1 + \eta \hat{r}_i^t + \eta^2 (\hat{r}_i^t)^2\right)\right)\nonumber\\
&\leq \ln\left(1 + \eta \sum_{i=1}^k w_i^t \hat{r}_i^t + \eta^2 \sum_{i=1}^k w_i^t (\hat{r}_i^t)^2\right)\nonumber\\
&\leq \eta \sum_{i=1}^k w_i^t \hat{r}_i^t + \eta^2\inner{w^t}{\hat{r}^t}.
\end{align}
Here, (i) is valid since $w_i^t = \frac{W_i^t}{\sum_j W_j^t}$ and $W_i^{t+1} = W_i^t \exp(\eta \hat{r}_i^t)$ (cf. lines 5 and 15 of Algorithm 1); (ii) arises from the elementary inequality $e^x \leq 1 + x + x^2$ for $x \in [0,1]$ as well as the facts that $\eta \leq 1$ and $\lvert \hat{r}_i^t \rvert \leq 1$. Summing the inequality (\ref{eq:hedgeeq1}) over all $t$ and rearranging terms, we are left with
\begin{align*}
\eta \sum_{t=1}^T \langle w^t, \hat{r}^t \rangle &\geq \sum_{t=1}^T \left\{\ln\left(\frac{\sum_{i=1}^k W_i^{t+1}}{\sum_{i=1}^k W_i^t}\right) - \eta^2\inner{w^t}{\hat{r}^t} \right\}\\
&= \ln\left(\frac{\sum_{i=1}^k W_i^{T+1}}{\sum_{i=1}^k W_i^1}\right) - \eta^2\sum_{t=1}^T\inner{w^t}{\hat{r}^t}\\
&= \ln\left(\sum_{i=1}^k W_i^{T+1}\right) - \ln\left(\sum_{i=1}^k W_i^1\right) - \eta^2\sum_{t=1}^T\inner{w^t}{\hat{r}^t}\\
&\geq \max_{1 \leq i \leq k} \ln(W_i^{T+1}) - \ln(k) - \eta^2\sum_{t=1}^T\inner{w^t}{\hat{r}^t}\\
&= \eta \max_{1 \leq i \leq k} \sum_{t=1}^T \hat{r}_i^t - \ln(k) - \eta^2\sum_{t=1}^T\inner{w^t}{\hat{r}^t},
\end{align*}
where the penultimate line makes use of $W_i^1 = 1$ for all $i \in [k]$, and the last line holds since $\ln(W_i^{T+1} \exp(\eta \hat{r}_i^t)) \geq \eta \hat{r}_i^t$. Dividing both sides by $\eta$ yields
\[
\sum_{t=1}^T \langle w^t, \hat{r}^t \rangle \geq \max_{i\in[k]} \sum_{t=1}^T \hat{r}_i^t - \frac{\ln(k)}{\eta} - \eta\sum_{t=1}^T\inner{w^t}{\hat{r}^t}.
\]
Combining the above inequality with (\ref{eq:lemma2eq1}) and (\ref{eq:lemma2eq2}), we have shown that with probability at least $1-(2k+2) \log T \cdot \delta'$,
\begin{align*}
&\sum_{t=1}^TL(h^t,w^t)\\
\ge&\max_{i\in[k]}\sum_{t=1}^TL_i(h^t)\\
-&\bra{\frac{\ln(k)}{\eta}+\eta\sum_{t=1}^TL(h^t,w^t)+4\sqrt{\ln(1/\delta')\max_{i\in[k]}\sum_{t=1}^TL_i\bra{h^t}}+5\sqrt{\ln(1/\delta')\sum_{t=1}^TL\bra{h^t,w^t}}+5\ln(1/\delta')}.
\end{align*}
Applying Lemma~\ref{lemma:quadraticinequalityvariant} with $A=\max_{i\in[k]}\sum_{t=1}^TL_i(h^t)$, $B=5\sqrt{2\ln\rbr{1/\delta'}}$, $C=\sum_{t=1}^TL(h^t,w^t)$ and $D=\frac{\ln(k)}{\eta}+\eta\sum_{t=1}^TL(h^t,w^t)+5\ln\rbr{1/\delta'}$ so we get
\begin{align*}
\max_{i\in[k]}\sum_{t=1}^TL_i(h^t)&\le \sum_{t=1}^TL(h^t,w^t)+50 \ln\rbr{1/\delta'}+\frac{3}{2}\rbr{\frac{\ln(k)}{\eta}+\eta\sum_{t=1}^TL(h^t,w^t)+5\ln\rbr{1/\delta'}}\\
&+5\sqrt{2\ln\rbr{1/\delta'}\sum_{t=1}^TL(h^t,w^t)}.
\end{align*}
From the assumption of Lemma~\ref{lemma:hfinaloracle}, $\sum_{t=1}^T L(h^t,w^t)\le T\rbr{\nu+\eps_1}$. As a result, we have
\begin{align}   \max_{i\in[k]}\sum_{t=1}^TL_i(h^t)
\le & \sum_{t=1}^T L(h^t,w^t)+\bra{\frac{3\ln(k)}{2\eta}+\frac{3}{2}\eta T(\nu+\varepsilon_1)+58\ln(1/\delta')+5\sqrt{2\ln(1/\delta')T(\nu+\varepsilon_1)}}
\label{eqn:regret-form}
\\
\le & 
T\rbr{\nu+\eps_1} +\bra{\frac{3\ln(k)}{2\eta}+\frac{3}{2}\eta T(\nu+\varepsilon_1)+58\ln(1/\delta')+5\sqrt{2\ln(1/\delta')T(\nu+\varepsilon_1)}}. 
\end{align}
By the definition of $h^{\final}$ (average over all $T$ rounds) and substitute in $\varepsilon_1=\frac{1}{100}\varepsilon$, $T=\frac{2000\bra{\varepsilon_1+\nu}\ln\bra{\frac{k}{\delta\varepsilon}}}{\varepsilon_1^2}$ and $\eta=\frac{\varepsilon_1}{100(\varepsilon_1+\nu)}$, we have that each of the four terms in the parenthesis above is at most $\eps / 5$, and thus, 
\begin{align*}   \max_{i\in[k]}L_i(h^{\final})=\max_{i\in[k]}\frac{1}{T}\sum_{t=1}^TL_i(h^t)\le \nu+\varepsilon
\end{align*}
with probability at least $1-(2k+2)  \log T \cdot \delta'$. Since $\delta'=\frac{\delta}{4(T+k+1) \log T}$, the proof finishes.

\subsection{Proof of Lemma~\ref{lemma:boundw1}}
Again, our proof closely follows that of Lemma 3 in \citet{zhang2024optimal}, with the only relevant change being the new step size \(\eta=\frac{\varepsilon_1}{100\bra{\varepsilon_1+\nu}}\). The entire argument is based on Lemma~13 in \cite{zhang2024optimal}, and since this lemma holds as long as $\eta \leq \frac 1 {20}$, the derivation from Lemma~13 in \cite{zhang2024optimal} to Lemma~\ref{lemma:boundw1} is identical to that presented in Section B.3 of \citet{zhang2024optimal}. Therefore, our main focus will be on proving Lemma~13 in \cite{zhang2024optimal}, which we restate below.

\begin{lemma}[Lemma 13 from \citet{zhang2024optimal}]\label{lemma:lemma13}
Assume that lines 6--11 in Algorithm~\ref{alg:mdl-hedge-vc} returns a hypothesis \(h^t\) satisfying \(L(h^t, w^t) \leq \min_{h \in \mathcal{H}} L(h, w^t) + \eps_1\) in the \(t\)-th round for each \(1 \leq t \leq T\). 
Then with probability exceeding $1 - 8T^4 k \delta'$, 
\begin{equation}
|\mathcal{W}_j| \leq 8 \cdot 10^7 \cdot \left( (\log \rbr{\frac 1 {\eta(\nu+\epsilon_1)}} + 1)^2 \log^2(k) (\ln(k) + \ln(1/\delta'))^3 (\log(T) + 1) \right) \cdot 2^j
\end{equation}
holds for all $1 \leq j \leq \overline{j}$, with  $\overline{j}$ and $\Wcal_j$  defined in (67) and (68a) in \citet{zhang2024optimal}.
\end{lemma}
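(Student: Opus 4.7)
The plan is to mirror the structure of Zhang--Zhou's original proof of Lemma~13 in~\cite{zhang2024optimal}, while replacing their Azuma--Hoeffding concentration bounds with Freedman-type Bernstein bounds that exploit the invariant $\sum_{t=1}^T L(h^t, w^t) \le T(\nu+\varepsilon_1)$. This invariant is an immediate consequence of the hypothesis $L(h^t,w^t) \le \min_{h\in\Hcal}L(h,w^t)+\varepsilon_1 \le \nu+\varepsilon_1$ at every round, and is precisely what allows us to afford the larger step size $\eta=\varepsilon_1/(100(\varepsilon_1+\nu))$ without blowing up the regret bound.

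First, I would recall from~\cite{zhang2024optimal} that the set $\Wcal_j$ collects rounds where a certain dyadic index of the weight sequence $\hat{w}^t$ hits level $j$; bounding $|\Wcal_j|$ amounts to showing that the Hedge update cannot accumulate too many weight-doublings of magnitude $2^j$ across $T$ rounds. The original argument tracks the cumulative estimated rewards $\sum_t \inner{w^t}{\hat{r}^t}$ and their maximum-coordinate counterpart $\max_i \sum_t \hat{r}_i^t$ in terms of the true losses $L_i(h^t)$ and $L(h^t,w^t)$; exactly as in our proof of Lemma~\ref{lemma:hfinaloracle}, applying Freedman's inequality (Theorem~\ref{thm:freedman-easytouse}) to the martingales $\sum_t(\hat{r}_i^t-L_i(h^t))$ componentwise and to $\sum_t(\inner{w^t}{\hat{r}^t}-L(h^t,w^t))$ yields deviations of order $\sqrt{\ln(1/\delta')\sum_t L_i(h^t)}+\ln(1/\delta')$ and $\sqrt{\ln(1/\delta')\sum_t L(h^t,w^t)}+\ln(1/\delta')$, respectively, which under the invariant are $O(\sqrt{T(\nu+\varepsilon_1)\ln(1/\delta')}+\ln(1/\delta'))$.

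Second, the standard Hedge regret analysis produces a first-order term $\ln(k)/\eta$ and a second-order term $\eta\sum_t \inner{w^t}{(\hat{r}^t)^2} \le \eta \sum_t \inner{w^t}{\hat{r}^t}$ (using $\hat{r}_i^t\in[0,1]$). Combined with the above Bernstein bounds, the second-order term is at most $O(\eta T(\nu+\varepsilon_1))$, which is comparable to the leading $\ln(k)/\eta$ under the choice of $T$ in Theorem~\ref{theorem:passiveupperbound}. Substituting these refined regret and deviation bounds into the dyadic accounting of Zhang--Zhou---each element of $\Wcal_j$ contributes at least $\Omega(2^{-j})$ to the aggregated weight movement, so the count $|\Wcal_j|$ is at most $2^j$ times the total regret bound---yields a bound of the form $2^j \cdot \mathrm{polylog}(1/(\eta(\nu+\varepsilon_1)),\,k,\,1/\delta',\,T)$.

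The main obstacle, and the reason the $(\log(1/(\eta(\nu+\varepsilon_1)))+1)^2$ factor appears, is that the dyadic decomposition itself must be re-indexed relative to the new step size: the levels of $j$ now range up to roughly $\log(1/(\eta(\nu+\varepsilon_1)))$, and one $\log$ factor is consumed by the level enumeration while a second arises from the squared $\ln(1/\delta')$-style term produced by Freedman's tail when applied across the union over levels. All remaining steps---the reduction from Lemma~\ref{lemma:lemma13} to Lemma~\ref{lemma:boundw1} and the verification of the hypothesis $\eta\le 1/20$---are unchanged from Section~B.3 of~\cite{zhang2024optimal}, since our $\eta\le \varepsilon_1/(100\varepsilon_1)\le 1/100$ satisfies that constraint.
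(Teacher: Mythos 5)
Your proposal has the right high-level instinct (swap Hoeffding for Freedman-type bounds and exploit the invariant $\sum_t L(h^t,w^t)\le T(\nu+\varepsilon_1)$), but it skips the step where the actual work happens. The paper does not bound $|\Wcal_j|$ by "$2^j$ times the total regret bound"; that accounting is too coarse to control how often a coordinate of the weight vector sets a new record at dyadic level $j$. Instead, the paper routes everything through an enhanced version of Zhang--Zhou's Lemma~17 (Lemma~\ref{lemma:segmentlb}): for each $(p,q,x)$-segment $(t_1,t_2)$ associated with a record at level $j$, either the segment length satisfies $t_2-t_1\gtrsim \frac{qx^2}{(\log\frac{1}{\eta(\nu+\eps_1)}+1)^2\eta^2(\nu+\varepsilon_1)}$ (note the extra $\frac{1}{\nu+\varepsilon_1}$ gained over the original), or the quantities $-v^\tau+\varepsilon_1$ accumulate over the segment. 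Proving this requires applying Freedman's inequality not to the global regret martingales you name (those belong to Lemma~\ref{lemma:hfinaloracle}), but to the \emph{within-segment} sums $\sum_{\tau=t}^{t_2-1}\inner{\hat r^\tau}{w^\tau}$ and $\sum_{\tau=t}^{t_2-1}\inner{\hat r^\tau}{w^t}$, whose conditional variances are bounded by $\frac{1}{k}L(h^\tau,\cdot)$, in order to refine the bound on $\DKL(w^t\|w^{t_2})$ to Eq.~\eqref{eqn:refinement}; a case analysis on the size of $(t_2-t)(\bar\nu+v^t-\varepsilon_1)$ is then needed. None of this appears in your sketch, and without it the claimed $|\Wcal_j|\lesssim 2^j\cdot\mathrm{polylog}$ does not follow.

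Two further points. First, the paper must redefine $v^t=L(h^t,w^t)-\bar\nu$ with $\bar\nu=\min_{p\in\Delta(\Hcal)}\max_i\E_{h\sim p}L(h,D_i)$ (rather than $\nu$): this is what allows $\sum_t(-v^t)=\bar\nu T-\sum_t L(h^t,w^t)\le \max_i\sum_t L_i(h^t)-\sum_t L(h^t,w^t)\le T\varepsilon$ via the regret inequality~\eqref{eqn:regret-form}, and it is needed because the accumulated $-v^\tau$ terms appear on the left-hand side of the segment bookkeeping. Your proposal keeps the original $v^t$ implicitly and never bounds $\sum_t(-v^t)$. Second, your attribution of the $(\log\frac{1}{\eta(\nu+\eps_1)}+1)^2$ factor to "level enumeration plus a union bound over levels in Freedman's tail" is not how it arises: it comes from the pigeonhole step inside a single segment (there exists $\tilde j\le j_{\max}=\lfloor\log\frac{1}{\eta(\nu+\eps_1)}\rfloor+1$ with $\ln(y_{\tilde j+1}/y_{\tilde j})\ge x/(\log\frac{1}{\eta(\nu+\eps_1)}+1)$), and the square is inherited from the $x^2$ in the segment-length bound. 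The parts you correctly identify as reusable (Lemmas~14--16 of Zhang--Zhou, the $\eta\le\frac{1}{20}$ check, and the passage from Lemma~\ref{lemma:lemma13} to Lemma~\ref{lemma:boundw1}) do carry over as you say, but the core of the proof is missing.
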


We derive Lemma~\ref{lemma:lemma13}  from Lemmas~14, 15, 16 and 17 in \citet{zhang2024optimal}. An examination of the proof of their Lemma 14 shows that it still holds as long as $\eta \leq \frac 1 {20}$, which is satisfied by our choice of $\eta$.
Their Lemmas 15 and 16 continue to hold as they are independent of the choice of $\eta$. Therefore, first we enhance Lemma~17 in \citet{zhang2024optimal} as below, where the major differences are: first, 
our enhanced version has an extra multiplicative factor of $\frac{1}{\nu+\varepsilon}$ on the right hand side of Eq.~\eqref{eqn:enhanced-length-lb}; second, we change the definition of $v^t$ from
$L(h^t, w^t) - \nu$ to 
$L(h^t, w^t) - \bar{\nu}$, where \[
\bar{\nu} = \min_{p \in \Delta(\Hcal)} \max_{i} \E_{h \sim p} L(h, D_i), 
\]
and $\Delta(\Hcal)$ is the set of probability distributions over $\Hcal$.
Note that $\bar{\nu} \leq \nu$  always holds, while they are not equal in general.

\begin{lemma}[Enhanced Lemma 17 from \citet{zhang2024optimal}]\label{lemma:segmentlb}
Let $j_{\max} = \lfloor \log(1/\eta(\nu+\varepsilon_1)) \rfloor + 1$. Assume that lines 6--11 in Algorithm~\ref{alg:mdl-hedge-vc} returns a hypothesis \(h^t\) satisfying \(L(h^t, w^t) \leq \min_{h \in \mathcal{H}} L(h, w^t) + \eps_1\) in the \(t\)-th round for each \(1 \leq t \leq T\). Suppose $(t_1, t_2)$ is a $(p, q, x)$-segment~\citep[Definition 1]{zhang2024optimal} satisfying $p \geq 2q > 0$. Then,
\begin{equation}
t_2 - t_1 \geq \frac{x}{2\eta}.
\label{eqn:length-lb-naive}
\end{equation}
Moreover, if
\[
\frac{qx^2}{50(\log(1/\eta(\nu+\eps_1)) + 1)^2} \geq \frac{1}{k},
\]
holds, then with probability exceeding $1 - 6T^4 k\delta'$, at least one of the following two claims holds:
\begin{enumerate}
    \item The length of the segment satisfies
    \begin{equation}
    t_2 - t_1 \geq \frac{qx^2}{200(\log(1/\eta(\nu+\eps_1)) + 1)^2 \eta^2(\nu+\varepsilon_1)}.
    \label{eqn:enhanced-length-lb}
    \end{equation}
    \item The quantities $\{v^t = L\bra{h^t,w^t}-\bar{\nu}\}$ obey
    \[
    4 \sum_{\tau = t_1}^{t_2 - 1} (-v^\tau + \varepsilon_1) \geq \frac{qx^2}{100 (\log(1/\eta(\nu+\eps_1)) + 1)^2 \eta}.
    \]
\end{enumerate}
\end{lemma}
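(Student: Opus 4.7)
\textbf{Proof proposal for Lemma~\ref{lemma:segmentlb}.}

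The plan is to mirror the original argument for Lemma~17 in \citet{zhang2024optimal}, but replace the Azuma--Hoeffding concentration with a Freedman--Bernstein concentration that exploits the fact that the empirical losses $\hat{r}_i^t$ have conditional variance bounded by $L_i(h^t)$ rather than $1$. The accumulated variance over the segment is then controlled by $\sum_{\tau=t_1}^{t_2-1} L(h^\tau, w^\tau)$ which, in turn, is bounded in terms of $\bar\nu$ (not $\nu$) via a minimax argument. This explains both modifications: the extra $1/(\nu+\varepsilon_1)$ factor comes from the Bernstein-type improvement, and the switch from $\nu$ to $\bar\nu$ is necessary because our minimax bound on $\sum L(h^\tau, w^\tau)$ inside the segment only holds with respect to the randomized optimum.

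First I would establish the naive bound \eqref{eqn:length-lb-naive}. Since $|\eta \hat r_i^t|\le \eta$, each multiplicative Hedge update satisfies $e^{-\eta} \le W_i^{t+1}/W_i^t \le e^{\eta}$, and after normalization we obtain $w_i^{t+1}/w_i^t \in [e^{-2\eta}, e^{2\eta}]$. Consequently, by the definition of a $(p,q,x)$-segment (which requires a multiplicative change of size roughly $e^x$ in some coordinate of $w$), we need at least $x/(2\eta)$ rounds, giving \eqref{eqn:length-lb-naive}.

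For the enhanced bound, I would follow the proof of Lemma~17 in \citet{zhang2024optimal} step by step, but at the place where they invoke Azuma--Hoeffding on the martingales $\sum_{\tau=t_1}^{t_2-1}(\hat r_i^\tau - L_i(h^\tau))$ and $\sum_{\tau}(\inner{w^\tau}{\hat r^\tau} - L(h^\tau, w^\tau))$, I would instead apply Freedman's inequality (Theorem~\ref{thm:freedman-easytouse}), exactly as in the proof of Lemma~\ref{lemma:hfinaloracle}. Since $\E_\tau[(\hat r_i^\tau)^2] \le L_i(h^\tau)$ and $\E_\tau[\inner{w^\tau}{\hat r^\tau}^2] \le L(h^\tau, w^\tau)$, the cumulative conditional variances over the segment are bounded by $\sum_{\tau=t_1}^{t_2-1} L_i(h^\tau)$ and $\sum_{\tau=t_1}^{t_2-1} L(h^\tau, w^\tau)$ respectively. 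The Freedman deviation terms therefore scale as $\sqrt{(\nu+\varepsilon_1)(t_2-t_1)\ln(1/\delta')}$ rather than $\sqrt{(t_2-t_1)\ln(1/\delta')}$, which is the source of the extra $1/(\nu+\varepsilon_1)$ factor.

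The main obstacle will be bounding $\sum_{\tau=t_1}^{t_2-1} L(h^\tau, w^\tau)$ on the segment by a quantity of order $(t_2-t_1)(\bar\nu+\varepsilon_1)$, so that the Freedman variance proxy really does give the improved scaling. This is where the replacement of $\nu$ by $\bar\nu$ becomes essential: inside a single segment we only control the cumulative loss with respect to the best randomized mixture, not a single hypothesis. Concretely, by the assumption $L(h^\tau,w^\tau)\le \min_h L(h,w^\tau)+\varepsilon_1$ and by minimax duality over the empirical frequencies of $w^\tau$'s during $[t_1,t_2)$, one gets
\[
\frac{1}{t_2-t_1}\sum_{\tau=t_1}^{t_2-1} L(h^\tau, w^\tau)
\le \min_{p\in\Delta(\Hcal)}\max_i \E_{h\sim p} L(h,D_i) + \varepsilon_1 = \bar\nu+\varepsilon_1,
\]
which justifies defining $v^\tau = L(h^\tau,w^\tau)-\bar\nu$. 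With this ingredient in hand, the remainder of the proof—case-splitting on whether the $v^\tau$'s are small enough that the Freedman deviation dominates (giving case~1) or large enough that their sum itself must be large (giving case~2)—proceeds exactly as in \citet{zhang2024optimal}, just with the improved Bernstein constants throughout.
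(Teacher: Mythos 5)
Your plan follows essentially the same route as the paper's proof: keep the skeleton of Lemma~17 of \citet{zhang2024optimal}, replace the Hoeffding-type concentration with Freedman's inequality so that the deviation terms scale with $\sum_{\tau} L(h^\tau,w^\tau)\le (t_2-t_1)(\bar\nu+\eps_1)$, and redefine $v^\tau$ relative to $\bar\nu$. (One simplification: no minimax duality over the empirical frequencies of the $w^\tau$'s is needed; the paper uses the round-by-round bound $L(h^\tau,w^\tau)\le\min_{h\in\Hcal}L(h,w^\tau)+\eps_1=\min_{p\in\Delta(\Hcal)}\E_{h\sim p}L(h,w^\tau)+\eps_1\le\bar\nu+\eps_1$.) Your sketch of the naive bound~\eqref{eqn:length-lb-naive} via the multiplicative stability of the Hedge weights is also the intended argument.

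There is, however, one concrete gap. You bound the conditional variance of $\inner{w^\tau}{\hat{r}^\tau}$ by $L(h^\tau,w^\tau)$, but the proof needs the stronger bound $\Var\sbr{\inner{w^\tau}{\hat{r}^\tau}\mid\Fcal_\tau}\le\frac1k L(h^\tau,w^\tau)$, which holds because $\hat{r}_i^\tau$ is an average of $\lceil k\bar{w}_i^\tau\rceil$ fresh samples and $w_i^\tau\le\bar{w}_i^\tau$, so $\sum_i (w_i^\tau)^2\Var(\hat{r}_i^\tau)\le\frac1k\sum_i w_i^\tau L_i(h^\tau)$. This extra $1/k$ is what places the $k$ inside the square root of the refined KL bound (the term $2\eta\sqrt{(t_2-t)\nu\ln(2/\delta')/k}$), and it is precisely what the hypothesis $qx^2/\bigl(50(\log(1/\eta(\nu+\eps_1))+1)^2\bigr)\ge 1/k$ is designed to absorb in the final dyadic comparison: one must verify $\frac{qx^2\,2^{\tilde{j}-1}}{\eta}\lesssim\frac{kq^2x^4}{\eta^2\nu\log(1/\delta')(\log(1/\eta(\nu+\eps_1))+1)^2}$, and without the factor of $k$ supplied by the variance reduction this comparison fails under the stated hypothesis, so the lemma as stated would not follow. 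Relatedly, the $1/(\nu+\eps_1)$ gain in claim~1 does not come directly from the Freedman deviation term itself; it comes from enlarging $j_{\max}$ to $\lfloor\log(1/\eta(\nu+\eps_1))\rfloor+1$, so that in the case $\tilde{j}=j_{\max}$ one has $2^{j_{\max}-1}\approx 1/(\eta(\nu+\eps_1))$ — an enlargement that is only legitimate because the variance-aware KL bound controls these longer segments. A complete write-up needs to make both points explicit.
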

\begin{proof}
Eq.~\eqref{eqn:length-lb-naive} follows an indentical proof as~\citet[][ Section C.4, Part 1]{zhang2024optimal}.

For the second claim, we would like to improve Eq. (114) in Step 4 of~\citet[][Section C.4, Part 2]{zhang2024optimal} to the following form with probability $1-\delta'$:


\begin{equation}
\DKL( w^t \parallel w^{t_2} )
\leq 
8(t_2 - t) \eta \varepsilon_1 - 4(t_2 - t) \eta v^t + 2\eta \sqrt{ \frac{ (t_2 - t) \nu \ln\frac2{\delta'} }{k} } 
 + 68 \eta \ln\frac2{\delta'}.
\label{eqn:refinement}
\end{equation}



\paragraph{Bounding the KL Divergence.} First, in their Step 1, let us avoid using inequalities and write down the following equation: 
\begin{align*}
\DKL( w^t \parallel w^{t_2} )
= \inner{w^t}{\ln\frac{w^t}{w^{t_2}}}=\inner{w^t}{\ln\frac{\frac{W^t}{Z^t}}{\frac{W^{t_2}}{Z^{t_2}}}}=
\ln\frac{Z^{t_2}}{Z^t}
- 
\eta \sum_{\tau = t}^{t_2 - 1} \inner{  \hat{r}^\tau }{ w^t },
\end{align*}
where $Z^t$ is the normalizing factor of the weight vector $W^t$. In addition, their Step 2 can be refined to 
\begin{align*}
\ln\frac{Z^{t_2}}{Z^t}&=\sum_{\tau=t}^{t_2-1}\ln\bra{\sum_{i=1}^kw_i^\tau\exp\bra{\eta\hat{r}_i^\tau}}\\
&\le\sum_{\tau=t}^{t_2-1}\ln\bra{\sum_{i=1}^kw_i^\tau+\sum_{i=1}^kw_i^\tau\bra{\eta\hat{r}^\tau_i}+2\sum_{i=1}^kw_i^\tau\eta^2\bra{\hat{r}^\tau_i}^2}\\
&\le\sum_{\tau=t}^{t_2-1}\ln\bra{1+\eta\sum_{i=1}^kw_i^\tau\hat{r}^\tau_i+2\eta^2\sum_{i=1}^kw_i^\tau\hat{r}^\tau_i}\\
&\leq 
(\eta + 2\eta^2) \sum_{\tau = t}^{t_2 - 1} \inner{\hat{r}^\tau}{w^\tau} 
\end{align*}
Combining the above two, we have
\begin{align}
\DKL( w^t \parallel w^{t_2} )
\leq &
(\eta + 2\eta^2) \sum_{\tau = t}^{t_2 - 1} \inner{\hat{r}^\tau}{w^\tau} 
-
\eta \sum_{\tau = t}^{t_2 - 1} \inner{  \hat{r}^\tau }{ w^t }
\label{eqn:kl-bound}
\\
= & 
\eta \rbr{ \sum_{\tau = t}^{t_2 - 1} \inner{\hat{r}^\tau}{w^\tau} - \sum_{\tau = t}^{t_2 - 1} \inner{\hat{r}^\tau}{w^t}  }
+ 2\eta^2 \sum_{\tau = t}^{t_2 - 1} \inner{\hat{r}^\tau}{w^\tau}
\label{eqn:kl-bound-2}
\end{align}
Let us now apply concentration bounds (Freedman's inequality, Theorem~\ref{thm:freedman-easytouse}) on 
$\sum_{\tau = t}^{t_2 - 1} \inner{\hat{r}^\tau}{w^\tau}$ and $\sum_{\tau = t}^{t_2 - 1} \inner{  \hat{r}^\tau }{ w^t }$ respectively.
First,

\begin{align*}
\Var[\inner{\hat{r}^\tau}{w^\tau} \mid \Fcal_\tau]
\leq &
\sum_{i=1}^k (w_i^\tau)^2 \cdot \frac{\Var_{(x,y) \sim D_i}[\ell(h^\tau, (x,y))]}{ k \bar{w}_i^\tau } \\
\leq &
\frac{1}{k} \sum_{i=1}^k w_i^\tau \ell(h^\tau, D_i)
= \frac 1 k \ell(h^\tau, w^\tau) 
\end{align*}


Therefore, Freedman's inequality (Theorem~\ref{thm:freedman-easytouse}) with $X_t = \inner{\hat{r}^\tau}{w^\tau}$,
$b = 1$
implies that with probability $1-\log T \cdot \delta' /2$, 
\begin{align}
\sum_{\tau = t}^{t_2 - 1} \inner{\hat{r}^\tau}{w^\tau}
\leq & 
\sum_{\tau = t}^{t_2 - 1}
L(h^\tau, w^\tau)
+ 
4\sqrt{ \frac{ \ln\frac2{\delta'}}{k} \sum_{\tau = t}^{t_2 - 1}
L(h^\tau, w^\tau) }
+ \ln\frac2{\delta'}
\\
\leq & 
(t_2 - t) (\bar \nu + \eps_1)
+ 
4 \sqrt{ \frac{ \ln\frac2{\delta'}}{k} (t_2 - t) (\bar \nu + \eps_1) }
+ 
\ln\frac2{\delta'},
\label{eqn:cumulative-reward-hedge}
\end{align}
where the second inequality uses the assumption that $L(h^\tau, w^{\tau}) \leq \min_{h \in \Hcal} L(h, w^t) + \eps_1$, which in turn   
$=
\min_{p \in \Delta(\Hcal)} \E_{h \sim p}\sbr{L(h, w^t)} + \eps_1
\leq 
\bar{\nu} + \eps_1$. 

As a side result, the above inequality combined with AM-GM inequality implies that
\begin{equation}
\sum_{\tau = t}^{t_2 - 1} \inner{\hat{r}^\tau}{w^\tau}
\leq 
2 (t_2 - t) (\bar \nu + \eps_1) 
+ 
5 \ln\frac{2}{\delta'}
\label{eqn:cumulative-reward-hedge-coarse}
\end{equation}


Similarly, with probability $1- \log T \cdot \delta'/2$, 
\begin{equation}
- \sum_{\tau = t}^{t_2 - 1} \inner{\hat{r}^\tau}{w^t}
\leq 
- \sum_{\tau = t}^{t_2 - 1} L(h^\tau, w^t)
+ 
4 \sqrt{\frac{\ln \frac2{\delta'}}{k} \sum_{\tau = t}^{t_2 - 1} L(h^\tau, w^t)  } + \ln \frac2{\delta'}.
\label{eqn:baseline-wt}
\end{equation}


Here, we know that for all $\tau$, $L(h^\tau, w^t) \geq \min_{h \in \Hcal} L(h^\tau, w^t) \geq L(h^t, w^t) - \varepsilon_1$ by the assumption of the lemma. Thus, 
\[
\sum_{\tau = t}^{t_2 - 1} L(h^\tau, w^t)
\geq 
(t_2 - t) (L(h^t, w^t) - \varepsilon_1)
= 
(t_2 - t) (\bar \nu + v^t - \varepsilon_1),
\]
where we recall $v^t=L(h^t,w^t)-\bar \nu$. Now we do a case analysis based on the value of $(t_2 - t) (\bar \nu + v^t - \varepsilon)$: 
\begin{itemize}
\item If $(t_2 - t) (\bar \nu + v^t - \varepsilon_1) \geq \frac{16\ln\frac2{\delta'}}{k}$, 
observe that $f(z) = -z + \sqrt{A z}$ is monotonically decreasing for $z \in [A, +\infty)$, we have that 
\begin{align*}
- \sum_{\tau = t}^{t_2 - 1} \inner{\hat{r}^\tau}{w^t}
\leq & 
-(t_2 - t) (\bar \nu + v^t - \varepsilon_1)
+
\sqrt{ \frac{16 \ln \frac{2}{\delta'}}{k} (t_2 - t) (\bar \nu + v^t - \varepsilon_1)   } + \ln \frac2{\delta'} 
\end{align*}
where the second inequality uses the fact that $v^t \leq \varepsilon_1$.
Combining this with inequality~\eqref{eqn:cumulative-reward-hedge}, we have
\begin{align*}
\eta \rbr{ \sum_{\tau = t}^{t_2 - 1} \inner{\hat{r}^\tau}{w^\tau}
- \sum_{\tau = t}^{t_2 - 1} \inner{\hat{r}^\tau}{w^t} }
\leq & 
\eta \rbr{
(t_2 - t) (2\eps_1 -v^t) 
+ 
8 \sqrt{ \frac{\ln\frac2{\delta'}}{k} (t_2 - t) (\bar \nu + \eps_1) }
+ 2 \ln\frac2{\delta'}
} \\
\leq & \eta \rbr{
(t_2 - t) (6\eps_1 -v^t) 
+ 
8 \sqrt{ \frac{\ln\frac2{\delta'}}{k} (t_2 - t) \bar \nu }
+ 6 \ln\frac2{\delta'}
},
\end{align*}
where the second inequality uses the elementary fact that $\sqrt{A+B} \leq \sqrt{A} + \sqrt{B}$ and 
AM-GM inequality that $8 \sqrt{ \frac{\ln\frac2{\delta'}}{k} (t_2 - t) \eps_1 } 
\leq 
4 (t_2 - t) \eps_1 + \frac{\ln\frac2{\delta'}}{k}$. 

in addition, by Eq.~\eqref{eqn:cumulative-reward-hedge-coarse},  
\begin{align*}
2\eta^2 \sum_{\tau = t}^{t_2 - 1} \inner{\hat{r}^\tau}{w^\tau}
\leq &
4 \eta^2 (t_2 - t) (\bar \nu + \eps_1) 
+ 
10 \eta^2 \ln\frac{2}{\delta'} \\
\leq & 
\eta (t_2 - t) \eps_1 + \eta \ln\frac2{\delta'},
\end{align*}
where the second inequality uses that $\eta = \frac{\eps_1}{100(\nu + \eps_1)} \leq \frac1{100}$.  Plugging these bounds into 
Eq.~\eqref{eqn:kl-bound-2}, we get the desired inequality~\eqref{eqn:refinement}.

\item Otherwise, $(t_2 - t) (\bar \nu + v^t - \eps) < \frac{16 \ln\frac2{\delta'}}{k}$. 
Intuitively, this means that canceling out the $(t_2 - t) \nu$ factors, i.e., the leading terms in Eqs.~\eqref{eqn:cumulative-reward-hedge} and~\eqref{eqn:baseline-wt}, is no longer important. In this case, continuing inequality~\eqref{eqn:cumulative-reward-hedge}, we have:
\begin{align*}
\sum_{\tau=t}^{t_2-2} \inner{\hat{r}^\tau}{ w^\tau } 
\leq & 
2 (t_2 - t) (\bar \nu + \varepsilon) + 2 \ln \frac{2}{\delta'} \tag{AM-GM}  \\
\leq & 
2 (t_2 - t ) (2 \varepsilon - v^t) + 34 \ln\frac{2}{\delta'} \tag{assumption for this case}
\end{align*}
Using inequality~\eqref{eqn:kl-bound} and dropping the (nonnegative) second term, we have
\begin{align*}
\DKL( w^t \parallel w^{t_2} )
\leq & 
2\eta \cdot \sum_{\tau=t}^{t_2-2} \inner{\hat{r}^\tau}{ w^\tau } \\
\leq & 
4\eta \cdot (t_2 - t ) (2 \varepsilon - v^t) + 68 \eta \ln\frac{2}{\delta'}
\end{align*}
\end{itemize}
\paragraph{Final Steps} Let's recall that $T=\widetilde{O}\bra{\frac{\varepsilon+\nu}{\varepsilon^2}}$, $\eta=\tilde{O}\bra{\sqrt{\frac{1}{T\bra{\varepsilon+\nu}}}}=O\bra{\frac{\varepsilon}{\varepsilon+\nu}}$, $\varepsilon_1=O\bra{\varepsilon}$ and $j_{\max}=\left\lfloor\log\frac{1}{\eta(\nu+\eps_1)}\right\rfloor+1$. 
Due to the slight change in $j_{\max}$, 
Eq. (116) of~\citet{zhang2024optimal} now needs to be modified to: there exists some $1 \leq \tilde{j} \leq j_{\max}$ such that 
\[
\ln \rbr{ \frac{y_{\tilde{j}+1}}{y_{\tilde{j}}} }
\geq 
\frac{x}{ \log \frac{1}{\eta (\nu + \eps_1)} + 1 }
\]


For for this $\tilde{j}$, 
by the stronger KL divergence upper bound (Eq.~\eqref{eqn:refinement}), equation (119) in \citet{zhang2024optimal} becomes the following:
\begin{align}
\tau_{\widetilde{j}+1}-\tau_{\widetilde{j}}
\gtrsim &
\min\rbr{ 
\frac{qx^2}{\bra{\log\frac{1}{\eta(\nu + \eps_1)}+1}^2}\min\brc{\frac{1}{\eta\varepsilon_1},\frac{2^{\widetilde{j}-1}}{\eta}},
\frac{k q^2 x^4}{ 
\eta^2 \nu  \log\frac1{\delta'} \bra{\log\frac{1}{\eta(\nu + \eps_1)}+1}^4
}} \\
\gtrsim &
\frac{qx^2}{\bra{\log\frac{1}{\eta(\nu + \eps_1)}+1}^2} \cdot \frac{2^{\tilde{j}-1}}{\eta}
\end{align}
Here, the second inequality holds since
\[
\frac{qx^2 \cdot 2^{\tilde{j}-1}}{\eta}
\lesssim 
\frac{qx^2}{\eta \eps_1}
\lesssim 
\frac{k q^2 x^4}{\eta^2 \nu \log\frac1{\delta'} \bra{\log\frac{1}{\eta(\nu + \eps_1)}+1}^2 }
\]
where the first inequality uses that $2^{\tilde{j}-1} \leq 2^{j_{\max}-1}
\leq \frac{1}{\eta(\nu+\epsilon)}
$
and the second uses the assumption that $\frac{q x^2}{50 (\log \frac{1}{\eta(\nu+\eps_1)} + 1)^2} \geq \frac1k$ and $\eta \nu \lesssim \eps_1$. 


Then Case 1 ($\tilde{j} = j_{\max}$) in Step 5 of Section C.4 in \citet{zhang2024optimal} becomes
\[
t_2-t_1
\geq
\tau_{\tilde{j}+1}
- 
\tau_{\tilde{j}}
\gtrsim 
\frac{qx^2}{\bra{\log\frac{1}{\eta(\nu + \eps_1)}+1}^2} \cdot \frac{2^{j_{\max}-1}}{\eta}
\gtrsim
\frac{qx^2}{200\bra{\log \frac 1 {\eta(\nu + \eps_1)} +1}^2\eta^2(\nu+\varepsilon)}
\]
Moreover, Case 2 ($1 \leq \tilde{j} \leq  j_{\max} - 1$) in Step 5 of Section C.4 in \citet{zhang2024optimal} stays the same. This completes the proof of the lemma.
\end{proof}
We now proceed to prove Lemma~\ref{lemma:lemma13} using Lemma~\ref{lemma:segmentlb}.
\begin{proof}
Define $\cbr{\Vcal_j^n}_{n=1}^N$ and $\cbr{(\hat{s}_n, \hat{e}_n)}_{n=1}^N$ the disjoint expert subsets and time intervals guaranteed to exist by Lemma 16 of~\cite{zhang2024optimal}. 

Using our new Lemma~\ref{lemma:segmentlb} (which is a strengthening of their Lemma 17), the inequality (79) in \citet{zhang2024optimal} can be strengthened to
\begin{align}    T\eta(\nu + \eps)+\brc{4T\varepsilon_1+4\sum_{t=1}^T\bra{-v^t}}
\ge &
(\nu + \eps) \sum_{n=1}^N\bra{\hat{e}_n-\hat{s}_n}\eta+4\sum_{n=1}^N\sum_{\tau=\hat{s}_n}^{\hat{e}_n-1}\bra{-v^\tau+\varepsilon_1}
\\
\gtrsim & 
\frac{2^{-j} \sum_{n=1}^N |\Vcal_j^n| }{\log^2(k) (\log \frac{1}{\eta(\nu + \eps_1)}+1)^2 \cdot \eta}
\end{align}
where the first inequality uses that $-v^\tau + \eps_1 \geq 0$ for all $\tau$ and dropping those terms whose time steps do not lie in $[\hat{s}_n, \hat{e}_n)$; the second inequality is from our new Lemma~\ref{lemma:segmentlb}.

In addition, we strengthen their Eq. (81) and bound $\sum_{t=1}^T (-v^t)$ by the following: 
\begin{align*}
\sum_{t=1}^T (-v^t)
= & \bar{\nu} T - \sum_{t=1}^T  L(h^t, w^t) \\
\leq & \max_{i \in [k]} \sum_{t=1}^T L_i(h^t ) - \sum_{t=1}^T  L(h^t, w^t)\\
\leq & \frac{3\ln(k)}{2\eta}+\frac{3}{2}\eta T(\nu+\varepsilon_1)+58\ln(1/\delta')+5\sqrt{2\ln(1/\delta')T(\nu+\varepsilon_1)} \\
\leq & T \eps
\end{align*}
where 
the first inequality is since 
$\bar{\nu} = \min_{p \in \Delta(\Hcal)} \max_{i} \E_{h \sim p} L(h, D_i) \leq \max_{i \in [k]} \frac 1 T \sum_{t=1}^T L_i(h^t )$, 
and the second inequality is from Eq.~\eqref{eqn:regret-form}.

As a result, inequality (82) in \citet{zhang2024optimal} becomes
\begin{align*}
\frac{\sum_{n=1}^N\cardin{\Vcal_j^n}}{2^j} \lesssim &
\rbr{
\eta\bra{T \eps}+\eta\bra{T\eta(\nu + \eps) + T\varepsilon}
} \cdot \log^2(k) \rbr{\log \frac{1}{\eta(\nu + \eps_1)}+1}^2
\\
\lesssim &
\ln\rbr{ \frac{k}{\delta' \eps} } \cdot \log^2(k) \rbr{ \log \frac{1}{\eta(\nu + \eps_1)}+1 }^2
\end{align*}
as we desired. The rest of the proof then follows.
\end{proof}

\section{Deferred materials from Section~\ref{sec:lower-bound}}
\label{sec:def-lower-bound}

\begin{proof}[Proof of Theorem~\ref{thm:realizable-lb}]
Since $\Hcal$ has star number $k \vartheta$, there exists some $h_0, h_1, \ldots, h_{k \vartheta} \in \Hcal$ 
and a set of $k \vartheta$ examples $X = \cbr{ x_1, \ldots, x_{k \vartheta} }$, 
such that:
for every $i \in [k \vartheta]$, $h_i(x_i) \neq h_0(x_i)$, and $h_i(x_j) = h_0(x_j)$ for all $j \neq i$. 

For \(1\le i\le k\), let 
\[
X^i=\cbr{x_{(i-1)\vartheta +1},\cdots,x_{i\vartheta}},
\]
Thus, 
\[
X=\bigcup_{1 \le i\le k}X^i, 
\]
in other words, $(X^i)_{i=1}^k$ form a partition of the set $X$.
For each \(1\le i\le k\) and \(1\le j \le \vartheta \), let \(D^i_j\) be the distribution with marginal uniform on \(X^i\) and the labels of all examples are consistent with $h_{(i-1)\vartheta + j}$. 
Also, let \(D^i_0\) be the distribution with marginal uniform on \(X^i\) in which the labels of all examples are consistent with \(h_0\).

It can be checked that the disagreement coefficient of any $D \in \cbr{ D^i_j: i \in [k], j \in \cbr{0, 1, \ldots, \vartheta} }$  is at most $\vartheta$; to see this, note that for any $D$ above, $\B_D(h, r) = \cbr{h}$ unless $r \geq \frac{1}{\vartheta}$, and thus $\Pr\sbr{\DIS(\B_D(h, r))} \leq \vartheta r$ for all $r > 0$.








Define a family of $k \vartheta$ MDL instances $\Dcal_j^i$, $i \in [k]$, $j \in [\vartheta]$ as follows: 
\[
\Dcal^i_j = 
( D^1_0, \ldots, D^{j-1}_0, D^j_i, D^{j+1}_0, \ldots, D^k_0 )
\]
By the previous paragraph, we know that for every $\Dcal_{j}^i$, $\theta_{\max}(\eps)$ is at most $\vartheta$. In addition, it can be easily seen that  $L_{\Dcal^i_j}(h_{(i-1)\vartheta + j}) = 0$, thus $\min_{h \in \Hcal} L_{\Dcal^i_j}(h) = 0$. 
We next show a simple property that all $\Dcal^i_j$'s are sufficiently apart, in that no classifier can be simultaneously near-optimal in any pair of them: 
\begin{claim}
For any classifier $\hat{h}$, and any $(i_1, j_1) \neq (i_2, j_2)$, 
$L_{\Dcal_{j_1}^{i_1}}(\hat{h}) \leq \eps$ and 
$L_{\Dcal_{j_2}^{i_2}}(\hat{h}) \leq \eps$ cannot hold simultaneously. 
\label{lem:instance-sep}
\end{claim}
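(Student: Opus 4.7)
The plan is to exploit two structural features of the construction: first, each individual distribution $D^i_j$ (or $D^i_0$) is uniform over a set $X^i$ of exactly $\vartheta$ points, so that under the assumption $\eps < \frac{1}{2\vartheta}$, any classifier with loss at most $\eps$ on such a distribution must in fact agree with the labeling hypothesis on every point of $X^i$; second, the MDL instances in the family $\cbr{\Dcal^i_j}$ are ``tagged'' by a unique distinguished point $x_{(i-1)\vartheta + j}$, which is the only point where $h_{(i-1)\vartheta+j}$ disagrees with $h_0$.

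First I would establish the exact-agreement observation: if $L(\hat h, D) \le \eps$ for a distribution $D$ that is uniform on some $X^i$ with labels consistent with a hypothesis $h'$, then since $L(\hat h, D) = \frac{1}{\vartheta}\cardin{\cbr{x\in X^i : \hat h(x)\neq h'(x)}}$ and $\eps < \frac{1}{2\vartheta}$, the number of disagreements is strictly less than $1$, hence zero. Thus $\hat h$ must agree with $h'$ pointwise on $X^i$.

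The rest is a case split on whether $i_1$ equals $i_2$. In Case~1, $i_1 = i_2 =: i$ and $j_1 \neq j_2$. Both instances place a ``special'' distribution at coordinate $i$, so the exact-agreement observation applied to $D^i_{j_1}$ and $D^i_{j_2}$ forces $\hat h$ to agree with both $h_{(i-1)\vartheta+j_1}$ and $h_{(i-1)\vartheta+j_2}$ everywhere on $X^i$. But by the star-set property, $h_{(i-1)\vartheta+j_1}$ flips the $h_0$ label precisely at $x_{(i-1)\vartheta+j_1}\in X^i$ while $h_{(i-1)\vartheta+j_2}$ keeps the $h_0$ label there, so the two hypotheses disagree at $x_{(i-1)\vartheta+j_1}$, giving a contradiction. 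In Case~2, $i_1 \neq i_2$. Consider coordinate $i_2$: in $\Dcal^{i_1}_{j_1}$ the distribution at position $i_2$ is $D^{i_2}_0$ labeled by $h_0$, forcing $\hat h|_{X^{i_2}} = h_0|_{X^{i_2}}$; in $\Dcal^{i_2}_{j_2}$ the distribution at position $i_2$ is $D^{i_2}_{j_2}$ labeled by $h_{(i_2-1)\vartheta + j_2}$, forcing $\hat h|_{X^{i_2}} = h_{(i_2-1)\vartheta+j_2}|_{X^{i_2}}$. These two labelings disagree at $x_{(i_2-1)\vartheta+j_2}\in X^{i_2}$, again a contradiction.

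There is no real obstacle; the only mild subtlety is lining up the indexing so that the exact-agreement observation can be applied to a common coordinate in the two instances (coordinate $i$ in Case~1, coordinate $i_2$ in Case~2), after which everything reduces to the same pointwise-conflict argument on the distinguished star point.
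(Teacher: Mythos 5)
Your proof is correct and follows essentially the same route as the paper's: both arguments isolate a coordinate on which the two instances assign opposite labels to a distinguished star point of mass $\frac{1}{\vartheta}$, and derive a contradiction from $\eps < \frac{1}{2\vartheta}$. The only cosmetic difference is that the paper sums two probability inequalities at the point $x_{\vartheta(i_1-1)+j_1}$ in a single unified argument, whereas you first upgrade $\eps$-accuracy to exact pointwise agreement and then split into the cases $i_1 = i_2$ and $i_1 \neq i_2$.
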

\begin{proof}
$L_{\Dcal_{j_1}^{i_1}}(\hat{h}) \leq \eps$ implies that
\begin{equation}
\Pr_{x \sim \Unif(X^{i_1})}\sbr{\hat{h}(x) = h_0(x),  x = x_{\vartheta(i_1-1)+j_1}} \leq \Pr_{(x,y) \sim D_{j_1}^{i_1}}\sbr{ \hat{h}(x) \neq y } \leq \eps.
\label{eqn:ineq-1}
\end{equation}
On the other hand, 
$L_{\Dcal_{j_2}^{i_2}}(\hat{h}) \leq \eps$ implies that
\begin{equation}
\Pr_{x \sim \Unif(X^{i_1})}\sbr{\hat{h}(x) \neq h_0(x),  x = x_{\vartheta(i_1-1)+j_1}} \leq \Pr_{(x,y) \sim (\Dcal_{j_2}^{i_2})_{i_1}}\sbr{ \hat{h}(x) \neq y } \leq \eps.
\label{eqn:ineq-2}
\end{equation}
Here, $(\Dcal_{j_2}^{i_2})_{i_1}$ is the $i_1$-th distribution of problem instance $\Dcal_{j_2}^{i_2}$, which is $D_{j_2}^{i_1}$ if $i_1 = i_2$, and is $D_{0}^{i_1}$ if $i_1 \neq i_2$; in either case, the distribution has $x_{\vartheta(i_1-1)+j_1}$ agreeing with $h_0$.

Adding up Eqs.~\eqref{eqn:ineq-1} and~\eqref{eqn:ineq-2} and using triangle inequality, we have 
\[
\frac{1}{\vartheta}
= 
\Pr_{x \sim \Unif(X^{i_1})}(x = x_{\vartheta(i_1-1)+j_1}) \leq 2 \eps.
\]
which contradicts with the assumption that $\eps < \frac{1}{2\vartheta}$. 
\end{proof}



For any algorithm \(A\), consider an algorithm \(A_0\) that chooses label queries identical to \(A\) except that 
every label query made by \(A_0\) on example $x$ returns \(h_0(x) \). Observe that the distribution of the transcript of $A_0$ is the same on every problem instance $\Dcal_j^i$ because different problem instances has the same set of marginal distributions over $\Xcal$.

Next, we show that if \(A\) makes fewer than \(\frac{k\vartheta}{20}\) queries, then it fails to solve a family of two problem instances with good probability.
Denote by $\hat{h}$ the classifier returned by $A$ after \( \frac{k\vartheta}{20}\) queries.
Let \(d(x)\) denote the expected number of queries for each \(x\in\Xcal\) during the first \(\frac{k\vartheta}{20}\) queries of \(A_0\).
Therefore, 
\[
\sum_{x \in X} d(x) \leq \frac{k\vartheta}{20}.
\]
By the Pigeonhole Principle, at least half of $x$ in \(X\) satisfies \(d(x)\le\frac{1}{10}\). Let \(x_{(i_1-1)\vartheta+j_1}\) and \(x_{(i_2-1)\vartheta+j_2}\) be two such points. Then, by Markov's inequality and the union bound, the probability that \(A_0\) queries either of these points is at most \(0.2\). Consider the problem instances $\Dcal$ chosen uniformly from \(\cbr{\Dcal^{i_1}_{j_1},\Dcal^{i_2}_{j_2}}\), 
denote by event $E$ that \(A\) does not query point \(x_{(i_1-1)\vartheta+j_1}\) or \(x_{(i_2-1)\vartheta+j_2}\), then we claim that 
\begin{align*}
    \Pr\sbr{ L_{\Dcal}\rbr{\hat{h}}\le\eps \mid E }\le 0.5,
\end{align*}

where the randomness comes from both the internal randomness of algorithm $A$ and the distribution of the problem instance $\Dcal$. 
To see this, note that conditioned on the event 
$A$ doesn't query $x_{(i_1-1)\vartheta+j_1}$ or $x_{(i_2-1)\vartheta+j_2}$, the posterior distribution of $\Dcal$ after all $\frac{k \vartheta}{20}$ queries is still uniform over $\cbr{\Dcal^{i_1}_{j_1},\Dcal^{i_2}_{j_2} }$. Therefore, the left hand side is 
\[
\frac12 \rbr{ 
\Pr\sbr{ L_{\Dcal_{j_1}^{i_1}}\rbr{\hat{h}}\le\eps \mid E}
+ 
\Pr\sbr{ L_{\Dcal_{j_2}^{i_2}}\rbr{\hat{h}}\le\eps \mid E}
},
\]
which is at most 0.5 because of Lemma~\ref{lem:instance-sep}.

Notice that the behavior of \(A_0\) and \(A\) is identical until \(A\) queries some $x$ and encounters a label \(-h_0(x) \). Therefore,
\begin{align*}
    \Pr\sbr{\text{$A$ doesn't query $x_{(i_1-1)\vartheta+j_1}$ or $x_{(i_2-1)\vartheta+j_2}$}}=\Pr\sbr{\text{$A_0$ doesn't query $x_{(i_1-1)\vartheta+j_1}$ or $x_{(i_2-1)\vartheta+j_2}$}}.
\end{align*}
Consequently, the success probability of \(A\) is at most
\begin{align*}
&\Pr\sbr{ L_{\Dcal}\rbr{\hat{h}}\le\eps \mid\text{$A$ doesn't query $x_{(i_1-1)\vartheta+j_1}$ or $x_{(i_2-1)\vartheta+j_2}$}}\\
+&\Pr\sbr{\text{$A_0$ queries $x_{(i_1-1)\vartheta+j_1}$ or $x_{(i_2-1)\vartheta+j_2}$}}\le0.5+0.2=0.7
\end{align*}

Therefore, there exists at least one instance in $\{\Dcal^{i_1}_{j_1}, \Dcal^{i_2}_{j_2}\}$ that $A$ cannot solve with fewer than $\tfrac{k\vartheta}{20}$ queries, with probability exceeding $0.7$.
\end{proof}

\begin{proof}[Proof of Theorem~\ref{thm:agnostic-lb}]
We construct the problem instance as follows. Let the hypothesis class \(\cbr{h_1,h_2}\) be the two hypotheses satisfying the assumption in the theorem. Let \(\Xcal=X_{\DIS}\cup X_{\AGE}\), where:
\begin{itemize}
    \item \(X_{\DIS}=\cbr{x_1,x_2}\) is the disagreement region: for all \(x \in X_{\DIS}\), \(h_1(x) \neq h_2(x)\).
    \item \(X_{\AGE} =\cbr{z_1,\cdots,z_k}\) is the agreement region: for all \(x \in X_{\AGE}\), \(h_1(x) = h_2(x)\).
\end{itemize}
We next define several building blocks for constructing our distributions. 
Define
\[
D_{x_1}(x,y)=I(x=x_1)I(y=h_1(x_1)) \quad \text{and} \quad D_{x_2}(x,y)=I(x=x_2)I(y=h_2(x_2)).
\]
Also, let
\[
D_{z_1}(x,y)=I(x=z_1)I(y=h_1(z_1)),
\]
and for \(2\le i\le k\),
\[
D_{z_i}(x,y)=I(x=z_i)\bra{\bra{1-\frac{\nu-4\varepsilon}{2-\nu}}I\bra{y=h_1(z_i)}+\frac{\nu-4\varepsilon}{2-\nu}I\bra{y\neq h_1(z_i)}},
\]
and
\[
D'_{z_i}(x,y)=I(x=z_i)\bra{\bra{1-\frac{\nu+4\varepsilon}{2-\nu}}I\bra{y=h_1(z_i)}+\frac{\nu+4\varepsilon}{2-\nu}I\bra{y\neq h_1(z_i)}},
\]
for \(2\le i\le k\). Next, define the distributions
\[
D_1=\nu D_{x_1}+(1-\nu) D_{z_1},
\]
and for \(2\le i\le k\),
\[
D_i=\frac{\nu}{2}D_{x_2}+\bra{1-\frac{\nu}{2}}D_{z_{i}},
\]
and
\[
D'_i=\frac{\nu}{2}D_{x_2}+\bra{1-\frac{\nu}{2}}D'_{z_{i}}.
\]

From the construction, we obtain the following table for the error of \(h_1\) and \(h_2\) on the distributions under consideration:
\begin{table}[H]
\centering
\begin{tabular}{|c|c|c|c|c|c|c|c|}
\hline
Error & \(D_1\) & \(D_2\) & \(\ldots\) & \(D_k\) & \(D_2'\) & \(\ldots\) & \(D_k'\) \\
\hline
\(h_1\) & 0 & \(\nu - 2\varepsilon\) & \(\ldots\) & \(\nu - 2\varepsilon\) & \(\nu + 2\varepsilon\) & \(\ldots\) & \(\nu + 2\varepsilon\) \\
\hline
\(h_2\) & \(\nu\) & \(\frac{\nu}{2} - 2\varepsilon\) & \(\ldots\) & \(\frac{\nu}{2} - 2\varepsilon\) & \(\frac{\nu}{2} + 2\varepsilon\) & \(\ldots\) & \(\frac{\nu}{2} + 2\varepsilon\) \\
\hline
\end{tabular}
\caption{Error of \(h_1\) and \(h_2\) on the constructed distributions}
\end{table}

Define our central MDL problem instance as \(\Dcal=\cbr{D_i}_{i\in[k]}\), and define \(k-1\) auxiliary MDL problem instances such that \(\Dcal_i\) is \(\Dcal\) with the \(i\)th distribution \(D_i\) replaced by \(D'_i\) for \(2\le i\le k\). 
Suppose algorithm \(A\) uses a label budget of \(n\) and guarantees that under all instances \(\Qcal \in \cbr{ \Dcal, \Dcal_2, \ldots, \Dcal_{k} }\),
\[
\Pr_{A, \Qcal} \sbr{ L_{\Qcal}(\hat{h}) \leq \min_{h \in \Hcal} L_{\Qcal}(h) + \varepsilon } \geq 0.9.
\]
Note that the event 
\[
\cbr{ L_{\Dcal}(\hat{h}) \leq \min_{h \in \Hcal} L_{\Dcal}(h) + \varepsilon } = \cbr{ \hat{h} = h_1 },
\]
and for all \(i \in \cbr{2, \ldots, k}\),
\[
\cbr{ L_{\Dcal_i}(\hat{h}) \leq \min_{h \in \Hcal} L_{\Dcal_i}(h) + \varepsilon } = \cbr{ \hat{h} = h_2 }.
\]
Therefore, for all \(2\le i\le k\),
\[
\Pr_{A, \Dcal}\sbr{ \hat{h} = h_2 } +
\Pr_{A, \Dcal_i}\sbr{ \hat{h} = h_1 }
\leq 0.2.
\]
By the Bretagnolle-Huber inequality, it follows that
\[
\frac12 \exp\Bigl(-\DKL\rbr{ \PP_{A, \Dcal} \middle\| \PP_{A, \Dcal_i} }\Bigr)
\]
bounds the total variation distance between the joint distributions \(\PP_{A, \Dcal}\) and \(\PP_{A, \Dcal_i}\), where these denote the joint distribution of the interaction transcript \((x_1, y_1, \ldots, x_n, y_n)\) between \(A\) and the problem instances \(\Dcal\) and \(\Dcal_i\), respectively. This implies that 
\[
\DKL( \PP_{A, \Dcal}\| \PP_{A, \Dcal_i}) \geq \ln \frac{5}{2}.
\]
By the divergence decomposition lemma for interactive learning algorithms~\cite[][Lemma 15.1]{lattimore2020bandit},
\[
\DKL\rbr{ \PP_{A, \Dcal}\| \PP_{A, \Dcal_i}} = \E_{ A, \Dcal }\Bigl[ n_i \DKL\rbr{\PP_{Y_i}\middle\| \PP_{Y_i'}} \Bigr],
\]
where \(n_i\) is the number of label queries to \(z_i\), \(\PP_{Y_i}\) is $z_i$'s label distribution  under \(D_i\), and \(\PP_{Y'_i}\) is $z_i$'s label distribution under \(D'_i\). Let $p=\frac{\nu-4\eps}{2-\nu}$ and $q=\frac{\nu+4\eps}{2-\nu}$, for each \(i\), from Lemma~\ref{lemma:KLintegraltrick}, we have
\begin{align*}
    \DKL\bra{\PP_{Y_i}\middle\|\PP_{Y'_i}} 
    &= \int_p^q\frac{x-p}{x(1-x)}\mathrm{d} x.
\end{align*}
Because $0<8\eps\le\nu\le\frac{1}{2}$, 
\begin{align*}
    x \ge
    p = \frac{\nu - 4\eps}{2 - \nu}
    \geq 
    \frac{\nu / 2}{2}
    \geq 
    \frac{\nu}{4},
\end{align*}
and
\[
    1 - x
    \geq 
    1 - q
    \geq 
    \frac{2 - \frac{3}{2}\nu}{2 - \nu}
    \geq
    \frac{1}{2}.
\]
As a result, $x(1-x)\ge\frac{\nu}{8}$ for all $x\in[p,q]$. Thus,
\[
\DKL( \PP_{A, \Dcal}\| \PP_{A, \Dcal_i}) \leq \int_p^q\frac{x-p}{\nu/8}\mathrm{d} x=\frac{8}{\nu}\int_p^q\rbr{x-p}\mathrm{d} x=\frac{4(q-p)^2}{\nu}.
\]
Note that $q-p=\frac{4\eps}{2-\nu}$ so
\[
\frac{4(q-p)^2}{\nu}=\frac{4}{\nu}\rbr{\frac{4\eps}{2-\nu}}^2\le\frac{64\eps^2}{\nu},
\]
where the last inequality comes from the assumption $\nu\le\frac{1}{2}$.
This implies that for all \(i \in \cbr{2, \ldots, k}\),
\[
\E_{A, \Dcal}\sbr{n_i} \geq \frac{\nu}{64\varepsilon^2} \ln \frac{5}{2}.
\]
Taking the summation over all \(i\), we conclude that the expected number of queries made by \(A\) on \(\Dcal\) is at least 
\[
(k-1) \cdot \frac{\nu}{64\varepsilon^2} \ln \frac{5}{2}.
\]
\end{proof}

\section{Deferred materials from Section~\ref{sec:distn-free}}
\label{sec:df-deferred}

\subsection{\PRMDL and its guarantees}
\label{sec:prmdl-grt}

We first present \PRMDL (Alg.~\ref{alg:prmdl}), a key subprocedure used by our distribution-free active learning algorithm Alg.~\ref{alg:main-large-eps-df}; it can be viewed as a  collaborative RPU learning algorithm robust to label noise. 
It takes a collection of distributions $(\mu_i)_{i \in [k]}$ that are approximately realizable by $h^*$, 
and tries to find a $\xi$-RPU classifier for some target $\xi > 0$ (recall definition~\ref{def:rpu-classifier}). 
To this end, it calls a subprocedure \PRSDL (Alg.~\ref{alg:robust-rpu}) that learns a RPU classifier for a single distribution (step~\ref{step:get-fr}). Our algorithmic idea is largely inspired by~\cite{blum2017collaborative} for the ordinary collaborative PAC learning problem: calling \PRSDL on uniform mixtures of subsets of distributions in $(\mu_i)_{i \in [k]}$ (denoted by $N_r$, step~\ref{step:unif-mix}); as soon as our trained RPU classifier has low abstention probability on some  $\mu_j$, we remove $\mu_j$ from the subset (step~\ref{step:prune-learnt}). After learning RPU classifiers that cover all of the $(\mu_i)_{i \in [k]}$, we return $\hat{f}$, which makes a $\pm 1$ prediction whenever one of the learned $f^r$'s makes a $\pm 1$ prediction (step~\ref{step:return-final}).

\begin{algorithm}
\caption{\textsc{Passive-RPU-MDL}}
\label{alg:prmdl}
\begin{algorithmic}
\REQUIRE{Hypothesis class $\Hcal$ with start number $\s$, distributions $( {\mu_{i} } )_{i \in [k]}$, target reliability $\xi$, target confidence $\delta$}

\STATE $N_1 \gets [k]$, $r \gets 1$


\WHILE{$N_r \neq \emptyset$}

\STATE Define $\widetilde{\mu}_r := \frac{1}{|N_r|} \sum_{i \in N_r} \mu_i$

\label{step:unif-mix}

\STATE Let $f^r \gets \textsc{Robust-RPU-Learn}(\Hcal, \widetilde{\mu}_r, \frac{\xi}{2}, \frac{\delta}{2} )$

\label{step:get-fr}

\STATE $N_{r+1} \gets N_r \setminus G_r$, where $G_r = \cbr{ i: \Pr_{D_i}( f^r(x) = 0 ) \leq \xi }$

\label{step:prune-learnt}

\STATE $r \gets r + 1$

\ENDWHILE

\RETURN $\hat{f}$, such that 
\[
\hat{f}(x) = 
\begin{cases}
0, & \forall r, f^r(x) = 0 \\
f^{c}(x), & c = \min\cbr{r: f^r(x) \neq 0}
\end{cases}
\]
\label{step:return-final}
\end{algorithmic}
\end{algorithm}

\PRSDL (Alg.~\ref{alg:robust-rpu}) is our procedure for robust RPU learning for a single distribution 
in the presence of noise.
Different from the previous RPU learning algorithms~\citep{wiener2015compression,kane2017active,hopkins2020power}, it is able to tolerate adversarial label noise.  To this end, it samples multiple subsamples $S_i$'s and builds a RPU classifier $f_i$ for each $S_i$. Noting that some $S_i$'s may be inconsistent with $h^*$, causing the output $f_i$ to be unreliable, we take a thresholded majority vote of the $f_i$'s, denoted as $\hat{f}$ (step~\ref{step:thres-maj}): $\hat{f}$ abstains on $x$ if not enough $f_i$'s output a binary prediction on it; otherwise we predict the majority binary label of the $f_i$'s. 
We present in the following lemma about the sample efficiency and noise tolerance of \PRSDL: 

\begin{lemma}
\label{lem:robust-rpu}
Suppose hypothesis class $\Hcal$ and 
distribution $\mu$ is such that: there exists $h^* \in \Hcal$ such that 
$L\bra{h^*, \mu}\leq \eta$, and the target reliability $\xi \geq 100 \mathfrak{s} \eta$,  then $\textsc{Robust-RPU}(\xi, \delta)$ is such that with probability $1-\delta$: (1) it outputs $\hat{f}$ that is $\xi$-RPU with respect to $(h^*,  \mu)$; 
(2) the total number of iid examples sampled from $\mu$ is $\widetilde{O}( \frac{\mathfrak{s}}{\xi} \ln\frac1\delta )$. 
\end{lemma}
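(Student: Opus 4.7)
\textbf{Proof proposal for Lemma~\ref{lem:robust-rpu}.} The plan is to run \PRSDL with $m=\Theta(\log(1/\delta))$ independent subsamples, each of size $n=\widetilde{\Theta}(\mathfrak{s}/\xi)$, yielding a total label count $mn=\widetilde{O}(\mathfrak{s}\log(1/\delta)/\xi)$, with a threshold $\tau=m/10$ on the number of non-abstaining base classifiers $f_i$. Call $S_i$ \emph{clean} if every label in $S_i$ agrees with $h^*$ and \emph{dirty} otherwise; since $L(h^*,\mu)\le\eta\le \xi/(100\mathfrak{s})$ and $n\approx\mathfrak{s}/\xi$, a union bound gives $\Pr[S_i \text{ clean}]\ge 1-n\eta\ge 99/100$, and Chernoff yields that at most $m/50$ subsamples are dirty with probability at least $1-\delta/3$. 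For the base learner I would use the version-space consensus rule $f_i(x)=V_{S_i}(x)$ when $V_{S_i}:=\{h\in\Hcal: h(x')=y'\ \forall(x',y')\in S_i\}$ agrees at $x$, and $f_i(x)=0$ otherwise. On a clean $S_i$ we have $h^*\in V_{S_i}$, so by the classical star-number--based RPU bound of~\citet{hanneke2015minimax}, $\Pr_\mu[f_i(x)=0]\le \xi/2$ with probability at least $1-\delta/(3m)$; union-bounding over clean $i$'s gives that with probability at least $1-\delta/3$ every clean $f_i$ is $(\xi/2)$-RPU w.r.t.\ $(h^*,\mu)$.

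Conditioning on both high-probability events (combined failure $\le 2\delta/3$), \emph{reliability} will follow by contradiction. If some $x$ in the support of $\mu$ satisfied $\hat{f}(x)=-h^*(x)$, the thresholded majority vote would require $|A(x)|\ge\tau=m/10$ together with at least $|A(x)|/2\ge m/20$ of these votes predicting $-h^*(x)$. But a clean $f_i$ never outputs $-h^*(x)$, because $h^*\in V_{S_i}$ forces any non-abstaining value of $f_i(x)$ to equal $h^*(x)$; hence all $-h^*(x)$ votes come from dirty subsamples, which contradicts the dirty-count bound $m/50<m/20$. So $\hat{f}$ is deterministically reliable on the support of $\mu$.

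\emph{Probable usefulness} is the step that I expect to be the main obstacle. A naive Markov bound on the total abstention count picks up an additive $\Theta(1/100)$ term coming from the worst-case abstention rate of dirty subsamples, and this constant cannot be driven below the target $\xi$ when $\xi$ is small. The workaround I plan to use is to exploit the slack between the abstention threshold and the clean-only count: if $\hat{f}(x)=0$, then more than $9m/10$ classifiers abstain at $x$, and subtracting at most $m/50$ dirty ones, at least $0.88m$ clean $f_i$'s abstain at $x$. Setting $I(x)=|\{i\text{ clean}: f_i(x)=0\}|$, we have $\{\hat{f}(x)=0\}\subseteq\{I(x)\ge 0.88m\}$, and by linearity $\E_\mu[I(x)]\le (49m/50)(\xi/2)$, so Markov applied only to $I(x)$ gives
\[
\Pr_\mu[\hat{f}(x)=0]\le \Pr_\mu[I(x)\ge 0.88m]\le \frac{(49m/50)(\xi/2)}{0.88\, m}<\xi.
\]
Thus the gap between the $9m/10$ threshold and the $0.88m$ clean-only bound is precisely the slack that absorbs the ``noise tax'' from dirty subsamples, and reliability together with $(\xi)$-usefulness hold with the stated sample and failure-probability bounds.
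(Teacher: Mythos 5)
Your proposal is correct and follows essentially the same route as the paper's proof: most subsamples are consistent with $h^*$ and yield $(\xi/2)$-useful version-space classifiers via the star-number disagreement bound, reliability follows because the corrupted minority cannot reach the voting quorum, and usefulness follows by applying Markov only to the abstention count of the good subsamples, using the slack between the abstention threshold and the fraction of good subsamples to absorb the dirty ones' worst-case abstentions. (The paper uses a Chernoff bound on the number of good subsamples rather than your per-subsample union bound, and your bound $\E_\mu[I(x)]\le(49m/50)(\xi/2)$ should read $\le m\xi/2$ since the clean count is only lower-bounded by $49m/50$, but neither difference affects the argument.)
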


\begin{algorithm}[t]
\caption{$\textsc{Robust-RPU-Learn}$}
\label{alg:robust-rpu}
\begin{algorithmic}

\REQUIRE Hypothesis class $\Hcal$ with star number $\s$, distribution $\mu$, target reliability $\xi$, target confidence $\delta$ 

\STATE Let $N \gets 60 \lceil \ln\frac{1}{\delta} \rceil$

\FOR{$i=1, \ldots, N$:}

\STATE Let $S_i \gets $ Sample $n = O\rbr{ \frac{\mathfrak{s} + \ln\frac{N}{\delta}}{\xi} }$ iid examples from $\mu$ 

\STATE define 
$V_i := \cbr{h \in \Hcal: h(x) = y, \text{for all } (x,y) \in S_i}$

\IF{$V_i = \emptyset$}


\STATE Define $f_i \equiv 0$. 

\COMMENT{\blue{In this case, we know that sample $S_i$ has been corrupted; we choose $f_i \equiv 0$ as a convention}}

    
\ELSE

\STATE Define $f_i: \Xcal \to \cbr{-1, +1, 0}$ such that
\[
f_i(x) = 
\begin{cases}
0 & x \in \DIS(V_i) \\
V_i(x) & \text{otherwise}, 
\end{cases}
\]

\ENDIF

\ENDFOR

\RETURN $\hat{f}$, defined as:
\[
\hat{f}(x) = 
\begin{cases}
0 & \sum_{i=1}^N I(f_i(x) \neq 0) \leq \frac{N}{5} \\
\sign( \sum_{i=1}^N f_i(x) ) & \text{otherwise} \\ 
\end{cases}
\]
\label{step:thres-maj}
\end{algorithmic}
\end{algorithm}


With this, we now present the sample complexity and noise tolerance guarantee of \PRMDL in Lemma~\ref{lem:passive-rpu-mdl}. The proofs of both lemmas can be found at the end of this subsection. 


\begin{lemma}
\label{lem:passive-rpu-mdl}
Suppose hypothesis class $\Hcal$ and 
distributions $\mu_1, \ldots, \mu_k$ are such that there exists $h^*$ such that 
$$
\max_{i\in[k]} L\bra{h^*,\mu_i}
\leq \eta,
$$
and the target reliability $\xi \geq 100 \mathfrak{s} \eta$, then $\PRMDL(\xi, \delta)$ is such that with probability $1-\delta$: 
(1) it outputs $\hat{f}$ that is $\xi$-RPU with respect to $(h^*, (\mu_i)_{i=1}^k)$; 
(2) The total number of examples sampled from any of the $\mu_i$'s is $\widetilde{O}( \frac{\mathfrak{s}}{\xi} )$. 
\end{lemma}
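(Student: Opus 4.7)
The plan is to follow the collaborative PAC learning template of~\cite{blum2017collaborative}, but with the passive base learner replaced by our robust single-distribution RPU learner \PRSDL. The argument splits into reliability, probable-usefulness, and sample complexity.

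For reliability, I would first invoke Lemma~\ref{lem:robust-rpu} at each round $r$ with target reliability $\xi/2$ and confidence parameter $\delta/(2R)$, where $R = \lceil \log_2 k \rceil + 1$ upper-bounds the number of rounds (determined below). By a union bound, with probability $1-\delta$ every $f^r$ is $(\xi/2)$-RPU with respect to $(h^*, \widetilde{\mu}_r)$. The key observation is that reliability transfers from a mixture to its components: since $\widetilde{\mu}_r \geq \frac{1}{|N_r|} \mu_i$ for every $i \in N_r$, $\Pr_{\widetilde{\mu}_r}[f^r(x) \neq 0 \wedge f^r(x) \neq h^*(x)] = 0$ forces $\Pr_{\mu_i}[f^r(x) \neq 0 \wedge f^r(x) \neq h^*(x)] = 0$ for all $i \in N_r$. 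Since $\hat{f}$ echoes the first non-abstaining $f^r$, any non-zero prediction of $\hat{f}$ agrees with $h^*$ on every $\mu_i$, giving the reliability clause of the $\xi$-RPU guarantee.

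For usefulness and a geometric shrinkage of $|N_r|$, I would use the $(\xi/2)$-RPU guarantee to write
\[
\frac{1}{|N_r|} \sum_{i \in N_r} \Pr_{\mu_i}\bigl[f^r(x) = 0\bigr] \;=\; \Pr_{\widetilde{\mu}_r}\bigl[f^r(x) = 0\bigr] \;\leq\; \xi/2.
\]
By Markov's inequality, at least half of the indices $i \in N_r$ satisfy $\Pr_{\mu_i}[f^r(x)=0] \leq \xi$, so $|G_r| \geq |N_r|/2$ and hence $|N_{r+1}| \leq |N_r|/2$. Thus the loop ends within $R = O(\log k)$ rounds. For any $i \in [k]$, let $r(i)$ be the round at which $i$ enters some $G_r$; then $\hat{f}(x)=0$ forces $f^{r(i)}(x)=0$, so $\Pr_{\mu_i}[\hat{f}(x)=0] \leq \Pr_{\mu_i}[f^{r(i)}(x)=0] \leq \xi$, completing the $\xi$-RPU guarantee uniformly across $i$.

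For the sample complexity, Lemma~\ref{lem:robust-rpu} applied at round $r$ (whose noise level on $\widetilde{\mu}_r$ is at most $\eta$, since mixtures cannot exceed the worst component) draws $n_r = \widetilde{O}(\mathfrak{s}/\xi)$ iid samples from $\widetilde{\mu}_r$. Each such sample lands in $\mu_i$ with probability $1/|N_r|$, so a Chernoff-plus-union-bound argument shows that the total number of samples drawn from any fixed $\mu_i$ across the $O(\log k)$ rounds is $\widetilde{O}(\mathfrak{s}/\xi)$, absorbed in the $\widetilde{O}$. The main technical obstacle I anticipate is bookkeeping the noise-tolerance hypothesis of Lemma~\ref{lem:robust-rpu}: to call it with target reliability $\xi/2$ we need $\xi/2 \geq 100 \mathfrak{s} \eta$, which is slightly stronger than the assumption $\xi \geq 100 \mathfrak{s} \eta$ stated in Lemma~\ref{lem:passive-rpu-mdl}; this is a constant-factor matter and can be reconciled either by tightening the constant in \PRSDL's own analysis or by absorbing a factor of two into the $100$ in either lemma's hypothesis. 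A secondary care point is that $R$ is data-independent, so the per-round failure budget $\delta/(2R)$ is fixed a priori and the union bound is clean.
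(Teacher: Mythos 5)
Your proposal is correct and follows essentially the same route as the paper's proof: union bound over the \PRSDL\ calls, reliability inherited from the mixture to each component, Markov's inequality to get $|G_r|\ge|N_r|/2$ and hence $O(\log k)$ rounds, and per-round sample cost $\widetilde{O}(\mathfrak{s}/\xi)$ summed over rounds. The constant-factor mismatch you flag (needing $\xi/2\ge 100\mathfrak{s}\eta$ when calling \PRSDL\ with target $\xi/2$) is present in the paper's argument as well and is, as you say, absorbed by adjusting constants; your per-round confidence budget $\delta/(2R)$ is if anything a cleaner accounting than the paper's.
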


We prove Lemma~\ref{lem:robust-rpu} first. 

\begin{proof}[Proof of Lemma~\ref{lem:robust-rpu}]
Define event $E_i$ as:
\[
E_i := 
\cbr{
f_i \text{ is } \frac{\xi}{2}\text{-RPU with respect to } (h^*, \mu)
}
\]


We first prove a claim that shows that $E_i$ happens with constant probability.
\begin{claim}
$\Pr\sbr{E_i} \geq \frac{9}{10}$.
\label{claim:e-i}
\end{claim}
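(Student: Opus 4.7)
The plan is to decompose the failure event $E_i^c$ according to whether the random sample $S_i$ is ``clean'' in the sense of being perfectly labeled by $h^*$. Let $A_i$ denote the event that every $(x,y) \in S_i$ satisfies $y = h^*(x)$; then $\Pr[E_i^c] \leq \Pr[A_i^c] + \Pr[E_i^c \mid A_i]$, and I will argue each term is a small constant. For $\Pr[A_i^c]$, a union bound over the $n$ iid draws gives $\Pr[A_i^c] \leq n\,L(h^*,\mu) \leq n\eta$; combining $n = O\rbr{(\s + \ln(N/\delta))/\xi}$ with the hypothesis $\xi \geq 100\s\eta$ makes this term a small constant (say at most $1/20$) after appropriate choice of constants.

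Conditional on $A_i$, the version space $V_i$ contains $h^*$ and is therefore nonempty, so the else-branch of \PRSDL applies and $V_i(x) = h^*(x)$ for every $x \in \AGE(V_i)$. It follows immediately that $f_i$ is reliable with respect to $h^*$: whenever $f_i(x) \in \cbr{-1,+1}$, it equals $h^*(x)$, so the reliability part of the $\xi/2$-RPU definition holds almost surely under $A_i$. What remains is to show that, still conditional on $A_i$, the useful-probability condition $\Pr_{x\sim\mu}\sbr{f_i(x)=0} = \Pr_{x\sim\mu}\sbr{x \in \DIS(V_i)} \leq \xi/2$ holds with high (constant) probability.

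For this, I would invoke a classical star-number based disagreement-region bound for realizable iid samples (in the spirit of \citet{hanneke2015minimax} or \citet{hopkins2020power}): with $n = O\rbr{(\s + \ln(1/\delta_0))/\xi}$ clean iid samples, the version space of consistent hypotheses satisfies $\Pr_{x\sim\mu}\sbr{x \in \DIS(V_i)} \leq \xi/2$ with probability at least $1-\delta_0$. Conditional on $A_i$ the sample $S_i$ is distributed exactly as such a clean realizable sample, so choosing $\delta_0$ a small constant and combining with Step 1 yields $\Pr[E_i^c] \leq \Pr[A_i^c] + \delta_0 \leq 1/10$, which is the desired conclusion.

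The main obstacle is careful bookkeeping of constants in the first step: controlling $n\eta$ requires not just the clean estimate $\s\eta/\xi \leq 1/100$ coming from $\xi \geq 100\s\eta$, but also that the $\ln(N/\delta)$ overhead in $n$ does not overwhelm the tiny per-sample noise rate $\eta$. This may require the hidden constant in the sample size $n$ to be tuned jointly with the multiplicative constant in the assumption $\xi \geq 100\s\eta$; once this is set up, the remainder of the argument is a clean combination of a union bound over noisy labels with a standard realizable disagreement-region bound.
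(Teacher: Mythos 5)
Your proposal follows essentially the same two-event decomposition as the paper: a union bound showing the sample is clean (labeled entirely by $h^*$) except with small constant probability, plus a star-number-based disagreement-region bound (the paper uses \cite{wiener2015compression}, Lemma 8, restated as Lemma~\ref{lem:vs-disagree}) showing $\Pr_\mu[x\in\DIS(V_i)]\le\xi/2$. Your bookkeeping of the $\ln(N/\delta)$ overhead in $n$ matches what the paper implicitly assumes when it asserts $\Pr[F_i]\ge 1-n\eta\ge 39/40$.

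One step of yours is not quite right as stated: you claim that conditional on the clean-sample event $A_i$, ``the sample $S_i$ is distributed exactly as such a clean realizable sample.'' It is not. Conditioning on all labels agreeing with $h^*$ tilts the marginal of the unlabeled points from $\mu_X$ to $\tilde\mu_X(x)\propto\mu_X(x)\Pr(y=h^*(x)\mid x)$, so the realizable disagreement bound you invoke controls $\Pr_{\tilde\mu}[x\in\DIS(V_i)]$, not $\Pr_\mu[x\in\DIS(V_i)]$. The paper sidesteps this entirely by defining the disagreement event $G_i$ in terms of $V[h^*,S_i]$ — the version space consistent with $h^*$'s relabeling of $S_i$ — which depends only on the unlabeled points (iid from $\mu_X$ unconditionally), and then intersecting $F_i\cap G_i\subseteq E_i$ rather than conditioning. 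Your route is salvageable, since $\mu_X(A)\le(1-L(h^*,\mu))\tilde\mu_X(A)+L(h^*,\mu)\le\tilde\mu_X(A)+\eta$ for any set $A$, and $\eta\le\xi/(100\s)\le\xi/100$ is negligible against the target $\xi/2$; but that conversion needs to be stated, and as written the exact-distribution claim is false.
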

\begin{proof}
For a set of labeled examples $S$, let $V[h^*, S]$ denote the  set of hypotheses in $\Hcal$ consistent with $h^*$ on $S$.


Define $F_i = \cbr{ \forall (x,y) \in S_i: h^*(x) = y }$ 
and 
$G_i = \cbr{ \Pr_\mu\sbr{x \in \DIS(V[h^*, S])} \le\frac{\xi}{2} }$. It can be seen that $E_i \supseteq F_i \cap G_i$. Indeed, when $F_i \cap G_i$ happens, for all $(x,y) \in S_i$, $h^*(x) = y$
and therefore, 
\[
f_i(x) \neq 0 \implies f_i(x) = h^*(x)
\]
is true. In addition, $V_i = V[h^*, S_i]$. Since when $G_i$ happens, $\Pr_{\mu}\sbr{x \in \DIS(V[h^*, S_i])} \leq \frac \xi 2$, we also have 
$\Pr_{\mu}\sbr{x \in \DIS(V_i)} \leq \frac \xi 2$.

We now lower bound $\Pr\sbr{F_i}$ and $\Pr\sbr{G_i}$ respectively. 
\begin{itemize}
\item By union bound, 
$\Pr\sbr{F_i} \geq 1 - n \eta \geq \frac{39}{40}$.
\item Meanwhile, by~\cite[][Lemma 8]{wiener2015compression} (see Lemma~\ref{lem:vs-disagree}, also~\cite[][Appendix E.1]{hanneke2024star}), 
$\Pr\sbr{G_i} \geq 1 - \frac{1}{40} = \frac{39}{40}$.


\end{itemize}
Therefore, by union bound, 
\[
\Pr\sbr{E_i} 
\geq 
1 - \Pr\sbr{F_i^C} - \Pr\sbr{G_i^C}
\geq 
\frac {19} {20}. 
\]
\end{proof}


We now continue the proof of Lemma~\ref{lem:robust-rpu}. Define 
\[
\Ical := 
\cbr{ i \in [N]:
f_i \text{ is } \frac{\xi}{2}\text{-RPU with respect to } (h^*, \mu)
}
\]
Using Claim~\ref{claim:e-i} and applying Chernoff bound, we have that with probability $1-\delta$, 
\begin{equation}
|\Ical|
\geq
\frac {9N} {10}. 
\label{eqn:most-batches}
\end{equation}
We henceforth condition on Eq.~\eqref{eqn:most-batches} happening. 

We now prove that $\hat{f}$ is reliable. By the definition of  $\hat{f}(x)$, when $\hat{f}(x) \neq 0$, 
\[
\sum_{i=1}^N I(f_i(x) \neq 0) \geq \frac{N}{5} + 1.
\]
Out of those $i$'s such that $f_i(x) \neq 0$, by Eq.~\eqref{eqn:most-batches}, at most $\frac{N}{10}$ of them disagree with $h^*(x)$. Therefore, at least half of the $f_i$'s that predict $\pm 1$ agree with $h^*$. Formally, 
$\hat{f}(x) = 
\sign( \sum_{i=1}^N f_i(x) ) = h^*(x)$.

To prove that $\hat{f}$ is $\xi$-probably useful, we have:
\begin{align*}
\Pr_{\mu} \sbr{ \hat{f}(x) = 0 } 
& = \Pr_{\mu} \sbr{ \sum_{i=1}^N I( f_i(x) \neq 0 ) \leq \frac{N}{5} } \\
& = \Pr_{\mu} \sbr{ \sum_{i=1}^N I( f_i(x) = 0 ) \geq \frac{4N}{5} } \\
& \leq \Pr_{\mu} \sbr{ \sum_{i \in \Ical} I( f_i(x) = 0 ) \geq \frac{7N}{10} } \\
& \leq \frac{10}{7N} \rbr{ \sum_{i \in \Ical} \E_{\mu}\sbr{ I( f_i(x) = 0 ) } } \\
& \leq \frac{10}{7N} \cdot N \frac \xi 2 \leq \xi,
\end{align*}
where the first inequality is from that $|\Ical| \geq \frac{9N}{10}$; the second inequality is Markov's inequality; the third inequality is from that for all $i \in \Ical$, $f_i$'s are $\frac{\xi}{2}$-probably useful. 


Finally, for item (2), the total number of samples drawn from $\mu$ is $N \cdot n = O( \frac{\mathfrak{s}}{\xi} \ln\frac{1}{\delta} )$.
\end{proof}

\begin{proof}[Proof of Lemma~\ref{lem:passive-rpu-mdl}]
For each iteration $r$, 
there exists an event $E_r$, in which 
$f^r$ returned by 
$\textsc{Robust-RPU-Learn}(\Hcal, \widetilde{\mu}_r, \xi, \frac{\delta}{2} )$
is $\xi$-RPU with respect to $(h^*, \tilde{\mu}_r)$. 
Denote by $E = \cap_{r=1}^R E_r$. By union bound, $\Pr(E) \geq 1-\delta$. 

We henceforth condition on $E$ happening. We first show that $N_{R+1} = \emptyset$. At each iteration $r$, $f^r$ is $\xi$-RPU with respect to $(h^*, \tilde{\mu}_r)$, which implies that 
\[
\E_{i \sim \Unif(N_r)} \sbr{ \Pr_{\mu_i} \sbr{ f^r(x) = 0 } } \leq \frac \xi 2
\]
Markov's inequality yields that 
\[
\Pr_{i \sim \Unif(N_r)}\sbr{ \Pr_{\mu_i} \sbr{ f^r(x) = 0} \geq \xi } \leq \frac12,
\]
which is equivalent to $|G_r| \geq \frac{|N_r|}2$. This implies that 
$|N_{r+1}| \geq |N_r| - |G_r| \leq \frac{|N_r|}{2}$, and thus after at most $\lceil \log k \rceil$ iterations $N_r$ will become empty.


By the reliability of $f^r$ for all $r$, we have that whenever $\hat{f}(x) \neq 0$, $\hat{f}(x) = f^c(x) = h^*(x)$, proving the reliability of $\hat{f}$.

To show that $\hat{f}$ is $\xi$-probably useful with respect to $(\mu_i)_{i=1}^k$, we first observe that $\cbr{f(x) = 0} \subseteq \cbr{ f^r(x) = 0 }$. Now, for every $i \in [k]$, denote by $r_i$ the index $r$ such that $i \in G_r$. Therefore, 
$\Pr_{\mu_i}\sbr{ f(x) = 0 }
\leq
\Pr_{\mu_i} \sbr{ f^{r_i}(x) = 0 }
\leq \xi$. 

For item (2), it follows from Lemma~\ref{lem:robust-rpu} that the total number of examples sampled from any of the $\widetilde{\mu}_r$ at iteration $r$ is $\widetilde{O}(\frac{\mathfrak{s}}{\xi})$, and thus the total number of samples across $\lceil \log k \rceil$ iterations is $\widetilde{O}(\frac{\mathfrak{s}}{\xi})$. 
\end{proof}



\subsection{Proof of Theorem~\ref{thm:main-df}}

Denote by event $E_n$ that the success event in Lemma~\ref{lem:passive-rpu-mdl} holds at iteration $n$; that lemma implies that $\Pr(E_n) \geq 1-\delta_n$. 
Define $E := \cap_{n=1}^{n_0} E_n$.
By a union bound, $\Pr(E) \geq 1-\delta$. We henceforth condition on event $E$ holding.

We will next prove by induction on $n$ that for all $n \in [n_0-1]$,  
$f_n$ is $2^{-n}$-RPU with respect to $h^*$ and $(D_i)_{i=1}^k$. 


\paragraph{Base case.} For $n=0$, we have $f_0 \equiv 0$, which is 1-RPU with respect to $h^*$ and $(D_i)_{i=1}^k$ trivially.

\paragraph{Inductive case.} Suppose the inductive claim holds for iteration $f_{n-1}$.
Then, each distribution $D_{i,n}$ can be equivalently expressed as:
\[
D_{i,n}(x,y) = D_i(x,y) I( f_{n-1}(x) = 0 )
+ 
I(y = h^*(x)) I(f_{n-1}(x) \neq 0).
\]
Therefore, for every $i \in [k]$, 
$L(h^*, D_{i,n}) \leq L(h^*, D_i) \leq \nu $.
We also have that for $n \leq n_0$,  $\varepsilon_n \geq 2^{-n_0-1} \geq \frac{\s \eps}{d+k} \geq 100 \s \nu$; combining this with the fact that $E_n$ happens, 
we have that $f_n$ is $2^{-n}$-RPU with respect to $h^*$ and  $(D_{i,n})_{i=1}^k$. 

Since $D_i$ and $D_{i,n}$ have the same marginal distribution over $\Xcal$, 
$f_n$ is also $2^{-n}$ RPU with respect to $h^*$ and $(D_i)_{i=1}^k$; this completes the induction.

We next analyze the algorithm in iteration $n_0$. Recall that when $E$ happens, we have 
a classifier $f_{n_0 - 1}$ that is $2^{-n_0-1} \leq \frac{\s \eps}{d+k}$-RPU with respect to $h^*$ and $(D_i)_{i=1}^k$. 


Define event $F$ as the success event in Lemma~\ref{lem:passive-mdl} when calling \PMDL. Since for every $i \in [k]$, 
$L(h^*, D_{i,n_0}) \leq L(h^*, D_i) \leq \nu$, we have that $\hat{h}$, the output of \PMDL, satisfies that 
\begin{equation}
\max_{i \in [k]} L( \hat{h}, D_{i,n} )
\leq 
\varepsilon,
\label{eqn:max-err-biased}
\end{equation}
and therefore, applying Lemma~\ref{lem:favbias} gives us that for every $i \in [k]$, 
\[
L( \hat{h}, D_{i} ) 
\leq 
L( h^*, D_{i} ) + L( \hat{h}, D_{i,n} )
\leq 
\nu + \varepsilon.
\]
This concludes the proof of the excess error  guarantee.

We now turn to analyzing the label complexity of Algorithm~\ref{alg:main-large-eps-df}.
\begin{itemize}
\item For the iterations $n \in [n_0-1]$, we have by Lemma~\ref{lem:passive-rpu-mdl} that the number of examples sampled from any of the $D_{i,n}$ by \PRMDL is at most $\frac{\s}{\varepsilon_n}$, and by a Chernoff bound, the number of label queries made at iteration $n$ is at most 
\[
N_n \leq O\bra{\frac{\s}{\varepsilon_n} \cdot \varepsilon_n}= O(\s).
\]

\item For the last iteration $n_0$, Lemma~\ref{lem:passive-mdl} guarantees that 
the number of samples to any of the $D_{i,n_0}$ made by \PMDL 
is at most $O(\frac{d+k}{\varepsilon})$, 
which, combined with the fact that
\[
\max_{i \in [k]} \Pr_{D_i}\sbr{f_n(x)=0} \leq \frac{\s}{d+k} \varepsilon
\]
and 
Chernoff bound, makes 
\[
N_{n_0} = O\rbr{ \s + \ln\frac1\delta }
\]
label queries.  
\end{itemize}
In summary, the total number of label queries made by Algorithm~\ref{alg:main-large-eps-df} is at most $\sum_{n=1}^{n_0} N_n = O( \s \ln\frac1\varepsilon )$.

\section{Additional Theorems and Lemmas}

\begin{theorem}[Chernoff Bound]\label{lemma:chernoff}
Suppose $X_1, \dots, X_n$ is a collection of i.i.d. Bernoulli random variables with mean $p$. Then 
\[
\Pr\sbr{ \sum_{i=1}^n X_i \geq 2np + 2\ln\frac1\delta } \leq \delta.
\]
\end{theorem}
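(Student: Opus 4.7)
The plan is to apply the standard Cramér-Chernoff moment generating function argument with a carefully chosen exponential tilt parameter $\lambda$. Let $S_n = \sum_{i=1}^n X_i$. By Markov's inequality applied to $e^{\lambda S_n}$, for any $\lambda > 0$ and threshold $t$,
\[
\Pr\sbr{S_n \geq t} = \Pr\sbr{e^{\lambda S_n} \geq e^{\lambda t}} \leq e^{-\lambda t}\, \E\sbr{e^{\lambda S_n}}.
\]
By independence, $\E[e^{\lambda S_n}] = \prod_{i=1}^n \E[e^{\lambda X_i}] = (1 + p(e^\lambda - 1))^n \leq \exp\rbr{np(e^\lambda - 1)}$, where the last step uses $1+x \leq e^x$.

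I would then choose $\lambda = 1$, so that $e^\lambda - 1 = e - 1 < 2$ and hence $\E[e^{S_n}] \leq e^{np(e-1)} \leq e^{2np}$. Plugging in $t = 2np + 2\ln(1/\delta)$ yields
\[
\Pr\sbr{S_n \geq 2np + 2\ln(1/\delta)} \leq \exp\rbr{-2np - 2\ln(1/\delta) + 2np} = \delta^2 \leq \delta,
\]
where the final step uses $\delta \in (0,1]$. This completes the argument.

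There is no real obstacle; the only design choice is the tilt parameter. The constant $2$ in front of $np$ in the theorem statement is precisely what allows $\lambda = 1$ to absorb the $e - 1 < 2$ slack while still giving an exponentially small tail at the rate $e^{-t}$ for the additive $2\ln(1/\delta)$ part. Any $\lambda$ with $e^\lambda - 1 \leq 2$ would work (for instance $\lambda = \ln 2$ combined with a factor-of-$2$ version of the bound), but $\lambda = 1$ gives the cleanest computation.
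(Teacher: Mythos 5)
Your proof is correct: the MGF/Markov argument with tilt $\lambda=1$, the bound $\E[e^{\lambda X_i}] = 1+p(e^\lambda-1) \leq e^{p(e^\lambda-1)}$, and the observation $e-1 < 2$ together give exactly $\Pr[S_n \geq 2np + 2\ln\frac1\delta] \leq \delta^2 \leq \delta$. The paper states this Chernoff bound as a standard fact without proof, so there is nothing to compare against; your derivation is the standard one and fully justifies the statement as written.
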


\begin{theorem}[Bernstein's Inequality]\label{thm:bernstein}
For independent random variables $X_1,\ldots,X_n$ with $|X_i - \mathbb{E}[X_i]| \leq M$ almost surely, and letting $S_n = \sum_{i=1}^n X_i$ and $\sigma^2 = \sum_{i=1}^n \text{Var}(X_i)$, then for any $t > 0$:

$$\mathbb{P}(|S_n - \mathbb{E}[S_n]| \geq t) \leq 2\exp\left(-\frac{t^2}{2\sigma^2 + \frac{2Mt}{3}}\right).$$
\end{theorem}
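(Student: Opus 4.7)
The plan is to apply the classical Chernoff-MGF method combined with a variance-aware bound on the moment generating function of each centered summand. First I would center the variables by setting $Y_i = X_i - \E[X_i]$, so that $\E[Y_i] = 0$, $|Y_i| \le M$ a.s., and $\Var(Y_i) = \Var(X_i)$. Writing $T_n = S_n - \E[S_n] = \sum_{i=1}^n Y_i$, Markov's inequality applied to $e^{\lambda T_n}$ for any $\lambda > 0$ gives
\[
\Pr[T_n \ge t] \le e^{-\lambda t}\,\E[e^{\lambda T_n}] = e^{-\lambda t}\,\prod_{i=1}^n \E[e^{\lambda Y_i}],
\]
where the product form uses independence of the $Y_i$'s. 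The task then reduces to bounding $\E[e^{\lambda Y_i}]$ in a way that uses both the almost-sure bound $M$ and the variance $\Var(X_i)$.

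The heart of the argument is the inequality
\[
\E[e^{\lambda Y_i}] \le \exp\!\left( \frac{\lambda^2 \Var(X_i)}{2(1 - \lambda M/3)} \right), \qquad 0 \le \lambda < 3/M.
\]
I would prove this by Taylor-expanding $e^{\lambda y} - 1 - \lambda y = \sum_{k \ge 2} (\lambda y)^k / k!$, using $|y| \le M$ to bound $|y|^k \le y^2 M^{k-2}$, and then the elementary fact $k! \ge 2 \cdot 3^{k-2}$ for $k \ge 2$ to collapse the resulting sum into a geometric series $\sum_{k \ge 2} (\lambda M / 3)^{k-2} = 1/(1 - \lambda M/3)$. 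Taking expectations (noting $\E[Y_i] = 0$) and then using $1 + u \le e^u$ yields the displayed MGF bound. Multiplying over $i$ replaces $\Var(X_i)$ by $\sigma^2 = \sum_i \Var(X_i)$.

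With this MGF estimate in hand, the Markov bound becomes
\[
\Pr[T_n \ge t] \le \exp\!\left( -\lambda t + \frac{\lambda^2 \sigma^2}{2(1 - \lambda M/3)} \right).
\]
Choosing $\lambda = t / (\sigma^2 + M t / 3)$, which lies in $[0, 3/M)$, a direct computation shows the exponent simplifies to $-t^2 / (2\sigma^2 + 2 M t / 3)$, yielding the one-sided tail bound. Applying the identical argument to $-Y_i$ (which satisfies the same boundedness and variance hypotheses) and taking a union bound over the two tails produces the claimed factor of $2$ and the two-sided statement.

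The main technical obstacle is the MGF bound, specifically the combinatorial estimate $k! \ge 2 \cdot 3^{k-2}$ and keeping track of the admissible range $\lambda \in [0, 3/M)$ so that the geometric series converges; everything else is standard optimization. Since this is a textbook result, I would likely cite \textit{Boucheron, Lugosi, Massart} (\emph{Concentration Inequalities}, Theorem 2.10) rather than grinding through the calculation in full.
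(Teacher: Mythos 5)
Your proof is correct: the MGF bound via the Taylor expansion with $k!\ge 2\cdot 3^{k-2}$, the choice $\lambda = t/(\sigma^2+Mt/3)$, and the two-sided union bound all check out and give exactly the stated exponent. The paper states this as a standard auxiliary theorem without proof, so your argument is simply the classical textbook derivation it implicitly relies on.
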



\begin{theorem}[Freedman's Inequality,~\cite{bartlett2008high}'s version]
\label{thm:freedman-easytouse}
Suppose \( X_1, \ldots, X_T \) is a martingale difference sequence with \( |X_t| \leq b \). Let
\[
V = \sum_{t=1}^T \operatorname{Var}(X_t \mid X_1, \ldots, X_{t-1}) .
\]
be the sum of the conditional variances of \( X_t \)'s. 
Then we have, for any \( \delta < 1/e \) and \( T \geq 4 \),
\[
\Pr\sbr{ \sum_{t=1}^T X_t > 4 \sqrt{V \ln(1/\delta)} 
+ 2b \ln(1/\delta) } \leq \log(T)\delta .
\]
\end{theorem}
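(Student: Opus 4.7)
The plan is to derive this data-dependent tail bound from the classical Freedman inequality (which controls deviations under an \emph{a priori} bound on the conditional variance) via a dyadic peeling argument over the random variance $V$.

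First, I would invoke the standard version of Freedman's inequality: for any martingale difference sequence $(X_t)_{t=1}^T$ with $|X_t| \leq b$, any fixed $v > 0$, and any $\lambda > 0$,
\[
\Pr\sbr{\sum_{t=1}^T X_t \geq \sqrt{2 v \lambda} + \tfrac{2 b \lambda}{3},\ V \leq v} \leq e^{-\lambda}.
\]
Setting $\lambda = \ln(1/\delta)$, this gives a clean $(1-\delta)$-deviation bound \emph{conditional on} the event $\{V \leq v\}$. The gap between this and Theorem~\ref{thm:freedman-easytouse} is precisely that the right-hand side must scale with the random $V$ rather than a pre-specified deterministic $v$.

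To close this gap, I would stratify the range of $V$ into dyadic intervals. Since $V \leq T b^2$ deterministically, define levels $v_i := T b^2 / 2^i$ for $i = 0, 1, \dots, \lceil \log_2 T \rceil$, and for each $i$ apply the classical Freedman bound with $v = v_i$ and $\lambda = \ln(1/\delta)$. Taking a union bound across the $O(\log T)$ strata accounts for the $\log(T)\delta$ failure probability in the statement. On the stratum $V \in (v_{i+1}, v_i]$ one has $v_i \leq 2 V$, so $\sqrt{2 v_i \ln(1/\delta)} \leq 2 \sqrt{V \ln(1/\delta)}$, and the conditional Freedman bound is dominated by $2\sqrt{V \ln(1/\delta)} + \tfrac{2 b}{3}\ln(1/\delta)$, which fits comfortably inside the target bound $4\sqrt{V \ln(1/\delta)} + 2 b \ln(1/\delta)$. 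For the small-variance residual stratum $V \leq b^2 / T$, the deviation is absorbed by the linear $2 b \ln(1/\delta)$ term without further work, using $|X_t| \leq b$.

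The substantive content is really just Freedman's classical inequality plus the peeling trick, so the main thing to get right is constant bookkeeping: ensuring that the slack factors ($4$ versus the sharper $\sqrt{2}$, and $2b$ versus $2b/3$) suffice to absorb both the $v_i \leq 2 V$ blowup from peeling and the residual stratum, while the conditions $T \geq 4$ and $\delta < 1/e$ are needed purely so that the peeling only incurs $O(\log T)$ strata and $\ln(1/\delta) \geq 1$. The only conceptual subtlety, which I would flag as the main (mild) obstacle, is that one cannot directly apply Freedman's inequality with $v = V$ because $V$ is itself random, forcing the peeling-and-union-bound detour; once this is recognized, no new probabilistic input is required. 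The argument is essentially the one given by \cite{bartlett2008high}, which I would cite for completeness.
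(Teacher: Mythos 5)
The paper does not prove Theorem~\ref{thm:freedman-easytouse} at all: it is stated in the appendix of auxiliary results and imported wholesale from \cite{bartlett2008high}, so there is no in-paper argument to compare against. Your proposal reconstructs exactly the standard derivation used in that reference -- classical Freedman with a fixed variance cap, followed by dyadic peeling over the random $V$ and a union bound over $O(\log T)$ strata -- so the route is the right one and the conceptual content is complete.

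One bookkeeping inconsistency worth fixing: you define levels $v_i = Tb^2/2^i$ only down to $i = \lceil \log_2 T\rceil$, so the lowest level is $v_{\lceil\log_2 T\rceil}\approx b^2$, yet you then describe the residual stratum as $V \le b^2/T$. With the levels as written, the residual stratum is $V \le b^2$ (with $V$ possibly $0$), and absorbing $\sqrt{2b^2\ln(1/\delta)} + \tfrac{2b}{3}\ln(1/\delta)$ into $2b\ln(1/\delta)$ requires $\ln(1/\delta)\ge 9/8$, which $\delta < 1/e$ does not quite guarantee. If instead you peel down to $b^2/T$ as you state, the argument for the residual stratum goes through using $T\ge 4$, but you then have roughly $2\log_2 T$ strata rather than $\log_2 T$, so the union bound needs to be re-counted against the claimed $\log(T)\delta$. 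Either fix is routine (e.g., peel to $b^2\ln(1/\delta)/T$ and count strata carefully, which is what the condition $T\ge 4$ is for), but as written the level set and the residual stratum do not match.
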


\begin{lemma}\label{lemma:quadraticinequality}
Let $A,B,C\ge0$, if $\abs{A-C}\le B\sqrt{A+C}$, then $\abs{A-C}\le2\bra{B^2+B\min\cbr{\sqrt{C},\sqrt{A}}}$.
\end{lemma}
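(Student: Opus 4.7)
} The plan is to factor $|A-C|$ via the identity $|A-C| = |\sqrt{A}-\sqrt{C}|\,(\sqrt{A}+\sqrt{C})$ and leverage the elementary bound $\sqrt{A+C}\le \sqrt{A}+\sqrt{C}$ (which follows from $(\sqrt{A}+\sqrt{C})^2 = A+C+2\sqrt{AC} \ge A+C$). Combining these two with the hypothesis $|A-C|\le B\sqrt{A+C}$ will reduce the problem to a linear inequality in $\sqrt{A}$ and $\sqrt{C}$.

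First I would dispose of the trivial case $A = C = 0$, so we may assume $\sqrt{A}+\sqrt{C}>0$ and divide freely. By symmetry of the claim in $A$ and $C$, I would assume WLOG that $A \ge C$, so $\min\{\sqrt{A},\sqrt{C}\} = \sqrt{C}$ and $|A-C| = A - C$. From the hypothesis and $\sqrt{A+C}\le\sqrt{A}+\sqrt{C}$, I get
\[
(\sqrt{A}-\sqrt{C})(\sqrt{A}+\sqrt{C}) \;=\; A-C \;\le\; B\sqrt{A+C} \;\le\; B(\sqrt{A}+\sqrt{C}).
\]
Dividing by $\sqrt{A}+\sqrt{C}$ yields the key inequality
\[
\sqrt{A}-\sqrt{C} \;\le\; B, \qquad \text{i.e.,} \qquad \sqrt{A}\le B+\sqrt{C}.
\]

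Substituting this back into the factorization,
\[
A-C \;=\; (\sqrt{A}-\sqrt{C})(\sqrt{A}+\sqrt{C}) \;\le\; B\bigl(B+2\sqrt{C}\bigr) \;=\; B^{2}+2B\sqrt{C},
\]
which is bounded above by $2\bigl(B^{2}+B\sqrt{C}\bigr) = 2\bigl(B^{2}+B\min\{\sqrt{A},\sqrt{C}\}\bigr)$, as desired. The case $C\ge A$ follows by swapping the roles of $A$ and $C$. There is no real obstacle here; the only subtlety is remembering to apply $\sqrt{A+C}\le\sqrt{A}+\sqrt{C}$ rather than the reverse $\sqrt{A+C}\le\sqrt{2\max(A,C)}$, since the former is exactly what allows the factor $\sqrt{A}+\sqrt{C}$ to cancel cleanly.
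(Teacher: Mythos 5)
Your proof is correct, but it takes a genuinely different and more elementary route than the paper's. The paper squares the hypothesis to obtain a quadratic inequality in $A$, invokes the quadratic formula to derive $A\le C+B^2+\sqrt{2}B\sqrt{C}$, and then substitutes this back into $(A-C)^2\le B^2(A+C)$ before taking square roots. You instead factor $A-C=(\sqrt{A}-\sqrt{C})(\sqrt{A}+\sqrt{C})$, use the subadditivity $\sqrt{A+C}\le\sqrt{A}+\sqrt{C}$ to cancel the factor $\sqrt{A}+\sqrt{C}$ (after disposing of $A=C=0$), and obtain $\sqrt{A}\le\sqrt{C}+B$ in one line, from which the conclusion follows by resubstitution. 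Your argument is shorter, avoids the quadratic formula and the chain of $\sqrt{a+b}\le\sqrt{a}+\sqrt{b}$ and AM--GM manipulations in the paper, and in fact yields the slightly sharper intermediate bound $\abs{A-C}\le B^2+2B\min\cbr{\sqrt{A},\sqrt{C}}$, of which the stated bound $2\bra{B^2+B\min\cbr{\sqrt{A},\sqrt{C}}}$ is a weakening. The symmetry reduction (WLOG $A\ge C$) mirrors the paper's closing step, and the division by $\sqrt{A}+\sqrt{C}>0$ is legitimate once the trivial case is excluded, so there is no gap.
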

\begin{proof}
Taking square on both sides and manipulating to get the following
\[
A^2-\bra{2C+B^2}A+C^2-B^2C\le0.
\]
By the formula of quadratic equations, we have
\begin{align*}
A&\le\frac{(2C+B^2)+\sqrt{(2C+B^2)^2-4(C^2-B^2C)}}{2}\\
&\le \frac{(2C+B^2)+\sqrt{B^4+8B^2C}}{2}\\
&\le \frac{(2C+B^2)+B^2+2\sqrt{2}B\sqrt{C}}{2}\\
&\le C+B^2+\sqrt{2}B\sqrt{C},
\end{align*}
where the third inequality comes from $\sqrt{a+b}\le\sqrt{a}+\sqrt{b}$ for $a,b\ge0$. 

As a result,
\begin{align*}
    \bra{A-C}^2&\le B^2\bra{A+C}\\
    &\le B^2\bra{2C+B^2+B\sqrt{2C}}\\
    &\le 2B^2\bra{2C+B^2},
\end{align*}
where the last step comes from $ab\le a^2+b^2$. Taking square root on both sides and applying $\sqrt{a+b}\le\sqrt{a}+\sqrt{b}$ again and we get
\[
\abs{A-C}\le 2\bra{B\sqrt{C}+B^2}.
\]
Due to symmetry of $A$ and $C$ in the condition, we also have
\[
\abs{A-C}\le 2\bra{B\sqrt{A}+B^2}.
\]
As a result,
\[
\abs{A-C}\le 2\bra{B^2+B\min\cbr{\sqrt{A},\sqrt{C}}}.
\]
\end{proof}


\begin{lemma}\label{lemma:quadraticinequalityvariant}
Let $A,B,C,D\ge0$, if $A-C\le B\sqrt{A+C}+D$, then $A-C\le \frac{3}{2}D+\frac{3}{2}B^2+\sqrt{2}B\sqrt{C}$.
\end{lemma}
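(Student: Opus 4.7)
The plan is to treat the hypothesis $A - C \le B\sqrt{A+C} + D$ as a quadratic inequality in $x \coloneqq \sqrt{A+C}$. Substituting $A = x^2 - C$, the assumption becomes $x^2 - 2C \le Bx + D$, i.e., $x^2 - Bx - (2C + D) \le 0$. Since $x \ge 0$, the quadratic formula gives
\[
x \le \frac{B + \sqrt{B^2 + 8C + 4D}}{2}.
\]

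Next, I would plug this back into the original inequality to get
\[
A - C \le B x + D \le \frac{B^2}{2} + \frac{B\sqrt{B^2 + 8C + 4D}}{2} + D.
\]
Using the elementary subadditivity of the square root, $\sqrt{B^2 + 8C + 4D} \le B + 2\sqrt{2C} + 2\sqrt{D}$, so the middle term is at most $\frac{B^2}{2} + B\sqrt{2C} + B\sqrt{D}$. Combining with the other pieces yields
\[
A - C \le B^2 + \sqrt{2}\, B\sqrt{C} + B\sqrt{D} + D.
\]

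Finally, I would apply AM--GM to the cross term $B\sqrt{D}$: $B\sqrt{D} \le \tfrac{1}{2}B^2 + \tfrac{1}{2}D$. Substituting this bound gives
\[
A - C \le \tfrac{3}{2}B^2 + \sqrt{2}\, B\sqrt{C} + \tfrac{3}{2}D,
\]
which is precisely the claimed inequality. The only mildly delicate point is choosing the right way to split constants (the factor of $8$ under the square root and the AM--GM split on $B\sqrt{D}$) so that the resulting coefficients come out exactly $\tfrac{3}{2}$ and $\sqrt{2}$; otherwise the argument is purely algebraic and no step should be a real obstacle.
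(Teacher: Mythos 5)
Your proof is correct and follows essentially the same route as the paper's: solve a quadratic inequality (yours in $\sqrt{A+C}$, the paper's in $A$ directly, yielding the same discriminant $B^2+8C+4D$), then apply subadditivity of the square root and AM--GM on the cross term. The constants work out exactly as claimed.
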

\begin{proof}
Moving $D$ to the LHS, taking square on both sides and manipulating to get the following
\[
A^2-2CA-2DA-B^2A+2DC+C^2+D^2-B^2C\le0.
\]
By the formula of quadratic equations, we have
\begin{align*}
A&\le\frac{2C+2D+B^2+\sqrt{\rbr{-2C-2D-B^2}^2-4\rbr{C^2+2DC+D^2-CB^2}}}{2}.
\end{align*}
Let $S=\rbr{-2C-2D-B^2}^2-4\rbr{C^2+2DC+D^2-CB^2}$, then let's upper bound $\sqrt{S}$. By simply expanding $S$, we have
\begin{align*}
    S&=4C^2+4D^2+B^4+8CD+4CB^2+4DB^2-4C^2-8CD-4D^2+4CB^2\\
    &=B^4+8CB^2+4DB^2\\
    &=B^2\rbr{B^2+8C+4D}.
\end{align*}
Therefore,
\begin{align*}
    \sqrt{S}\le B\sqrt{B^2+8C+4D}\le B\rbr{B+2\sqrt{2}\sqrt{C}+2\sqrt{D}}\le B^2+2\sqrt{2}B\sqrt{C}+B^2+D,
\end{align*}
where the second inequality comes from $\sqrt{a+b+c}\le \sqrt{a}+\sqrt{b}+\sqrt{c}$ for $a,b,c\ge0$ and the last inequality is AM-GM. Putting everything together, we have
\begin{align*}
    A\le C+\frac{3}{2}D+\frac{3}{2}B^2+\sqrt{2}B\sqrt{C}.
\end{align*}
Moving $C$ to the LHS and we proved the lemma.
\end{proof}

\begin{lemma}[Favorable bias, \cite{hsu2010algorithms}, Lemma 5.2]
\label{lem:favbias}
Suppose we have a distribution $D$ over $\Xcal \times \Ycal$, and a classifier $h^*$ and a region $R \subset \Xcal$. Define the ``favorably biased'' distribution $\widetilde{D}$: 
\[
\widetilde{D}(x,y) = D_i(x,y) I( x \in R )
+ 
I(y = h^*(x)) I( x \notin R ).
\]
Then for any classifier $h$, 
\[
L(h, D) - L(h^*, D) \leq L(h, \widetilde{D}) - L(h^*, \widetilde{D}).
\]
\end{lemma}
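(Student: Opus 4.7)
\textbf{Proof Proposal for Lemma~\ref{lem:favbias}.} My plan is to split both $L(h,D)-L(h^*,D)$ and $L(h,\widetilde{D})-L(h^*,\widetilde{D})$ into contributions from $R$ and from $R^c := \Xcal \setminus R$, and show that they agree on $R$ while the $R^c$-contribution is larger under $\widetilde{D}$. Concretely, writing $\ell(h,(x,y)) = I(h(x)\neq y)$, I will decompose
\[
L(h,D)-L(h^*,D) = \Delta_R(D) + \Delta_{R^c}(D),
\]
with $\Delta_S(D) := \E_{(x,y)\sim D}[(\ell(h,(x,y))-\ell(h^*,(x,y)))I(x\in S)]$, and analogously for $\widetilde{D}$.

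First, I would observe that on $R$ the two distributions coincide by construction of $\widetilde{D}$, so $\Delta_R(D) = \Delta_R(\widetilde{D})$; this step is just unpacking the definition. Next, on $R^c$, the imputed label under $\widetilde{D}$ is $h^*(x)$ almost surely, so $\ell(h^*,(x,y))=0$ and $\ell(h,(x,y)) = I(h(x)\neq h^*(x))$ for $(x,y)$ drawn from $\widetilde{D}$ restricted to $R^c$. Therefore
\[
\Delta_{R^c}(\widetilde{D}) \;=\; \Pr_{x\sim D_\Xcal}[\,x\notin R,\ h(x)\neq h^*(x)\,].
\]

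The core of the argument is the pointwise inequality
\[
\ell(h,(x,y)) - \ell(h^*,(x,y)) \;\le\; I(h(x)\neq h^*(x)),
\]
which is obvious by case analysis: if $h(x)=h^*(x)$ both sides are $0$, and otherwise the left-hand side is at most $1$ while the right-hand side equals $1$. Taking expectation under $D$ restricted to $R^c$ gives $\Delta_{R^c}(D) \le \Delta_{R^c}(\widetilde{D})$. Summing the two regions yields the claim.

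I do not anticipate any real obstacle here; the lemma is essentially the standard ``favorable bias'' argument of \citet{hsu2010algorithms} (Lemma~5.2) and the only subtlety is making sure the pointwise comparison is stated as a one-sided inequality rather than equality, since $h^*$ may still err on some $(x,y)$ in $R^c$ under $D$ while it never errs under $\widetilde{D}$ there, which is precisely what gives the inequality in the right direction.
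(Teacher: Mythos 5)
Your proof is correct: the region-wise decomposition, the observation that the two deficits coincide on $R$, and the pointwise bound $\ell(h,(x,y))-\ell(h^*,(x,y))\le I(h(x)\neq h^*(x))$ on $R^c$ together give exactly the claimed inequality. The paper does not reprove this lemma (it cites \citet{hsu2010algorithms}, Lemma~5.2), and your argument is the same standard one used there.
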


\begin{lemma}[\cite{wiener2015compression}, Lemma 8]
\label{lem:vs-disagree}
Suppose we have $S$, $n$ iid samples from distribution $D$ over $\Xcal \times \Ycal$ realizable by hypothesis $h^*$ in hypothesis class $\Hcal$. Denote by $V = \cbr{h: h \text{ is consistent with } S}$, then with probability $1-\delta$,
\[
\Pr_{D}\sbr{ x \in \DIS(V) }
\leq 
\frac{10 \s \ln\frac{en}{\s} + 4 \ln\frac{2}{\delta}}{n},
\]
where $\s$ is the star number of $\Hcal$.
\end{lemma}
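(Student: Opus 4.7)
The plan is to prove this via a standard double-sample symmetrization argument, combined with a combinatorial exploitation of the star number. The overall strategy mirrors the classical VC-type realizable-case bound, but replaces VC dimension with a star-number argument tailored to the disagreement region.

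First, I would reduce the statement to a ghost-sample event. Let $S' = (x'_1, \ldots, x'_n)$ be an independent sample of size $n$ from $D$, and let $V = V(S)$. A standard calculation (if $\Pr_D[\DIS(V)] > \eps$ for $n\eps \gtrsim \ln(1/\delta)$, then with probability at least $1/2$ we have $|S' \cap \DIS(V)| \geq n\eps/2$) gives
\[
\Pr\bigl[\Pr_D[\DIS(V)] > \eps \bigr] \leq 2\,\Pr\bigl[|S' \cap \DIS(V(S))| \geq n\eps/2\bigr].
\]
Next, I would apply the standard permutation/symmetrization trick: condition on the $2n$-point multiset $T = S \cup S'$, so that the joint distribution of $(S, S')$ becomes uniform over partitions of $T$ into two halves of size $n$. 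The right-hand side above is then bounded by the probability that the uniformly random partition leaves at least $n\eps/2$ points of $S'$ inside $\DIS(V(S))$.

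The core combinatorial step, and the main obstacle, is to show that for any fixed $2n$-tuple $T$, under a uniformly random partition, the expected number of points in $S' \cap \DIS(V(S))$ is $O(\s \ln(en/\s))$. Here is where the star number enters directly (rather than via the intermediate disagreement coefficient, which would lose an extra factor of $\s$): a point $x \in S'$ lies in $\DIS(V(S))$ precisely when there exists some $h \in \Hcal$ that agrees with $h^*$ on all of $S$ but disagrees on $x$. Viewed jointly over the $2n$ points of $T$ and their labels under $h^*$, the family of such $(x, h)$ configurations is controlled by the star number: by the definition of $\s$, only $\s$ points can simultaneously serve as ``center disagreements'' relative to $h^*$ and $T$. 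I would make this rigorous by iteratively peeling off points in $\DIS(V(S))$ that arise from star configurations, invoking a Hanneke-Yang style combinatorial bound for each peeling level, and losing a logarithmic $\ln(en/\s)$ factor from summing over the levels.

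Finally, to convert this expectation bound into the high-probability statement, I would apply a Chernoff-type tail bound (using that, conditioned on $T$, the count $|S' \cap \DIS(V(S))|$ is a well-concentrated functional of the random partition, e.g.\ via McDiarmid or negative association). Setting
\[
\eps = \frac{10\s \ln(en/\s) + 4 \ln(2/\delta)}{n}
\]
and threading this through the symmetrization chain yields the claimed inequality. The most delicate part I expect is obtaining the linear $\s$ (not $\s^2$ or $\s \cdot d$) dependence in the combinatorial step, which requires directly counting star configurations rather than first bounding the radius of $V$ and then applying the $\theta_D(r) \leq \s$ inequality separately.
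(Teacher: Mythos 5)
First, note that the paper does not prove this lemma itself: it is imported verbatim from \cite{wiener2015compression} (see also Hanneke--Yang), whose argument is a sample-compression argument, not a symmetrization argument. The cited proof defines the minimal specifying set $\hat S\subseteq S$ (the smallest subset with $V(\hat S)=V(S)$), observes that each point of $\hat S$ is \emph{essential} --- removing it admits some $h$ agreeing with $h^*$ on the rest of $\hat S$ but not on that point --- so that the essential points together with their witnesses form a star centered at $h^*$, giving $|\hat S|\le\s$; since every sample point lies in $\AGR(V(S))$, a union bound over the $\sum_{m\le\s}\binom{n}{m}$ candidate index sets with a $(1-\eps)^{n-m}$ tail for each yields the stated bound directly, with no ghost sample and no concentration step. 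Your central combinatorial insight (star configurations centered at $h^*$ control the number of ``essential disagreement witnesses'') is exactly the right one, and indeed your expectation claim is true --- a leave-one-out exchangeability argument gives $\E\abs{S'\cap\DIS(V(S))}\le\s$ with no log factor --- but your route leaves the two load-bearing steps unproved.

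The genuine gap is in the final concentration step. The count $\abs{S'\cap\DIS(V(S))}$ is \emph{not} a bounded-difference function of the partition, so McDiarmid does not apply: for one-dimensional thresholds ($\s=2$), take $T$ sorted and $S$ containing the smallest point together with the top half; then $\abs{S'\cap\DIS(V(S))}=0$, but swapping that single smallest point out of $S$ moves $\min(S)$ past $n$ points of the new $S'$ and the count jumps from $0$ to $n$. Negative association is likewise unjustified here. To rescue the symmetrization route you would in effect have to re-import the compression idea inside it: use $\DIS(V(S))=\DIS(V(S_{\hat I}))$ with $|\hat I|\le\s$ to bound the number of realizable disagreement patterns on $T$ by $\sum_{j\le\s}\binom{2n}{j}$, note that the event forces all points of $\bigl(\DIS(V(T_I))\cap T\bigr)\setminus T_I$ into $S'$ (probability roughly $2^{-m}$ for a count of $m$), and union bound. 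At that point the direct compression proof is strictly simpler. Similarly, your ``peeling'' step for the per-partition bound is only a sketch; the precise statement you need is the $|\hat S|\le\s$ bound above, which your proposal gestures at but never isolates.
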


\begin{lemma}\label{lemma:KLintegraltrick}
For any $p,q\in(0,1)$, the KL divergence between $\mathrm{Bern}(p)$ and $\mathrm{Bern}(q)$ admits the integral representation
\[
\DKL\bigl(\mathrm{Bern}(p)\|\mathrm{Bern}(q)\bigr)
=
\int_{p}^{q}\frac{x-p}{x(1-x)}\,dx.
\]
\end{lemma}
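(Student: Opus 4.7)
\textbf{Proof proposal for Lemma~\ref{lemma:KLintegraltrick}.} The plan is to evaluate the integral on the right-hand side explicitly using partial fractions, and then recognize that the resulting closed form coincides with the standard formula for the KL divergence between two Bernoulli distributions. This avoids any appeal to differentiation under the integral sign and yields an elementary, self-contained computation.

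First, I will decompose the integrand using the identity
\[
\frac{1}{x(1-x)} = \frac{1}{x} + \frac{1}{1-x},
\]
so that
\[
\frac{x-p}{x(1-x)} = (x-p)\left(\frac{1}{x} + \frac{1}{1-x}\right) = 1 - \frac{p}{x} + \frac{x-p}{1-x}.
\]
Next, I rewrite the last term by adding and subtracting $1$ in the numerator:
\[
\frac{x-p}{1-x} = \frac{(x-1) + (1-p)}{1-x} = -1 + \frac{1-p}{1-x}.
\]
Substituting back, the $+1$ and $-1$ cancel and the integrand reduces to
\[
\frac{x-p}{x(1-x)} = -\frac{p}{x} + \frac{1-p}{1-x}.
\]

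Then I will integrate term by term from $p$ to $q$:
\[
\int_p^q \frac{x-p}{x(1-x)}\, dx = \Bigl[ -p \ln x - (1-p)\ln(1-x) \Bigr]_p^q = p \ln\frac{p}{q} + (1-p)\ln\frac{1-p}{1-q}.
\]
The right-hand side is exactly the standard closed-form expression for $\DKL(\mathrm{Bern}(p) \| \mathrm{Bern}(q))$, which completes the proof.

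There is no substantive obstacle here; the only minor care needed is ensuring that the partial-fraction manipulation is valid (which it is, since $p,q \in (0,1)$ keeps the integrand bounded and continuous on the closed interval between $p$ and $q$, and avoids division by zero). The identity holds symmetrically whether $p < q$ or $q < p$, since both sides change sign consistently when the limits of integration are swapped.
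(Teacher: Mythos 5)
Your proof is correct, but it runs in the opposite direction from the paper's. The paper defines $F(q)=\DKL\bigl(\mathrm{Bern}(p)\|\mathrm{Bern}(q)\bigr)=p\ln\frac{p}{q}+(1-p)\ln\frac{1-p}{1-q}$, computes $F'(q)=\frac{q-p}{q(1-q)}$, observes $F(p)=0$, and invokes the fundamental theorem of calculus to recover the integral representation. You instead evaluate the integral directly: the partial-fraction reduction $\frac{x-p}{x(1-x)}=-\frac{p}{x}+\frac{1-p}{1-x}$ is correct, the antiderivative $-p\ln x-(1-p)\ln(1-x)$ checks out, and evaluating between $p$ and $q$ gives exactly the closed form of the Bernoulli KL divergence. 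The two arguments are of essentially equal length and rigor; the paper's version makes transparent that the integrand is the derivative of the KL divergence in its second argument (which is the structural fact being exploited downstream in the proof of Theorem~\ref{thm:agnostic-lb}), while yours is a fully self-contained computation that does not require guessing the right function to differentiate. Your closing remark about the orientation of the limits when $q<p$ is a sensible point that the paper leaves implicit.
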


\begin{proof}
Define
\[
F(q)=\DKL\bigl(\mathrm{Bern}(p)\|\mathrm{Bern}(q)\bigr)
=p\ln\frac{p}{q}+(1-p)\ln\frac{1-p}{1-q}.
\]
Then
\[
F'(q)=-\frac{p}{q}+\frac{1-p}{1-q}
=\frac{q-p}{q(1-q)}.
\]
Since $F(p)=0$, integrating from $p$ to $q$ yields
\[
\DKL\bigl(\mathrm{Bern}(p)\|\mathrm{Bern}(q)\bigr)
=F(q)-F(p)
=\int_{p}^{q}F'(x)\,dx
=\int_{p}^{q}\frac{x-p}{x(1-x)}\,dx.
\]
\end{proof}

\end{document}